\theoremstyle{definition}
\newtheorem{definition}{Definition}
\newtheorem{property}{Property}
\newtheorem{proposition}{Proposition}
\newtheorem{problem}{Problem}
\theoremstyle{remark}
\newtheorem*{remark}{Remark}
\newcommand{\tikzmark}[1]{\tikz[overlay,remember picture] \node (#1) {};}
\newcommand{\added}[1]{#1}
\title{
Creating Star Worlds\\
\Large \added{Reshaping the Robot Workspace for Online Motion Planning}
}
\author{Albin Dahlin and Yiannis Karayiannidis 
\thanks{This work has been supported by Chalmers AI Research Centre (CHAIR) and AB Volvo through the project AiMCoR.}

\thanks{A. Dahlin is  with the Department of Electrical Engineering, Chalmers University of Technology, SE-412 96 Gothenburg, Sweden 
        {\tt\small albin.dahlin@chalmers.se}}%
        \thanks{Y. Karayiannidis is with the Department of Automatic Control, Lund University, Sweden {\tt\small  yiannis@control.lth.se}. The author is a member of the ELLIIT Strategic Research Area at Lund University. }
}
\begin{document}

\maketitle

\begin{abstract}
    Motion planning methods like navigation functions and harmonic potential fields provide (almost) global convergence and are suitable for obstacle avoidance in dynamically changing environments due to their reactive nature. A common assumption in the control design is that the robot operates in a disjoint star world, i.e. all obstacles are strictly starshaped and mutually disjoint. However, in real-life scenarios obstacles may intersect due to expanded obstacle regions corresponding to robot radius or safety margins. To broaden the applicability of aforementioned reactive motion planning methods, we propose a method to reshape a workspace of intersecting obstacles into a disjoint star world. The algorithm is based on two novel concepts presented here, namely admissible kernel and starshaped hull with specified kernel, which are closely related to the notion of starshaped hull. 
    The utilization of the proposed method is illustrated with examples of a robot operating in a 2D workspace using a harmonic potential field approach in combination with the developed algorithm.
\end{abstract}

\begin{IEEEkeywords}
Collision Avoidance, \added{Computational Geometry,} Reactive and Sensor-Based Planning, Motion and Path Planning
\end{IEEEkeywords}

\section{Introduction}
One of the central problems in robotics is to plan the motion of the robot to a desired goal state while avoiding collisions with obstacles \cite{lavalle_11}. Motion planning remains an active topic of research as modern robots, such as collaborative robots, mobile robots, unmanned aerial vehicles or even combined robotic systems for mobile manipulation, are intended for use in dynamically changing environments. Traditionally, motion planning problems have been solved considering static maps, but when considering changes in the environments and moving obstacles the robot must constantly adjust its planned path to avoid crashes. A common method to tackle such problems is to construct closed form control laws formulated as dynamical systems that provide stability and convergence guarantees in combination with instant reactivity to changing environments. Specifically, artificial potential fields, introduced in \cite{khatib_85}, based on scalar functions, guiding the robot with an attractive force to the goal and repulsive forces from the obstacles, have become popular \cite{ginesi_etal_19,stavridis_etal_17}. 
However, a drawback of the additive potential field methods is the possible existence of local minimum other than the goal point, i.e. the robot could get stuck at a position away from the goal. To address this issue, navigation functions \cite{rimon_koditschek_92,conn_kam_98,loizou_11_2,paternain_etal_18, hacohen_etal_19, loizou_rimon_21} have emerged which are a special subclass of potential functions designed to be bounded. Other approaches providing (almost) global convergence are based on harmonic potential fields  \cite{connolly_etal_90, feder_slotine_97, daily_bevly_08, huber_etal_19, huber_etal_22}. \added{}

A repeated assumption \cite{daily_bevly_08,conn_kam_98,huber_etal_19,paternain_etal_18,rimon_koditschek_92,loizou_11_2,hacohen_etal_19,loizou_rimon_21} enabling the proof of (almost) global convergence is the premise of disjoint obstacles. However, in cluttered dynamic environments, closely positioned obstacles may be seen as having intersecting regions, violating the assumption of disjoint regions. This occurs since the obstacles need to be inflated by the robot radius due to a point mass modelling, and since it is common to add extra safety margin to prevent the robot touching the obstacles during obstacle circumvention or to address uncertainties in obstacle positions. To preserve the convergence properties, intersecting obstacles must therefore be combined into a single obstacle.
Additionally, the obstacle shapes are restricted to be topological disks. In most cases they are considered as ellipses or Euclidean disks, but a more general premise is to consider all \textit{strictly starshaped sets} (see example in Fig. \ref{fig:starshaped_set}) - sets where there exists a point, a \textit{kernel} point, where all rays emanating from the kernel point cross the boundary once and only once. 
While methods like \cite{huber_etal_19} can operate directly in star worlds (a workspace of strictly starshaped obstacles), navigation functions can be used in star worlds by defining diffeomorphic transformations of the obstacles to Euclidean disks \cite{rimon_koditschek_92} or to points \cite{loizou_17}.

\begin{figure}
    \centering
    \begin{subfigure}[t]{0.49\linewidth}
        \begin{center}  
\begingroup%
\makeatletter%
\begin{pgfpicture}%
\pgfpathrectangle{\pgfpointorigin}{\pgfqpoint{1.674053in}{1.255540in}}%
\pgfusepath{use as bounding box, clip}%
\begin{pgfscope}%
\pgfsetbuttcap%
\pgfsetmiterjoin%
\definecolor{currentfill}{rgb}{1.000000,1.000000,1.000000}%
\pgfsetfillcolor{currentfill}%
\pgfsetlinewidth{0.000000pt}%
\definecolor{currentstroke}{rgb}{1.000000,1.000000,1.000000}%
\pgfsetstrokecolor{currentstroke}%
\pgfsetdash{}{0pt}%
\pgfpathmoveto{\pgfqpoint{0.000000in}{0.000000in}}%
\pgfpathlineto{\pgfqpoint{1.674053in}{0.000000in}}%
\pgfpathlineto{\pgfqpoint{1.674053in}{1.255540in}}%
\pgfpathlineto{\pgfqpoint{0.000000in}{1.255540in}}%
\pgfpathclose%
\pgfusepath{fill}%
\end{pgfscope}%
\begin{pgfscope}%
\pgfpathrectangle{\pgfqpoint{0.039236in}{0.039236in}}{\pgfqpoint{1.595582in}{1.177068in}}%
\pgfusepath{clip}%
\pgfsetbuttcap%
\pgfsetmiterjoin%
\definecolor{currentfill}{rgb}{0.827451,0.827451,0.827451}%
\pgfsetfillcolor{currentfill}%
\pgfsetfillopacity{0.800000}%
\pgfsetlinewidth{1.003750pt}%
\definecolor{currentstroke}{rgb}{0.000000,0.000000,0.000000}%
\pgfsetstrokecolor{currentstroke}%
\pgfsetstrokeopacity{0.800000}%
\pgfsetdash{}{0pt}%
\pgfpathmoveto{\pgfqpoint{1.025380in}{0.340583in}}%
\pgfpathlineto{\pgfqpoint{1.022984in}{0.323683in}}%
\pgfpathlineto{\pgfqpoint{1.019001in}{0.306916in}}%
\pgfpathlineto{\pgfqpoint{1.013446in}{0.290349in}}%
\pgfpathlineto{\pgfqpoint{1.006342in}{0.274048in}}%
\pgfpathlineto{\pgfqpoint{0.997716in}{0.258076in}}%
\pgfpathlineto{\pgfqpoint{0.987604in}{0.242496in}}%
\pgfpathlineto{\pgfqpoint{0.976044in}{0.227371in}}%
\pgfpathlineto{\pgfqpoint{0.963082in}{0.212760in}}%
\pgfpathlineto{\pgfqpoint{0.948769in}{0.198719in}}%
\pgfpathlineto{\pgfqpoint{0.933162in}{0.185306in}}%
\pgfpathlineto{\pgfqpoint{0.916323in}{0.172572in}}%
\pgfpathlineto{\pgfqpoint{0.898318in}{0.160569in}}%
\pgfpathlineto{\pgfqpoint{0.879217in}{0.149343in}}%
\pgfpathlineto{\pgfqpoint{0.859097in}{0.138938in}}%
\pgfpathlineto{\pgfqpoint{0.838037in}{0.129397in}}%
\pgfpathlineto{\pgfqpoint{0.816120in}{0.120755in}}%
\pgfpathlineto{\pgfqpoint{0.793432in}{0.113048in}}%
\pgfpathlineto{\pgfqpoint{0.770062in}{0.106307in}}%
\pgfpathlineto{\pgfqpoint{0.746104in}{0.100556in}}%
\pgfpathlineto{\pgfqpoint{0.721652in}{0.095820in}}%
\pgfpathlineto{\pgfqpoint{0.696802in}{0.092117in}}%
\pgfpathlineto{\pgfqpoint{0.671652in}{0.089462in}}%
\pgfpathlineto{\pgfqpoint{0.646302in}{0.087864in}}%
\pgfpathlineto{\pgfqpoint{0.620851in}{0.087331in}}%
\pgfpathlineto{\pgfqpoint{0.595400in}{0.087864in}}%
\pgfpathlineto{\pgfqpoint{0.570050in}{0.089462in}}%
\pgfpathlineto{\pgfqpoint{0.544900in}{0.092117in}}%
\pgfpathlineto{\pgfqpoint{0.520050in}{0.095820in}}%
\pgfpathlineto{\pgfqpoint{0.495597in}{0.100556in}}%
\pgfpathlineto{\pgfqpoint{0.471639in}{0.106307in}}%
\pgfpathlineto{\pgfqpoint{0.448270in}{0.113048in}}%
\pgfpathlineto{\pgfqpoint{0.425582in}{0.120755in}}%
\pgfpathlineto{\pgfqpoint{0.403665in}{0.129397in}}%
\pgfpathlineto{\pgfqpoint{0.382604in}{0.138938in}}%
\pgfpathlineto{\pgfqpoint{0.362484in}{0.149343in}}%
\pgfpathlineto{\pgfqpoint{0.343384in}{0.160569in}}%
\pgfpathlineto{\pgfqpoint{0.325379in}{0.172572in}}%
\pgfpathlineto{\pgfqpoint{0.308539in}{0.185306in}}%
\pgfpathlineto{\pgfqpoint{0.292933in}{0.198719in}}%
\pgfpathlineto{\pgfqpoint{0.278620in}{0.212760in}}%
\pgfpathlineto{\pgfqpoint{0.265658in}{0.227371in}}%
\pgfpathlineto{\pgfqpoint{0.254098in}{0.242496in}}%
\pgfpathlineto{\pgfqpoint{0.243985in}{0.258076in}}%
\pgfpathlineto{\pgfqpoint{0.235360in}{0.274048in}}%
\pgfpathlineto{\pgfqpoint{0.228256in}{0.290349in}}%
\pgfpathlineto{\pgfqpoint{0.222701in}{0.306916in}}%
\pgfpathlineto{\pgfqpoint{0.218718in}{0.323683in}}%
\pgfpathlineto{\pgfqpoint{0.216321in}{0.340583in}}%
\pgfpathlineto{\pgfqpoint{0.215522in}{0.357550in}}%
\pgfpathlineto{\pgfqpoint{0.216321in}{0.374518in}}%
\pgfpathlineto{\pgfqpoint{0.218718in}{0.391418in}}%
\pgfpathlineto{\pgfqpoint{0.222701in}{0.408184in}}%
\pgfpathlineto{\pgfqpoint{0.228256in}{0.424751in}}%
\pgfpathlineto{\pgfqpoint{0.235360in}{0.441053in}}%
\pgfpathlineto{\pgfqpoint{0.243985in}{0.457025in}}%
\pgfpathlineto{\pgfqpoint{0.254098in}{0.472604in}}%
\pgfpathlineto{\pgfqpoint{0.265658in}{0.487730in}}%
\pgfpathlineto{\pgfqpoint{0.278620in}{0.502341in}}%
\pgfpathlineto{\pgfqpoint{0.285704in}{0.509290in}}%
\pgfpathlineto{\pgfqpoint{0.285542in}{0.509372in}}%
\pgfpathlineto{\pgfqpoint{0.278236in}{0.513693in}}%
\pgfpathlineto{\pgfqpoint{0.271216in}{0.518464in}}%
\pgfpathlineto{\pgfqpoint{0.264509in}{0.523666in}}%
\pgfpathlineto{\pgfqpoint{0.258142in}{0.529279in}}%
\pgfpathlineto{\pgfqpoint{0.252141in}{0.535281in}}%
\pgfpathlineto{\pgfqpoint{0.246528in}{0.541648in}}%
\pgfpathlineto{\pgfqpoint{0.241325in}{0.548354in}}%
\pgfpathlineto{\pgfqpoint{0.236554in}{0.555374in}}%
\pgfpathlineto{\pgfqpoint{0.232234in}{0.562680in}}%
\pgfpathlineto{\pgfqpoint{0.228380in}{0.570243in}}%
\pgfpathlineto{\pgfqpoint{0.225010in}{0.578033in}}%
\pgfpathlineto{\pgfqpoint{0.222134in}{0.586019in}}%
\pgfpathlineto{\pgfqpoint{0.219766in}{0.594169in}}%
\pgfpathlineto{\pgfqpoint{0.217915in}{0.602453in}}%
\pgfpathlineto{\pgfqpoint{0.216587in}{0.610836in}}%
\pgfpathlineto{\pgfqpoint{0.215788in}{0.619286in}}%
\pgfpathlineto{\pgfqpoint{0.215522in}{0.627770in}}%
\pgfpathlineto{\pgfqpoint{0.215788in}{0.636253in}}%
\pgfpathlineto{\pgfqpoint{0.216587in}{0.644704in}}%
\pgfpathlineto{\pgfqpoint{0.217915in}{0.653087in}}%
\pgfpathlineto{\pgfqpoint{0.219766in}{0.661370in}}%
\pgfpathlineto{\pgfqpoint{0.222134in}{0.669521in}}%
\pgfpathlineto{\pgfqpoint{0.225010in}{0.677507in}}%
\pgfpathlineto{\pgfqpoint{0.228380in}{0.685297in}}%
\pgfpathlineto{\pgfqpoint{0.232234in}{0.692859in}}%
\pgfpathlineto{\pgfqpoint{0.236554in}{0.700165in}}%
\pgfpathlineto{\pgfqpoint{0.241325in}{0.707185in}}%
\pgfpathlineto{\pgfqpoint{0.246528in}{0.713892in}}%
\pgfpathlineto{\pgfqpoint{0.252141in}{0.720259in}}%
\pgfpathlineto{\pgfqpoint{0.258142in}{0.726261in}}%
\pgfpathlineto{\pgfqpoint{0.264509in}{0.731874in}}%
\pgfpathlineto{\pgfqpoint{0.271216in}{0.737076in}}%
\pgfpathlineto{\pgfqpoint{0.278236in}{0.741847in}}%
\pgfpathlineto{\pgfqpoint{0.285542in}{0.746167in}}%
\pgfpathlineto{\pgfqpoint{0.293104in}{0.750021in}}%
\pgfpathlineto{\pgfqpoint{0.300894in}{0.753392in}}%
\pgfpathlineto{\pgfqpoint{0.308880in}{0.756267in}}%
\pgfpathlineto{\pgfqpoint{0.317031in}{0.758635in}}%
\pgfpathlineto{\pgfqpoint{0.325314in}{0.760486in}}%
\pgfpathlineto{\pgfqpoint{0.333698in}{0.761814in}}%
\pgfpathlineto{\pgfqpoint{0.342148in}{0.762613in}}%
\pgfpathlineto{\pgfqpoint{0.350631in}{0.762880in}}%
\pgfpathlineto{\pgfqpoint{0.359115in}{0.762613in}}%
\pgfpathlineto{\pgfqpoint{0.367565in}{0.761814in}}%
\pgfpathlineto{\pgfqpoint{0.375948in}{0.760486in}}%
\pgfpathlineto{\pgfqpoint{0.377653in}{0.760105in}}%
\pgfpathlineto{\pgfqpoint{0.377653in}{1.168209in}}%
\pgfpathlineto{\pgfqpoint{0.918092in}{0.897989in}}%
\pgfpathlineto{\pgfqpoint{1.458531in}{1.168209in}}%
\pgfpathlineto{\pgfqpoint{1.458531in}{0.357550in}}%
\pgfpathlineto{\pgfqpoint{1.026180in}{0.357550in}}%
\pgfpathclose%
\pgfusepath{stroke,fill}%
\end{pgfscope}%
\begin{pgfscope}%
\pgfpathrectangle{\pgfqpoint{0.039236in}{0.039236in}}{\pgfqpoint{1.595582in}{1.177068in}}%
\pgfusepath{clip}%
\pgfsetbuttcap%
\pgfsetmiterjoin%
\definecolor{currentfill}{rgb}{0.254902,0.411765,0.882353}%
\pgfsetfillcolor{currentfill}%
\pgfsetfillopacity{0.600000}%
\pgfsetlinewidth{1.003750pt}%
\definecolor{currentstroke}{rgb}{0.000000,0.000000,0.000000}%
\pgfsetstrokecolor{currentstroke}%
\pgfsetstrokeopacity{0.600000}%
\pgfsetdash{{1.000000pt}{1.650000pt}}{0.000000pt}%
\pgfpathmoveto{\pgfqpoint{1.038196in}{0.612456in}}%
\pgfpathlineto{\pgfqpoint{1.038196in}{0.612456in}}%
\pgfpathlineto{\pgfqpoint{0.560557in}{0.719222in}}%
\pgfpathlineto{\pgfqpoint{0.436451in}{0.657169in}}%
\pgfpathlineto{\pgfqpoint{0.377653in}{0.599490in}}%
\pgfpathlineto{\pgfqpoint{0.377653in}{0.462439in}}%
\pgfpathlineto{\pgfqpoint{0.583509in}{0.357550in}}%
\pgfpathlineto{\pgfqpoint{1.026180in}{0.357550in}}%
\pgfpathclose%
\pgfusepath{stroke,fill}%
\end{pgfscope}%
\end{pgfpicture}%
\makeatother%
\endgroup%
        \end{center}
         \caption{Example of a (concave) strictly starshaped set with kernel shown in blue.}
         \label{fig:starshaped_set}
    \end{subfigure}
    \hfill
    \begin{subfigure}[t]{0.49\linewidth}
        \centering
        \includegraphics[width=\linewidth,trim={6.5cm 15.5cm 5.5cm 7.5cm},clip]{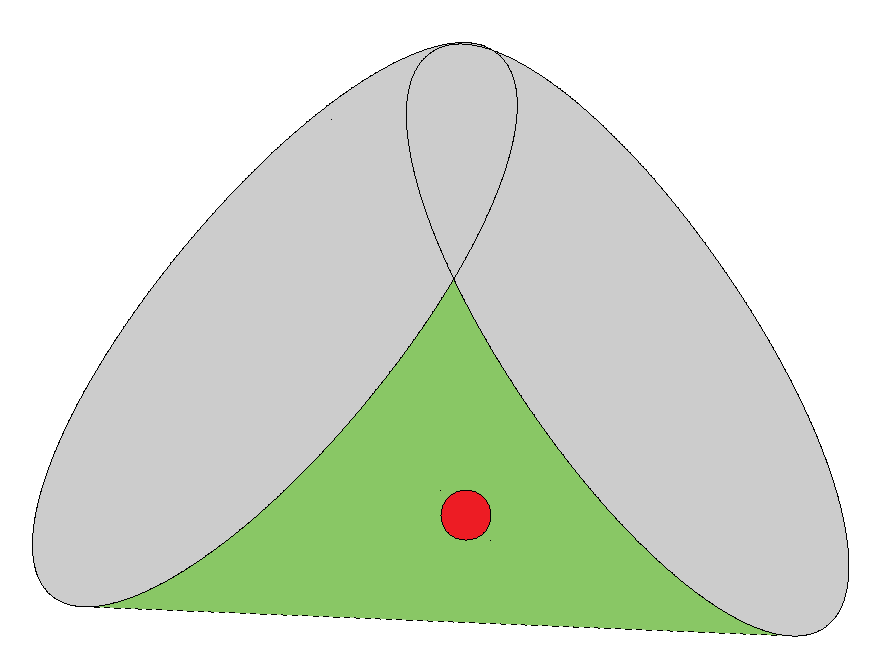}
        \caption{Robot position, shown as red circle, is included in the convex hull, shown in green, of the intersecting obstacles.}
        \label{fig:ch_include_robot}
    \end{subfigure}
    \caption{}
    \label{fig:intro_examples}
\end{figure}

To apply the aforementioned motion planning methods in practice with preserved convergence properties, a star world representation with mutually disjoint obstacles is thus needed. The obstacles in this star world should fully cover the original obstacles to ensure obstacle avoidance.
\added{In this paper, we consider the problem of reshaping a workspace of possibly intersecting obstacles into a workspace of disjoint strictly starshaped obstacles. The modification should be applicable to a generic environment of intersecting obstacles. In this way, it can be used online as a preprocessing step to reactive motion planning methods able to cope with online topological and geometrical changes of the obstacles, e.g. \cite{huber_etal_22}, as exemplified in Fig. \ref{fig:block_soads}.}

\begin{figure}
    \centering
    \begin{tikzpicture}
        \node [draw,
        text width=6em, fill=blue!20, text centered,
        minimum height=3em, rounded corners
        ]  (alg) at (0,0) {Workspace\\ Modification};
        \node [draw,
        text width=6em, fill=orange!20, text centered,
        minimum height=3em, rounded corners,
            right=2cm of alg
        ] (soads) {Motion Planner};
        \node [draw,
        text width=6em, fill=orange!20, text centered,
        minimum height=3em, rounded corners,
            below right= 1cm and -0.25cm of alg
        ]  (workspace) {Workspace};
        \node [left=.5cm of alg](input){};
        \draw[-stealth] (input.center) -- (alg.west);
        \draw[-stealth] (alg.east) -- (soads.west) 
            node[midway,above]{$\mathcal{O}^{\star}, x, x_g$};
        \draw[-] (soads.east) -- ++ (.5,0) 
            node[](output){}node[midway,above]{$\dot{x}$};
        \draw[-stealth] (output.center) |- (workspace.east);
        \draw[-] (workspace.west) -| (input.center) 
            node[near start,above]{$\mathcal{O}, x, x_g$};
        \end{tikzpicture}
    \caption{A set of disjoint starshaped obstacles, $\mathcal{O}^{\star}$, is generated given a workspace with obstacles, $\mathcal{O}$, the robot position, $x$, and the goal position, $x_g$. $\mathcal{O}^{\star}$ is then used when deciding on the movement for the robot, $\dot{x}$, with the motion planner.}
    \label{fig:block_soads}
\end{figure}
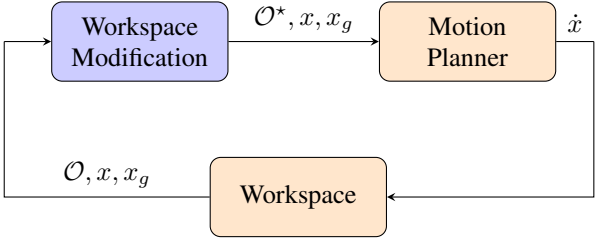

The naive approach for ``starifying'' a non-starshaped region is to make use of the convex hull since all convex sets are starshaped.
However, using the convex hull introduces a more conservative enclosing than needed and 
it can result in expanding an obstacle such that the robot position is in the obstacle interior, as in Fig. \ref{fig:ch_include_robot}, which could inhibit a sound motion planning process.
Instead, forest of stars was presented in \cite{rimon_koditschek_92} as a way to transform a set of intersecting starshaped obstacles into a set of disjoint starshaped obstacles. However, it is restricted to a specific structure where all intersecting obstacles are ordered in a parent-child relation. In particular, a parent must contain a kernel point of every child and no children with same parent are allowed to intersect. \added{No solution is hence provided for the situations where several obstacles share an intersecting region as in Fig. \ref{fig:hull_kernel_intersecting}. Moreover, the transformation relies on correct tuning of scene specific parameters.}
While these may be reasonable restrictions when modelling a static workspace offline, it cannot be presumed to hold in a dynamic environment with moving obstacles. 
\added{
Other approaches like \cite{vlantis_etal_18} and \cite{loizou_14} propose transformations in more general workspaces, without the restrictions needed for forest of stars, but assume static obstacles in predefined scenes. Moreover, these transform the workspace directly to a point world such that planning methods operating directly in star worlds, e.g. \cite{huber_etal_19}, are not applicable.}

As an alternative, the mathematical concept of starshaped hull introduced in \cite{beltagy_araby_00}, defined analogously to the convex hull \footnote{The starshaped hull of a set $A$ with respect to some point $x$ is the smallest set which fully contains $A$ and where $x$ is a kernel point. See Sec. \ref{sec:starshaped_hull} for a more detailed explanation.}, is reasonable to use in order to guarantee starshaped obstacle representation \added{as mentioned in \cite{huber_etal_22}}. A straightforward use of the starshaped hull to generate obstacle regions does not consider excluding points from the expanded obstacle and, as stated above for the convex hull, may be problematic. Moreover, by nature of the starshaped hull, the resulting set is not in general strictly starshaped and a direct use of the starshaped hull does not in general result in a star world.

In this paper, we present a method to reshape a workspace of possibly intersecting obstacles into a workspace of disjoint strictly starshaped obstacles. The method is based on two concepts, both introduced here: admissible kernel and starshaped hull with specified kernel. The admissible kernel enables excluding points of interest from the starshaped hull, i.e. obstacles can be modelled as starshaped regions which do not contain the robot nor goal position. The starshaped hull with specified kernel expands on the idea of starshaped hull and will prove to be useful in generating strictly starshaped sets.
Additionally, some general properties of the starshaped hull are established and are instrumental in the design of the proposed algorithm. \added{In contrast to forest of stars, no specific structure of the obstacle intersection is required for the proposed method. This allows for online mapping to disjoint star worlds of more generic environments.}

First, in Sec. \ref{sec:preliminaries}, the notation used and brief theory of starshaped sets is presented, and in Sec. \ref{sec:problem} the problem formulation is stated. In Sec. \ref{sec:starshaped_hull}, some important properties of the starshaped hull is presented, followed by the definition of the admissible kernel and a definition of the starshaped hull with specified kernel. 
An algorithm to reshape a workspace containing intersecting obstacles into a workspace with disjoint starshaped obstacles is presented in Sec. \ref{sec:forming_star_obstacles} and in Sec. \ref{sec:examples} some examples are provided where the algorithm is used in combination with a motion planner. Finally, in Sec. \ref{sec:conclusions}, conclusions are drawn.

\section{Preliminaries}
\label{sec:preliminaries}
\subsection{Mathematical notation}
The closed line segment from point $a$ to point $b$ is denoted as $l[a,b]$, the line through $a$ and $b$ is denoted as $l(a,b)$ and the vector from $a$ to $b$ is denoted $\overrightarrow{ab}$. The ray emanating from point $a$ in the direction of $\overrightarrow{bc}$ is denoted as $r(a,\overrightarrow{bc})$. \added{Given two sets $A$ and $B$, the notation $A\subset B$ is used to indicate that $A$ is a subset of $B$.}
The interior, the boundary and the exterior of a set $A\subset \mathbb{R}^n$ are denoted by $\textnormal{int}A$, $\partial A$ and $\textnormal{ext}A$, respectively. The convex hull of $A$ is denoted $CH(A)$ \added{and the cardinality of $A$ is denoted $|A|$}.
Given a closed convex set $A$ and an exterior point $x \in \textnormal{ext}A$, a point $a\in \partial A$ is called a tangent point of $A$ through $x$ if the ray emanating from $x$ in direction of $\overrightarrow{xa}$ does not intersect the interior of $A$. That is, $r(x,\overrightarrow{xa}) \cap \textnormal{int}A = \emptyset$. The set of all tangent points of $A$ through $x$ is denoted by $\mathcal{T}_A(x)$. For any interior or boundary point $x \in A$, $\mathcal{T}_A(x)$ is defined as the empty set. 
The linear cone containing all rays between two rays, $r_1$ and $r_2$, emanating from the same point, in counterclockwise (CCW) orientation, is denoted by $C_\angle(r_1,r_2)$. 

\subsection{Starshaped sets}
A set $A\subset \mathbb{R}^n$ is \textit{starshaped with respect to $x$} if for every point $y\in A$ the line segment $l[x,y]$ is contained in $A$. The set $A$ is said to be \textit{starshaped} if it is starshaped with respect to (w.r.t.) some point $x$, i.e. $\exists x$ s.t. $l[x,y] \subset A, \forall y \in A$. The set of all such points is called the \textit{kernel of $A$} (show in blue for the example in Fig. \ref{fig:starshaped_set}) and is denoted $\textnormal{ker}(A)$, i.e. $\textnormal{ker}(A) = \{x\in A : l[x,y] \subset A, \forall y\in A\}$. The kernel of $A$ is a convex set and the set $A$ is convex if and only if $\textnormal{ker}(A) = A$. 

The set $A$ is \textit{strictly starshaped with respect to $x$} if it is starshaped w.r.t. $x$ and any ray emanating from $x$ crosses the boundary only once, i.e. $r(x,\overrightarrow{xy}) \cap \partial A = \{y\}, \forall y\in \partial A$. We say that $A$ is strictly starshaped if if it is strictly starshaped w.r.t. some point.
For a thorough survey on the theory of starshaped sets, see \cite{hansen_etal_20}.

\section{Problem Formulation}
\label{sec:problem}
In this work, we consider a robot operating in the Euclidean space or Euclidean plane containing a collection of possibly intersecting obstacle regions, $\mathcal{O}$. Depending on dimensionality, the robot workspace, $\mathcal{W}$, and each obstacle region, $\mathcal{O}_i$, are
\begin{enumerate}[label=\alph*)]
    \item $\mathcal{W}=\mathbb{R}^3$ and $\mathcal{O}_i\subset \mathbb{R}^3$ is a convex set,
    \item $\mathcal{W}=\mathbb{R}^2$ and $\mathcal{O}_i\subset \mathbb{R}^2$ is a convex set or a polygon.
\end{enumerate}
Note that by allowing intersecting regions, the formulation does not restrict the obstacles to be convex since a single concave obstacle can be modelled as a combination of convex regions, e.g. a human/robot modelled as a kinematic chain of ellipsoids. In addition to this, any kind of (starshaped or non-starshaped) polygon shape is included in the formulation in $\mathbb{R}^2$.
Obviously, the scenario when multiple obstacles are closely located such that their regions intersect when introducing margins, e.g. to adjust for robot radius, as in Fig. \ref{fig:workspace_example}, is also considered. 

\begin{figure}
    \centering
    \includegraphics[width=\linewidth,trim={0 14cm 0 0},clip]{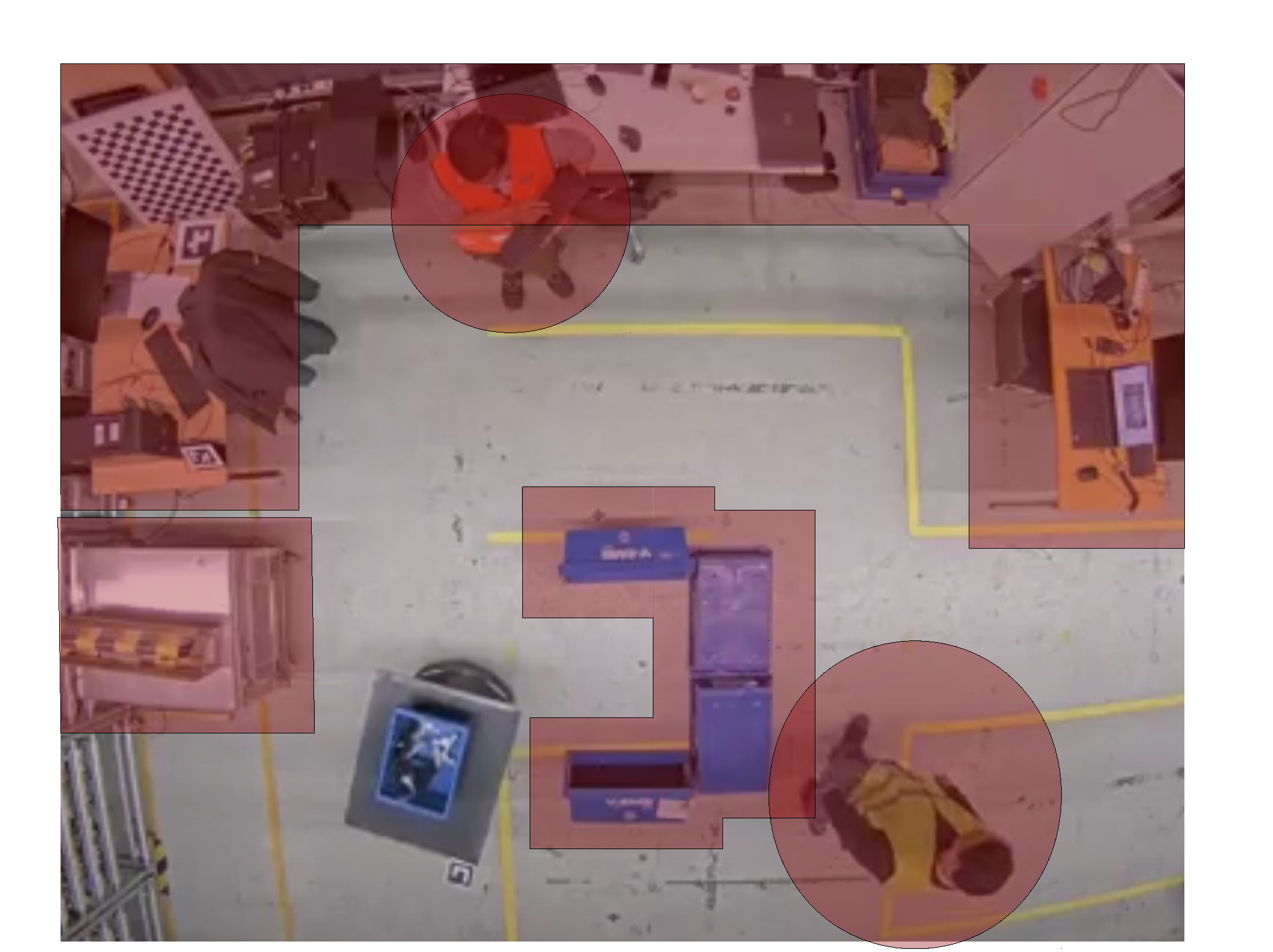}
    \caption{Example of a scenario in $\mathbb{R}^2$ containing two humans modelled as disk obstacles and three static obstacles modelled as polygons, one convex and two non-starshaped. Both non-starshaped polygons intersect with the disks due to introduced safety margins of the obstacles.}
    \label{fig:workspace_example}
\end{figure}

The \textit{free configuration space} is 
\begin{equation}
    \mathcal{F} = \mathcal{W} \setminus \bigcup_{\mathcal{O}_i\in\mathcal{O}}\mathcal{O}_i
\end{equation}
and similar to \cite{rimon_koditschek_92} we define a star world as follows\footnote{The definition of a star world in \cite{rimon_koditschek_92} in fact correspond to what we call a disjoint star world and does not include scenarios with intersecting obstacles.}:
\begin{definition}
A free configuration space, $\mathcal{F}$, all of whose obstacles are strictly starshaped sets is a \textit{star world}.
\end{definition}
To distinguish between the scenario with intersecting obstacles and with mutually disjoint obstacles, we will call a star world where all obstacles are mutually disjoint a \textit{disjoint star world} and a star world where two or several obstacles intersect an \textit{intersecting star world}.

The objective is to create a disjoint star world in the free configuration space, $\mathcal{F^{\star}}\subset \mathcal{F}$, such that motion planning methods operating in star worlds can be applied with guaranteed convergence to a goal position, $x_g$. For sound motion planning, the robot position, $x$, should remain in the free set, and for convergence to the goal, the goal should remain in the free set, i.e. $x\in\mathcal{F}^{\star}$ and $x_g\in\mathcal{F}^{\star}$. In cases when no such $\mathcal{F}^{\star}$ exists, e.g. when the robot and/or goal are fully surrounded by several intersecting obstacles, the condition of disjoint obstacles is relaxed and an intersecting star world containing $x$ and $x_g$ should be created. The problem is stated as follows:
\begin{problem}
\label{problem}
Consider a robot workspace, $\mathcal{W}$, and a collection of obstacle regions, $\mathcal{O}$, that are either
\begin{enumerate}[label=\alph*)]
    \item $\mathcal{W}=\mathbb{R}^3$ and $\mathcal{O}_i\subset \mathbb{R}^3$ is a convex set,
\end{enumerate}
or
\begin{enumerate}[label=\alph*)]
\setcounter{enumi}{1}
    \item $\mathcal{W}=\mathbb{R}^2$ and $\mathcal{O}_i\subset \mathbb{R}^2$ is a convex set or a polygon.
\end{enumerate}
Given the current position of a robot, $x\in\mathcal{F}$, and a goal position $x_g\in\mathcal{F}$, construct a disjoint star world $\mathcal{F}^{\star} \subset \mathcal{F}$ such that $x\in\mathcal{F}^{\star}$ and $x_g\in\mathcal{F}^{\star}$. That is, construct a collection of obstacles, $\mathcal{O}^{\star}$, such that

\begin{subequations}
\begin{align}
    &\bigcup_{\mathcal{O}_i\in\mathcal{O}}\mathcal{O}_i \subset \bigcup_{\mathcal{O}_i^{\star}\in\mathcal{O}^{\star}}\mathcal{O}_i^{\star} \label{eq:obstacle_inclusion}\\ 
    &\mathcal{O}_i^{\star} \textnormal{ is strictly starshaped},\ \forall \mathcal{O}_i^{\star}\in \mathcal{O}^{\star} \label{eq:obstacle_proper_starshaped}\\ 
        &x \not\in \mathcal{O}_i^{\star},\ \forall \mathcal{O}_i^{\star}\in \mathcal{O}^{\star} \label{eq:obstacle_robot_exclusion}\\ 
    &x_g \not\in \mathcal{O}_i^{\star},\ \forall \mathcal{O}_i^{\star}\in \mathcal{O}^{\star} \label{eq:obstacle_goal_exclusion}\\
    &\mathcal{O}_i^{\star} \cap \mathcal{O}_j^{\star} = \emptyset,\ \forall \added{i\neq j} 
    \label{eq:obstacle_disjoint}.
\end{align}
\end{subequations}

If no such $\mathcal{F}^{\star}$ exists, construct a star world $\mathcal{F}^{\star} \subset \mathcal{F}$ such that $x\in\mathcal{F}^{\star}$ and $x_g\in\mathcal{F}^{\star}$. That is, construct a collection of obstacles, $\mathcal{O}^{\star}$, satisfying \eqref{eq:obstacle_inclusion}-\eqref{eq:obstacle_goal_exclusion}.
\end{problem}

\section{Starshaped hull}
\label{sec:starshaped_hull}

A definition of the starshaped hull was provided in \cite{beltagy_araby_00}. In this work, it is defined with a minor modification as follows:
\begin{definition}
\label{d:st_hull}
Let $A\subset \mathbb{R}^n$ and $x\in \mathbb{R}^n$. The \textit{starshaped hull of $A$ with respect to $x$}, denoted $SH_x(A)$, is the smallest starshaped set with respect to $x$ containing $A$.
\end{definition}
In comparison, the starshaped hull is now well-defined for any $x\in \mathbb{R}^n$ and not solely for $x\in A$. The measure is conventionally considered in terms of Lesbegue measure. 
Proposition 3.2 in \cite{beltagy_araby_00} still holds for the adjusted definition and we have 
\begin{equation}
    SH_x(A) = \bigcup_{y\in A}l[x,y].
\end{equation}
We will interchangeably refer to the starshaped hull as a set and as an operation, i.e. the smallest starshaped set w.r.t $x$ containing $A$ vs the generation of this set.
Here we provide some important properties of the starshaped hull which will be used in the subsequent sections.
\begin{property}
\label{p:st_hull}
Let $A\subset \mathbb{R}^n$ and let $\mathcal{B}$ be a collection of sets $B\subset \mathbb{R}^n,\ \forall B\in \mathcal{B}$.
\begin{enumerate}[label=\alph*.]
    \item $SH_x(A) = A \Leftrightarrow A$ is starshaped and $x \in \textnormal{ker}(A)$ \label{p:hull_star_kernel}
    \item $SH_x(A) \subset CH(A),\ \forall x \in CH(A)$ \label{p:hull_ch_bound}
    \item $\displaystyle SH_x\left(\bigcup_{B\in\mathcal{B}} B\right) = \bigcup_{B\in\mathcal{B}} SH_x(B)$ \label{p:hull_union}
\end{enumerate}
\end{property}
\begin{proof}
See Appendix \ref{a:proof_p_st_hull}.
\end{proof}

As a consequence of Property \ref{p:st_hull}\ref{p:hull_star_kernel} we have $SH_x(A) \neq A$ for any $x\not\in \textnormal{ker}(A)$. That is, the starshaped hull of any set is a strict superset unless the set is starshaped and the hull is generated w.r.t. a kernel point. 
Property \ref{p:st_hull}\ref{p:hull_ch_bound} ensures that the starshaped hull w.r.t. any point $x\in CH(A)$ provides a less (or at most equally) conservative enclosing of the set $A$ compared to the convex hull. In particular, any point $x\in A$ can be used to generate a starshaped set which is upper bounded by the convex hull of $A$.
Property \ref{p:st_hull}\ref{p:hull_union} simplifies finding the starshaped hull for complex regions which can be described as combinations of simpler subsets, e.g. as the union of several polygons and/or convex sets, as the hull can be computed separately for each subset.

\begin{figure}
    \centering
    \begin{subfigure}[t]{0.49\linewidth}
        \begin{center}  
\begingroup%
\makeatletter%
\begin{pgfpicture}%
\pgfpathrectangle{\pgfpointorigin}{\pgfqpoint{1.674053in}{1.255540in}}%
\pgfusepath{use as bounding box, clip}%
\begin{pgfscope}%
\pgfsetbuttcap%
\pgfsetmiterjoin%
\definecolor{currentfill}{rgb}{1.000000,1.000000,1.000000}%
\pgfsetfillcolor{currentfill}%
\pgfsetlinewidth{0.000000pt}%
\definecolor{currentstroke}{rgb}{1.000000,1.000000,1.000000}%
\pgfsetstrokecolor{currentstroke}%
\pgfsetdash{}{0pt}%
\pgfpathmoveto{\pgfqpoint{0.000000in}{0.000000in}}%
\pgfpathlineto{\pgfqpoint{1.674053in}{0.000000in}}%
\pgfpathlineto{\pgfqpoint{1.674053in}{1.255540in}}%
\pgfpathlineto{\pgfqpoint{0.000000in}{1.255540in}}%
\pgfpathclose%
\pgfusepath{fill}%
\end{pgfscope}%
\begin{pgfscope}%
\pgfpathrectangle{\pgfqpoint{0.039236in}{0.039236in}}{\pgfqpoint{1.595582in}{1.177068in}}%
\pgfusepath{clip}%
\pgfsetbuttcap%
\pgfsetmiterjoin%
\definecolor{currentfill}{rgb}{0.827451,0.827451,0.827451}%
\pgfsetfillcolor{currentfill}%
\pgfsetfillopacity{0.800000}%
\pgfsetlinewidth{1.003750pt}%
\definecolor{currentstroke}{rgb}{0.000000,0.000000,0.000000}%
\pgfsetstrokecolor{currentstroke}%
\pgfsetstrokeopacity{0.800000}%
\pgfsetdash{}{0pt}%
\pgfpathmoveto{\pgfqpoint{0.674985in}{0.087331in}}%
\pgfpathlineto{\pgfqpoint{1.076402in}{0.087331in}}%
\pgfpathlineto{\pgfqpoint{1.076402in}{0.488748in}}%
\pgfpathlineto{\pgfqpoint{1.477820in}{0.488748in}}%
\pgfpathlineto{\pgfqpoint{1.477820in}{0.890165in}}%
\pgfpathlineto{\pgfqpoint{0.674985in}{0.890165in}}%
\pgfpathclose%
\pgfusepath{stroke,fill}%
\end{pgfscope}%
\begin{pgfscope}%
\pgfpathrectangle{\pgfqpoint{0.039236in}{0.039236in}}{\pgfqpoint{1.595582in}{1.177068in}}%
\pgfusepath{clip}%
\pgfsetbuttcap%
\pgfsetmiterjoin%
\definecolor{currentfill}{rgb}{0.827451,0.827451,0.827451}%
\pgfsetfillcolor{currentfill}%
\pgfsetfillopacity{0.800000}%
\pgfsetlinewidth{1.003750pt}%
\definecolor{currentstroke}{rgb}{0.000000,0.000000,0.000000}%
\pgfsetstrokecolor{currentstroke}%
\pgfsetstrokeopacity{0.800000}%
\pgfsetdash{}{0pt}%
\pgfpathmoveto{\pgfqpoint{0.729737in}{0.634705in}}%
\pgfpathcurveto{\pgfqpoint{0.752320in}{0.657288in}}{\pgfqpoint{0.747064in}{0.705866in}}{\pgfqpoint{0.715127in}{0.769740in}}%
\pgfpathcurveto{\pgfqpoint{0.683190in}{0.833614in}}{\pgfqpoint{0.627179in}{0.907570in}}{\pgfqpoint{0.559430in}{0.975319in}}%
\pgfpathcurveto{\pgfqpoint{0.491681in}{1.043068in}}{\pgfqpoint{0.417726in}{1.099079in}}{\pgfqpoint{0.353851in}{1.131016in}}%
\pgfpathcurveto{\pgfqpoint{0.289977in}{1.162953in}}{\pgfqpoint{0.241399in}{1.168209in}}{\pgfqpoint{0.218816in}{1.145626in}}%
\pgfpathcurveto{\pgfqpoint{0.196233in}{1.123043in}}{\pgfqpoint{0.201489in}{1.074465in}}{\pgfqpoint{0.233426in}{1.010591in}}%
\pgfpathcurveto{\pgfqpoint{0.265363in}{0.946716in}}{\pgfqpoint{0.321374in}{0.872761in}}{\pgfqpoint{0.389123in}{0.805012in}}%
\pgfpathcurveto{\pgfqpoint{0.456872in}{0.737263in}}{\pgfqpoint{0.530827in}{0.681252in}}{\pgfqpoint{0.594702in}{0.649315in}}%
\pgfpathcurveto{\pgfqpoint{0.658576in}{0.617378in}}{\pgfqpoint{0.707154in}{0.612122in}}{\pgfqpoint{0.729737in}{0.634705in}}%
\pgfpathclose%
\pgfusepath{stroke,fill}%
\end{pgfscope}%
\begin{pgfscope}%
\pgfpathrectangle{\pgfqpoint{0.039236in}{0.039236in}}{\pgfqpoint{1.595582in}{1.177068in}}%
\pgfusepath{clip}%
\pgfsetbuttcap%
\pgfsetmiterjoin%
\definecolor{currentfill}{rgb}{0.827451,0.827451,0.827451}%
\pgfsetfillcolor{currentfill}%
\pgfsetfillopacity{0.800000}%
\pgfsetlinewidth{1.003750pt}%
\definecolor{currentstroke}{rgb}{0.000000,0.000000,0.000000}%
\pgfsetstrokecolor{currentstroke}%
\pgfsetstrokeopacity{0.800000}%
\pgfsetdash{}{0pt}%
\pgfpathmoveto{\pgfqpoint{0.218816in}{0.233288in}}%
\pgfpathcurveto{\pgfqpoint{0.241399in}{0.210705in}}{\pgfqpoint{0.289977in}{0.215961in}}{\pgfqpoint{0.353851in}{0.247898in}}%
\pgfpathcurveto{\pgfqpoint{0.417726in}{0.279835in}}{\pgfqpoint{0.491681in}{0.335846in}}{\pgfqpoint{0.559430in}{0.403595in}}%
\pgfpathcurveto{\pgfqpoint{0.627179in}{0.471344in}}{\pgfqpoint{0.683190in}{0.545299in}}{\pgfqpoint{0.715127in}{0.609173in}}%
\pgfpathcurveto{\pgfqpoint{0.747064in}{0.673048in}}{\pgfqpoint{0.752320in}{0.721626in}}{\pgfqpoint{0.729737in}{0.744209in}}%
\pgfpathcurveto{\pgfqpoint{0.707154in}{0.766791in}}{\pgfqpoint{0.658576in}{0.761536in}}{\pgfqpoint{0.594702in}{0.729598in}}%
\pgfpathcurveto{\pgfqpoint{0.530827in}{0.697661in}}{\pgfqpoint{0.456872in}{0.641650in}}{\pgfqpoint{0.389123in}{0.573902in}}%
\pgfpathcurveto{\pgfqpoint{0.321374in}{0.506153in}}{\pgfqpoint{0.265363in}{0.432197in}}{\pgfqpoint{0.233426in}{0.368323in}}%
\pgfpathcurveto{\pgfqpoint{0.201489in}{0.304449in}}{\pgfqpoint{0.196233in}{0.255871in}}{\pgfqpoint{0.218816in}{0.233288in}}%
\pgfpathclose%
\pgfusepath{stroke,fill}%
\end{pgfscope}%
\begin{pgfscope}%
\pgfpathrectangle{\pgfqpoint{0.039236in}{0.039236in}}{\pgfqpoint{1.595582in}{1.177068in}}%
\pgfusepath{clip}%
\pgfsetbuttcap%
\pgfsetmiterjoin%
\definecolor{currentfill}{rgb}{0.254902,0.411765,0.882353}%
\pgfsetfillcolor{currentfill}%
\pgfsetfillopacity{0.600000}%
\pgfsetlinewidth{1.003750pt}%
\definecolor{currentstroke}{rgb}{0.000000,0.000000,0.000000}%
\pgfsetstrokecolor{currentstroke}%
\pgfsetstrokeopacity{0.600000}%
\pgfsetdash{{1.000000pt}{1.650000pt}}{0.000000pt}%
\pgfpathmoveto{\pgfqpoint{0.674985in}{0.756907in}}%
\pgfpathlineto{\pgfqpoint{0.680450in}{0.757616in}}%
\pgfpathlineto{\pgfqpoint{0.690920in}{0.758019in}}%
\pgfpathlineto{\pgfqpoint{0.700534in}{0.757360in}}%
\pgfpathlineto{\pgfqpoint{0.709256in}{0.755640in}}%
\pgfpathlineto{\pgfqpoint{0.717050in}{0.752867in}}%
\pgfpathlineto{\pgfqpoint{0.723886in}{0.749051in}}%
\pgfpathlineto{\pgfqpoint{0.725249in}{0.747923in}}%
\pgfpathlineto{\pgfqpoint{0.725391in}{0.747609in}}%
\pgfpathlineto{\pgfqpoint{0.731000in}{0.733546in}}%
\pgfpathlineto{\pgfqpoint{0.735596in}{0.720101in}}%
\pgfpathlineto{\pgfqpoint{0.739161in}{0.707327in}}%
\pgfpathlineto{\pgfqpoint{0.741681in}{0.695275in}}%
\pgfpathlineto{\pgfqpoint{0.742435in}{0.689457in}}%
\pgfpathlineto{\pgfqpoint{0.741681in}{0.683639in}}%
\pgfpathlineto{\pgfqpoint{0.739161in}{0.671587in}}%
\pgfpathlineto{\pgfqpoint{0.735596in}{0.658813in}}%
\pgfpathlineto{\pgfqpoint{0.731000in}{0.645368in}}%
\pgfpathlineto{\pgfqpoint{0.725391in}{0.631305in}}%
\pgfpathlineto{\pgfqpoint{0.725249in}{0.630991in}}%
\pgfpathlineto{\pgfqpoint{0.723886in}{0.629862in}}%
\pgfpathlineto{\pgfqpoint{0.717050in}{0.626047in}}%
\pgfpathlineto{\pgfqpoint{0.709256in}{0.623274in}}%
\pgfpathlineto{\pgfqpoint{0.700534in}{0.621554in}}%
\pgfpathlineto{\pgfqpoint{0.690920in}{0.620894in}}%
\pgfpathlineto{\pgfqpoint{0.680450in}{0.621297in}}%
\pgfpathlineto{\pgfqpoint{0.674985in}{0.622006in}}%
\pgfpathclose%
\pgfusepath{stroke,fill}%
\end{pgfscope}%
\end{pgfpicture}%
\makeatother%
\endgroup%
        \end{center}
         \caption{The kernel intersection, $K_{\cap}$, of the three sets, in blue, is nonempty and the union is starshaped w.r.t. any point in $K_{\cap}$ according to Proposition \ref{prop:intersecting_kernel}.}
         \label{fig:hull_kernel_intersecting}
    \end{subfigure}
    \hfill
    \begin{subfigure}[t]{0.49\linewidth}
    \begin{center}  
        \input{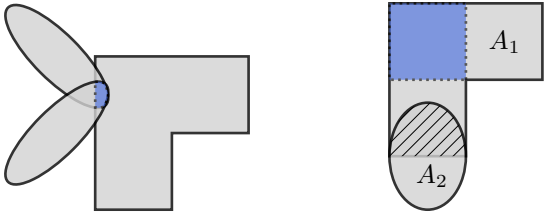}
    \end{center}
     \caption{Two intersecting starshaped set, $A_1$ and $A_2$, where the intersection, $A_1\cap A_2$, is the hatched area. The union, $A_{\cup}$, is starshaped even though the kernel intersection, $K_{\cap}$, is empty since $\textnormal{ker}(A_1)$, in blue, is disjoint from $\textnormal{ker}(A_2)=A_2$.}
     \label{fig:hull_kernel_non_intersecting}
    \end{subfigure}
    \caption{Examples of where the union, $A_{\cup}$, of several starshaped sets is starshaped, with $\textnormal{ker}(A_{\cup})$ shown in blue.}
    \label{fig:th_ker_intersect}
\end{figure}

An algorithm to find the starshaped hull of a polygon has been presented in \cite{arkin_etal_98}.
For a convex set $A_{\textrm{conv}}$ the starshaped hull w.r.t. $x$ is given as
\begin{equation}
\label{eq:local_hull_convex}
    SH_x(A_{\textrm{conv}}) = A_{\textrm{conv}} \cup CH\left(\mathcal{T}_{A_{\textrm{conv}}}(x) \cup x\right).
\end{equation}
In words, the starshaped hull w.r.t. any exterior point, $x$, is the union of the set itself and the cone with apex in $x$ and base given as the convex hull of all tangent points of $A$ through $x$. As a consequence, the starshaped hull of a convex set is also convex.
In the case of $A\subset \mathbb{R}^2$ we have that $CH\left(\mathcal{T}_A(x) \cup x\right)$ is a triangle with vertices in $x$ and in two tangent points of $A$ through $x$, see Fig. \ref{fig:ellipse_hull}. When $A\subset \mathbb{R}^3$ we have that $CH\left(\mathcal{T}_A(x) \cup x\right)$ is the solid cone with apex $x$ and base as the planar intersection of $A$ containing three, and thus all, tangent points of $A$ through $x$, see Fig. \ref{fig:ellipsoid_hull}.

\begin{figure}
    \centering
    \begin{subfigure}[t]{0.49\linewidth}
        \begin{center}
\begingroup%
\makeatletter%
\begin{pgfpicture}%
\pgfpathrectangle{\pgfpointorigin}{\pgfqpoint{1.674053in}{1.255540in}}%
\pgfusepath{use as bounding box, clip}%
\begin{pgfscope}%
\pgfsetbuttcap%
\pgfsetmiterjoin%
\definecolor{currentfill}{rgb}{1.000000,1.000000,1.000000}%
\pgfsetfillcolor{currentfill}%
\pgfsetlinewidth{0.000000pt}%
\definecolor{currentstroke}{rgb}{1.000000,1.000000,1.000000}%
\pgfsetstrokecolor{currentstroke}%
\pgfsetdash{}{0pt}%
\pgfpathmoveto{\pgfqpoint{0.000000in}{0.000000in}}%
\pgfpathlineto{\pgfqpoint{1.674053in}{0.000000in}}%
\pgfpathlineto{\pgfqpoint{1.674053in}{1.255540in}}%
\pgfpathlineto{\pgfqpoint{0.000000in}{1.255540in}}%
\pgfpathclose%
\pgfusepath{fill}%
\end{pgfscope}%
\begin{pgfscope}%
\pgfpathrectangle{\pgfqpoint{0.039236in}{0.039236in}}{\pgfqpoint{1.595582in}{1.177068in}}%
\pgfusepath{clip}%
\pgfsetbuttcap%
\pgfsetmiterjoin%
\definecolor{currentfill}{rgb}{0.827451,0.827451,0.827451}%
\pgfsetfillcolor{currentfill}%
\pgfsetfillopacity{0.800000}%
\pgfsetlinewidth{1.003750pt}%
\definecolor{currentstroke}{rgb}{0.000000,0.000000,0.000000}%
\pgfsetstrokecolor{currentstroke}%
\pgfsetstrokeopacity{0.800000}%
\pgfsetdash{}{0pt}%
\pgfpathmoveto{\pgfqpoint{0.691968in}{0.337657in}}%
\pgfpathcurveto{\pgfqpoint{0.845840in}{0.337657in}}{\pgfqpoint{0.993431in}{0.368224in}}{\pgfqpoint{1.102235in}{0.422626in}}%
\pgfpathcurveto{\pgfqpoint{1.211039in}{0.477029in}}{\pgfqpoint{1.272173in}{0.550824in}}{\pgfqpoint{1.272173in}{0.627760in}}%
\pgfpathcurveto{\pgfqpoint{1.272173in}{0.704696in}}{\pgfqpoint{1.211039in}{0.778492in}}{\pgfqpoint{1.102235in}{0.832894in}}%
\pgfpathcurveto{\pgfqpoint{0.993431in}{0.887296in}}{\pgfqpoint{0.845840in}{0.917863in}}{\pgfqpoint{0.691968in}{0.917863in}}%
\pgfpathcurveto{\pgfqpoint{0.538095in}{0.917863in}}{\pgfqpoint{0.390505in}{0.887296in}}{\pgfqpoint{0.281700in}{0.832894in}}%
\pgfpathcurveto{\pgfqpoint{0.172896in}{0.778492in}}{\pgfqpoint{0.111762in}{0.704696in}}{\pgfqpoint{0.111762in}{0.627760in}}%
\pgfpathcurveto{\pgfqpoint{0.111762in}{0.550824in}}{\pgfqpoint{0.172896in}{0.477029in}}{\pgfqpoint{0.281700in}{0.422626in}}%
\pgfpathcurveto{\pgfqpoint{0.390505in}{0.368224in}}{\pgfqpoint{0.538095in}{0.337657in}}{\pgfqpoint{0.691968in}{0.337657in}}%
\pgfpathclose%
\pgfusepath{stroke,fill}%
\end{pgfscope}%
\begin{pgfscope}%
\pgfpathrectangle{\pgfqpoint{0.039236in}{0.039236in}}{\pgfqpoint{1.595582in}{1.177068in}}%
\pgfusepath{clip}%
\pgfsetbuttcap%
\pgfsetmiterjoin%
\definecolor{currentfill}{rgb}{0.000000,0.501961,0.000000}%
\pgfsetfillcolor{currentfill}%
\pgfsetfillopacity{0.600000}%
\pgfsetlinewidth{0.000000pt}%
\definecolor{currentstroke}{rgb}{0.000000,0.000000,0.000000}%
\pgfsetstrokecolor{currentstroke}%
\pgfsetstrokeopacity{0.600000}%
\pgfsetdash{}{0pt}%
\pgfpathmoveto{\pgfqpoint{0.691942in}{0.917863in}}%
\pgfpathlineto{\pgfqpoint{1.227537in}{0.516176in}}%
\pgfpathlineto{\pgfqpoint{1.562291in}{0.917853in}}%
\pgfpathlineto{\pgfqpoint{1.562276in}{0.917882in}}%
\pgfpathclose%
\pgfusepath{fill}%
\end{pgfscope}%
\begin{pgfscope}%
\pgfpathrectangle{\pgfqpoint{0.039236in}{0.039236in}}{\pgfqpoint{1.595582in}{1.177068in}}%
\pgfusepath{clip}%
\pgfsetrectcap%
\pgfsetroundjoin%
\pgfsetlinewidth{1.505625pt}%
\definecolor{currentstroke}{rgb}{0.121569,0.466667,0.705882}%
\pgfsetstrokecolor{currentstroke}%
\pgfsetdash{}{0pt}%
\pgfpathmoveto{\pgfqpoint{0.691968in}{0.627760in}}%
\pgfusepath{stroke}%
\end{pgfscope}%
\begin{pgfscope}%
\pgfpathrectangle{\pgfqpoint{0.039236in}{0.039236in}}{\pgfqpoint{1.595582in}{1.177068in}}%
\pgfusepath{clip}%
\pgfsetrectcap%
\pgfsetroundjoin%
\pgfsetlinewidth{1.505625pt}%
\definecolor{currentstroke}{rgb}{1.000000,0.498039,0.054902}%
\pgfsetstrokecolor{currentstroke}%
\pgfsetdash{}{0pt}%
\pgfpathmoveto{\pgfqpoint{1.160598in}{0.783974in}}%
\pgfusepath{stroke}%
\end{pgfscope}%
\begin{pgfscope}%
\pgfpathrectangle{\pgfqpoint{0.039236in}{0.039236in}}{\pgfqpoint{1.595582in}{1.177068in}}%
\pgfusepath{clip}%
\pgfsetrectcap%
\pgfsetroundjoin%
\pgfsetlinewidth{1.505625pt}%
\definecolor{currentstroke}{rgb}{0.000000,0.000000,0.000000}%
\pgfsetstrokecolor{currentstroke}%
\pgfsetdash{}{0pt}%
\pgfpathmoveto{\pgfqpoint{1.562276in}{0.917863in}}%
\pgfusepath{stroke}%
\end{pgfscope}%
\begin{pgfscope}%
\pgfpathrectangle{\pgfqpoint{0.039236in}{0.039236in}}{\pgfqpoint{1.595582in}{1.177068in}}%
\pgfusepath{clip}%
\pgfsetbuttcap%
\pgfsetroundjoin%
\definecolor{currentfill}{rgb}{0.000000,0.000000,0.000000}%
\pgfsetfillcolor{currentfill}%
\pgfsetlinewidth{1.003750pt}%
\definecolor{currentstroke}{rgb}{0.000000,0.000000,0.000000}%
\pgfsetstrokecolor{currentstroke}%
\pgfsetdash{}{0pt}%
\pgfsys@defobject{currentmarker}{\pgfqpoint{-0.016667in}{-0.016667in}}{\pgfqpoint{0.016667in}{0.016667in}}{%
\pgfpathmoveto{\pgfqpoint{0.000000in}{-0.016667in}}%
\pgfpathcurveto{\pgfqpoint{0.004420in}{-0.016667in}}{\pgfqpoint{0.008660in}{-0.014911in}}{\pgfqpoint{0.011785in}{-0.011785in}}%
\pgfpathcurveto{\pgfqpoint{0.014911in}{-0.008660in}}{\pgfqpoint{0.016667in}{-0.004420in}}{\pgfqpoint{0.016667in}{0.000000in}}%
\pgfpathcurveto{\pgfqpoint{0.016667in}{0.004420in}}{\pgfqpoint{0.014911in}{0.008660in}}{\pgfqpoint{0.011785in}{0.011785in}}%
\pgfpathcurveto{\pgfqpoint{0.008660in}{0.014911in}}{\pgfqpoint{0.004420in}{0.016667in}}{\pgfqpoint{0.000000in}{0.016667in}}%
\pgfpathcurveto{\pgfqpoint{-0.004420in}{0.016667in}}{\pgfqpoint{-0.008660in}{0.014911in}}{\pgfqpoint{-0.011785in}{0.011785in}}%
\pgfpathcurveto{\pgfqpoint{-0.014911in}{0.008660in}}{\pgfqpoint{-0.016667in}{0.004420in}}{\pgfqpoint{-0.016667in}{0.000000in}}%
\pgfpathcurveto{\pgfqpoint{-0.016667in}{-0.004420in}}{\pgfqpoint{-0.014911in}{-0.008660in}}{\pgfqpoint{-0.011785in}{-0.011785in}}%
\pgfpathcurveto{\pgfqpoint{-0.008660in}{-0.014911in}}{\pgfqpoint{-0.004420in}{-0.016667in}}{\pgfqpoint{0.000000in}{-0.016667in}}%
\pgfpathclose%
\pgfusepath{stroke,fill}%
}%
\begin{pgfscope}%
\pgfsys@transformshift{1.562276in}{0.917863in}%
\pgfsys@useobject{currentmarker}{}%
\end{pgfscope}%
\end{pgfscope}%
\end{pgfpicture}%
\makeatother%
\endgroup%
        \end{center}
         \caption{$A\in \mathbb{R}^2$ is an ellipse.}
         \label{fig:ellipse_hull}
    \end{subfigure}
    \hfill
    \begin{subfigure}[t]{0.49\linewidth}
     \centering
     \includegraphics[width=\linewidth]{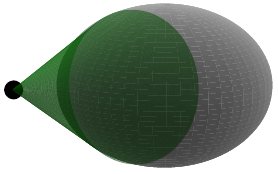}
     \caption{$A\in \mathbb{R}^3$ is an ellipsoid.}
     \label{fig:ellipsoid_hull}
    \end{subfigure}
    \caption{Starshaped hull, $SH_x(A)$, w.r.t. a point $x$ for a convex set $A$ in $\mathbb{R}^2$ and $\mathbb{R}^3$. The sets $A$ are shown in gray, the point $x$ as a black dot and the hull extended cone, $CH\left(\mathcal{T}_A(x) \cup x\right)$, in green.}
    \label{fig:st_hull_convex}
\end{figure}

In the design of the proposed method, we will additionally make use of the following two characteristics for starshaped sets.
\begin{proposition}
\label{prop:intersecting_kernel}
Let $\mathcal{A}$ be a collection of starshaped sets $A\subset \mathbb{R}^n,\ \forall A\in \mathcal{A}$ with union $A_{\cup}=\bigcup_{A\in\mathcal{A}} A$ and kernel intersection $K_{\cap}=\bigcap_{A\in\mathcal{A}} \textnormal{ker}(A)$. 
If $K_{\cap} \neq \emptyset$, the union of all sets, $A_{\cup}$, is starshaped and $K_{\cap} \subset \textnormal{ker}\left(A_{\cup}\right)$.
\end{proposition}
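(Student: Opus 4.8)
The plan is to establish the stronger containment $K_{\cap} \subset \textnormal{ker}(A_{\cup})$ directly; the starshapedness of $A_{\cup}$ then comes for free, since $K_{\cap} \neq \emptyset$ forces $\textnormal{ker}(A_{\cup}) \neq \emptyset$, which by the definition in Sec.~\ref{sec:preliminaries} is exactly what it means for $A_{\cup}$ to be starshaped. So the whole proposition reduces to a single kernel-membership verification, and I would state the two claims in that order.

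To carry this out, I would fix an arbitrary point $x \in K_{\cap}$ and check the two defining conditions for $x \in \textnormal{ker}(A_{\cup})$. First, membership in the union: since $x \in \textnormal{ker}(A) \subset A$ for every $A \in \mathcal{A}$ and $\mathcal{A}$ is nonempty, $x$ lies in each $A$ and hence in $A_{\cup}$. Second, the segment condition: take any $y \in A_{\cup}$. By definition of the union there is some $A \in \mathcal{A}$ with $y \in A$, and because $x \in \textnormal{ker}(A)$ the segment $l[x,y]$ is contained in $A$, and therefore in $A_{\cup}$. As $y$ was arbitrary, $l[x,y] \subset A_{\cup}$ for all $y \in A_{\cup}$, so $x \in \textnormal{ker}(A_{\cup})$. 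Since $x \in K_{\cap}$ was arbitrary, this gives $K_{\cap} \subset \textnormal{ker}(A_{\cup})$.

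I do not expect a genuine obstacle: the argument is essentially quantifier bookkeeping in which each point of the union is handled by the single set that contains it, exploiting that a common kernel point ``sees'' each such set in the same way. The only points worth stating carefully are that $K_{\cap} \neq \emptyset$ is used merely to supply a valid choice of $x$ (with $\mathcal{A}$ implicitly nonempty so that $A_{\cup}$ and $K_{\cap}$ are the intended objects), and that the inclusion can be strict in general — as Fig.~\ref{fig:hull_kernel_non_intersecting} illustrates, $A_{\cup}$ may be starshaped with $\textnormal{ker}(A_{\cup})$ strictly larger than $K_{\cap}$, so no reverse containment should be attempted and the hypothesis is sufficient but not necessary.
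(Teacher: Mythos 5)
Your proof is correct, and it takes a genuinely different route from the paper's. You argue directly from the definition of the kernel: for $x \in K_{\cap}$ and any $y \in A_{\cup}$, pick a set $A \in \mathcal{A}$ containing $y$ and use $x \in \textnormal{ker}(A)$ to get $l[x,y] \subset A \subset A_{\cup}$; nonemptiness of the kernel then gives starshapedness for free. The paper instead routes the argument through the starshaped hull machinery: since $x \in \textnormal{ker}(A)$ for all $A$, Property \ref{p:st_hull}\ref{p:hull_star_kernel} gives $SH_x(A) = A$, then Property \ref{p:st_hull}\ref{p:hull_union} gives $A_{\cup} = \bigcup_{A\in\mathcal{A}} SH_x(A) = SH_x(A_{\cup})$, and Property \ref{p:st_hull}\ref{p:hull_star_kernel} again yields $x \in \textnormal{ker}(A_{\cup})$. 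The two are equivalent in substance — unfolding the paper's use of Property \ref{p:st_hull}\ref{p:hull_union} recovers exactly your segment-by-segment bookkeeping — but they buy different things: your version is self-contained and requires no prior lemmas, which makes it the more elementary and portable argument, while the paper's version is shorter on the page and deliberately exercises the hull formalism that the rest of the paper (in particular the proof of Property \ref{p:st_hull_ker}) is built on. Your closing remarks are also accurate: $K_{\cap} \neq \emptyset$ is used only to supply a witness, and the containment can be strict, as the paper's Fig.~\ref{fig:hull_kernel_non_intersecting} shows, so the hypothesis is sufficient but not necessary.
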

\begin{proof}
See Appendix \ref{a:proof_prop_intersecting_kernel}.
\end{proof}

\begin{proposition}
\label{prop:strictly_star}
Let $A\subset \mathbb{R}^n$ be a starshaped set. $A$ is strictly starshaped if the kernel of $A$ has a nonempty interior. That is 
\begin{equation}
    \textnormal{int ker} (A) \neq \emptyset \Rightarrow A \textnormal{ is strictly starshaped}
\end{equation}
\end{proposition}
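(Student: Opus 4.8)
The plan is to fix a point $x\in\textnormal{int ker}(A)$ and prove that $A$ is strictly starshaped with respect to this very point. Since $x\in\textnormal{ker}(A)$, the set $A$ is already starshaped w.r.t. $x$, so only the ``single crossing'' condition $r(x,\overrightarrow{xy})\cap\partial A=\{y\}$ for every $y\in\partial A$ remains to be shown. The geometric engine behind this is the claim that, because $x$ is surrounded by a whole ball of kernel points, every point lying strictly between $x$ and any point of $\overline{A}$ is an \emph{interior} point of $A$. A ray from $x$ can then touch $\partial A$ at most at its farthest point, which yields the single crossing.

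First I would establish the key lemma. Choose $\epsilon>0$ with the closed ball $\overline{B}(x,\epsilon)\subset\textnormal{ker}(A)$, which exists since $x\in\textnormal{int ker}(A)$. For any $p\in A$, each $z\in\overline{B}(x,\epsilon)$ is a kernel point, so $l[z,p]\subset A$; hence the ``ice-cream cone'' $CH(\overline{B}(x,\epsilon)\cup\{p\})=\bigcup_{z\in\overline{B}(x,\epsilon)}l[z,p]$ is contained in $A$. Writing an intermediate point as $q=(1-s)x+sp$ with $0<s<1$, the set $(1-s)\overline{B}(x,\epsilon)+sp=\overline{B}(q,(1-s)\epsilon)$ is a full-dimensional ball inside this cone, so $q\in\textnormal{int}A$. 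This shows that the open segment $(x,p)\subset\textnormal{int}A$ for every $p\in A$.

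Second, I would upgrade the lemma from $p\in A$ to $p\in\overline{A}$ by a perturbation argument, since the boundary points I later plug in need not lie in $A$ itself. Given $p\in\overline{A}$ and $q=(1-s)x+sp$, pick $w\in A$ close to $p$ and set $q'=(1-s)x+sw$; the previous step gives $\overline{B}(q',(1-s)\epsilon)\subset A$, and since $|q-q'|=s|p-w|$ can be made smaller than $(1-s)\epsilon$, the point $q$ lands inside this ball, so again $q\in\textnormal{int}A$.

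Finally I would close the argument by contradiction. Note that $x\in\textnormal{int}A$, so $x\notin\partial A$ and every boundary point sits at positive distance from $x$. Fix $y\in\partial A$ and suppose a second boundary point $y'\neq y$ also lies on the ray $r(x,\overrightarrow{xy})$. The nearer of $y,y'$ then lies on the open segment between $x$ and the farther one; as the farther point lies in $\partial A\subset\overline{A}$, the extended key lemma forces the nearer point into $\textnormal{int}A$, contradicting its membership in $\partial A$. Hence $r(x,\overrightarrow{xy})\cap\partial A=\{y\}$, and $A$ is strictly starshaped. The main obstacle is exactly the second step: the boundary points fed into the lemma generally do not lie in $A$, so the clean cone inclusion must be recovered in the limit through the perturbation estimate rather than applied directly.
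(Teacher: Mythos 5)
Your proof is correct and runs on the same geometric engine as the paper's own proof: an $\epsilon$-ball of kernel points around $x$ generates a cone to a far collinear point whose interior contains the open segment, so the nearer of two boundary points lying on one ray from $x$ would be an interior point of $A$ --- a contradiction. The one substantive difference is your second step. The paper applies the containment $l[z,z_1]\subset A$ with the apex $z_1$ taken directly as the \emph{farther boundary point}, which is only justified when $z_1\in A$; since starshapedness guarantees segment containment only for endpoints in $A$, the paper's proof tacitly assumes that boundary points belong to $A$ (i.e., essentially that $A$ is closed). Your perturbation estimate $\lVert q-q'\rVert = s\lVert p-w\rVert < (1-s)\epsilon$ extends the cone lemma from $p\in A$ to $p\in\overline{A}$ and thereby removes that hypothesis, so your argument covers non-closed starshaped sets, for which the paper's proof is, read literally, incomplete. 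What the paper's version buys is brevity; what yours buys is validity at the stated level of generality (arbitrary $A\subset\mathbb{R}^n$), at the cost of one extra limiting argument.
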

\begin{proof}
See Appendix \ref{a:proof_l_boundary_mapping}.
\end{proof}

The implication of Proposition \ref{prop:intersecting_kernel} is illustrated in Fig. \ref{fig:hull_kernel_intersecting} where three starshaped sets are intersecting. Note that the condition $K_{\cap} \neq \emptyset$ in Proposition \ref{prop:intersecting_kernel} is sufficient but not necessary for $A_{\cup}$ to be starshaped. Consider for example the starshaped polygon $A_1$ and ellipse $A_2$ in Fig. \ref{fig:hull_kernel_non_intersecting}. The kernels of the starshaped sets are disjoint, i.e. $K_{\cap} = \emptyset$, but the union $A_{\cup}$ is starshaped with $\textnormal{ker}(A_{\cup}) = \textnormal{ker}(A_1)$.

\begin{remark}
From Proposition \ref{prop:intersecting_kernel} we can conclude that the union of any combination of convex sets is starshaped if their intersection is nonempty since $\textnormal{ker}(A)=A$ for any convex set $A$.
\end{remark}

\subsection{Excluding points from the starshaped hull}
\label{sec:admissible_kernel}
In some scenarios it is desired to find a starshaped set containing a set, $A$, while ensuring that some points of interest, $\bar{X}$, are not included in the resulting starshaped set. For instance, in Problem \ref{problem}, we need all obstacles to be starshaped but the robot and goal positions should remain outside the extended starshaped obstacles. The starshaped hull of $A$ provides a starshaped enclosing of $A$, but it does not inherently provide any way to exclude specific points. The shape of the set depends on the point selected for generating the hull and as a consequence, we have that the starshaped hull w.r.t. some points is disjoint from $\bar{X}$, while it is not w.r.t. other points. To enable the exclusion of $\bar{X}$ from the starshaped hull, we introduce the admissible kernel defined as follows:
\begin{definition}
\label{d:admissible_kernel}
Let $A \subset \mathbb{R}^n$ and $\Bar{X} \subset \mathbb{R}^n$. The \textit{admissible kernel} for $A$ excluding $\Bar{X}$, denoted as $\textnormal{ad ker}(A,\bar{X})$, is the set such that the starshaped hull of $A$ at any $x\in \textnormal{ad ker}(A,\bar{X})$ does not contain any point in $\Bar{X}$. That is,
\begin{align}
    \textnormal{ad ker}(A,\bar{X}) = \{x\in \mathbb{R}^n : SH_x(A) \cap \Bar{X} = \emptyset\}.
\end{align}
\end{definition}
Given the admissible kernel for the sets $A$ and $\bar{X}$, any point $x\in \textnormal{ad ker}(A,\bar{X})$ can be used for the starshaped hull to generate a starshaped set which contains $A$ and excludes all points in $\Bar{X}$. For computing the admissible kernel, the following two properties are useful:

\begin{property}
\label{p:adm_kernel_union}
Let $\mathcal{A}$ be a collection of sets $A\subset \mathbb{R}^n,\ \forall A\in \mathcal{A}$ with union $A_{\cup}=\bigcup_{A\in \mathcal{A}}A$. The admissible kernel for $A_{\cup}$ excluding a point set $\bar{X}$ is the intersection of the admissible kernel excluding $\bar{X}$ for all subsets $A\in\mathcal{A}$. That is,
$$\textnormal{ad ker}\left(A_{\cup}, \bar{X}\right) = \bigcap_{A\in\mathcal{A}} \textnormal{ad ker}(A,\bar{X})$$
\end{property}
\begin{proof}
See Appendix \ref{a:proof_p_adm_ker_union}.
\end{proof}

\begin{property}
\label{p:adm_ker_intersection}
The admissible kernel for the starshaped hull of a set $A$ excluding a point set $\bar{X}$ is the intersection of all admissible kernels given each individual point $\bar{x}\in\bar{X}$. That is,
\begin{align*}
    \textnormal{ad ker}(A,\bar{X}) = \bigcap_{\bar{x}\in\bar{X}} \textnormal{ad ker}(A,\{\bar{x}\})
\end{align*}
\end{property}
\begin{proof}
See Appendix \ref{a:proof_p_adm_ker_intersection}.
\end{proof}

As a consequence of Property \ref{p:adm_kernel_union}, the admissible kernel for a set given as the union of several "simpler" subsets, i.e. for intersecting obstacles as in Problem \ref{problem}, can be found by intersecting the admissible kernel for each subset. In Fig. \ref{fig:admissible_kernel_intersecting} an example is shown of how it can be used to find the admissible kernel for the union of two sets.
From Property \ref{p:adm_ker_intersection} we have that when more than one point should be excluded from the starshaped hull, the admissible kernel can be computed separately for all points and then be combined through intersection. In Fig. \ref{fig:admissible_kernel_two_points} an example is shown of how it can be used to find the admissible kernel excluding two points.

\begin{figure}
    \centering
    \begin{subfigure}[t]{0.49\linewidth}
        \centering
        \includegraphics[width=\linewidth]{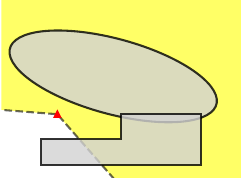}
         \caption{Admissible kernel for the ellipse, $\textnormal{ad ker}(A_1,\{\bar{x}\})$, shown in yellow.}
         \label{fig:admissible_kernel_intersecting1}
    \end{subfigure}
    \hfill
    \begin{subfigure}[t]{0.49\linewidth}
        \centering
        \includegraphics[width=\linewidth]{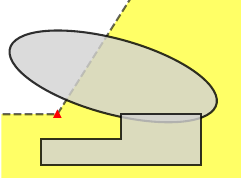}
     \caption{Admissible kernel for the polygon, $\textnormal{ad ker}(A_2,\{\bar{x}\})$, shown in yellow.}
     \label{fig:admissible_kernel_intersecting2}
    \end{subfigure}
    \begin{subfigure}[t]{0.8\linewidth}
        \centering
        \includegraphics[width=\linewidth]{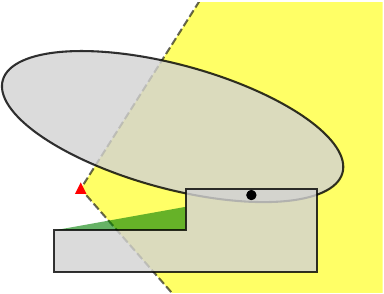}
    \caption{The admissible kernel for the set, $\textnormal{ad ker}(A,\{\bar{x}\})$, shown in yellow, is found as the intersection of $\textnormal{ad ker}(A_1,\{\bar{x}\})$ and $\textnormal{ad ker}(A_2,\{\bar{x}\})$. The hull extended region, $SH_x(A) \setminus A$, is shown in green for a point $x\in \textnormal{ad ker}(A,\bar{X})$, shown as black dot. Clearly, $\bar{x} \not\in SH_x(A)$ as desired.}
    \label{fig:admissible_kernel_intersecting3}
    \end{subfigure}
    \caption{A set given as the union, $A=A_1\cup A_2$, of an ellipse, $A_1$, and a polygon, $A_2$. The set is shown in gray and a point to exclude, $\bar{x}$, is shown as a red triangle.}
    \label{fig:admissible_kernel_intersecting}
\end{figure}

\begin{figure}
    \centering
    \begin{subfigure}[t]{0.49\linewidth}
        \centering
        \includegraphics[width=\linewidth]{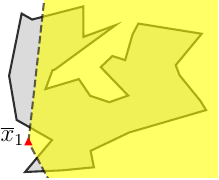}
         \caption{Admissible kernel for the polygon excluding point $x_1$, $K_{\bar{x}_1}(A)$, shown in yellow.}
         \label{fig:admissible_kernel_two_points1}
    \end{subfigure}
    \hfill
    \begin{subfigure}[t]{0.49\linewidth}
        \centering
        \includegraphics[width=\linewidth]{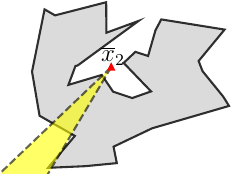}
     \caption{Admissible kernel for the polygon excluding point $x_2$, $K_{\bar{x}_2}(A)$, shown in yellow.}
     \label{fig:admissible_kernel_two_points2}
    \end{subfigure}
    \begin{subfigure}[t]{0.8\linewidth}
        \centering
        \includegraphics[width=\linewidth]{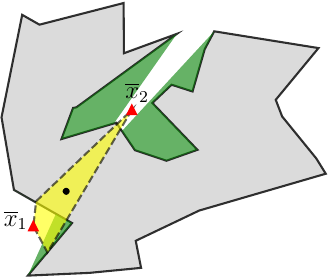}
    \caption{The admissible kernel, $\textnormal{ad ker}(A,\bar{X})$, in yellow, can be found by intersection of the two cones $K_{\bar{x}_1}(A)$ and $K_{\bar{x}_2}(A)$. The hull extended region, $SH_x(A) \setminus A$, is shown in green for a point $x\in \textnormal{ad ker}(A,\bar{X})$, shown as black dot. Clearly, $\bar{x}_1 \not\in SH_x(A)$ and $\bar{x}_2 \not\in SH_x(A)$ as desired.}
    \label{fig:admissible_kernel_two_points3}
    \end{subfigure}
    \caption{Non-starshaped polygon, $A$, in gray, and two points to exclude $\Bar{X}=\{\Bar{x}_1, \Bar{x}_2\}$, shown as a red triangles.}
    \label{fig:admissible_kernel_two_points}
\end{figure}

Combining Property \ref{p:adm_kernel_union} and \ref{p:adm_ker_intersection}, we have
\begin{equation}
\label{eq:adm_ker_combination}
    \textnormal{ad ker}(A_{\cup},\bar{X}) = \bigcap_{A\in\mathcal{A}} \bigcap_{\bar{x}\in\bar{X}} \textnormal{ad ker}(A,\{\bar{x}\}).
\end{equation}
Eq. \ref{eq:adm_ker_combination} is instrumental for simplifying the problem of finding the admissible kernel for a complex set excluding several points by decomposing it into subproblems of finding the admissible kernel for a simple set excluding a single point.

A point $\bar{x}\in\bar{X}$ can be classified into three distinct types w.r.t. $A$. It may be a point in the set, a bounded exterior point or a free exterior point. The difference between the two latter, illustrated as $\bar{z}$ and $\bar{x}$ in Fig. \ref{fig:admissible_kernel_polygon}, is that there exists a ray in some direction emanating from a free exterior point which does not intersect $A$, while a bounded exterior point is fully surrounded by $A$. Obviously, any exterior point to a convex set is a free exterior point.
\begin{property}
\label{p:ad_ker_nonempty}
The admissible kernel for the starshaped hull of $A$ excluding the singleton set $\{\bar{x}\}$ is nonempty if and only if $\bar{x}$ is a free exterior point of $A$.
\end{property}
\begin{proof}
See appendix \ref{a:proof_p_ad_ker_nonempty}.
\end{proof}

The admissible kernel for any 2-dimensional set, $A\subset \mathbb{R}^2$, given any free exterior point, $\bar{x}$, is found as the cone
\begin{equation}
\label{eq:adm_ker_cone}
    \textnormal{ad ker}(A,\{\bar{x}\}) = \textnormal{int} C_{\angle}\left(r(\Bar{x},\overrightarrow{t_1(\Bar{x})\Bar{x}}),r(\Bar{x},\overrightarrow{t_2(\Bar{x})\Bar{x}})\right)
\end{equation}
where $t_1(\bar{x}), t_2(\bar{x}) \in \mathcal{T}_A(\bar{x})$.
For a convex set specified by the boundary function $b(x)=0 \textnormal{ iff } x\in\partial A$, the tangent points can be found by solving for $t$ such that $\frac{db}{dx}(t) \cdot (t-\bar{x}) = 0$ and $b(t) = 0$ and the tangent points are set such that the order $xt_1t_2$ is CW. For a polygon, the tangent points can be found as proposed in \cite{freeman_loutrel_67} by expressing the vertices in polar coordinates w.r.t. $\bar{x}$, and $t_1$ and $t_2$ are chosen as the vertices with maximum and minimum polar angles, respectively. 
Note that the admissible kernel is given by the interior, and does not include the boundary, of the generated cone in \eqref{eq:adm_ker_cone}. Consider for example Figs. \ref{fig:admissible_kernel_ellipse} and \ref{fig:admissible_kernel_polygon} where the admissible kernel for two sets $A\subset \mathbb{R}^2$ are shown. Since the starshaped hull w.r.t. any boundary point, $x \in \partial \textnormal{ad ker}(A,\{\bar{x}\})$, would contain the line segments $l[x,t_1]$ and $l[x,t_2]$, it would also contain $\bar{x}$ as this is part of both lines.

The admissible kernel for any convex set, $A_{\textrm{conv}} \subset \mathbb{R}^n,\ n\geq 2$, is found as
\begin{equation}
\label{eq:adm_kernel_convex}
    \textnormal{ad ker}A_{\textrm{conv}}(\Bar{x}) = \mathbb{R}^n \setminus \left\{r(\Bar{x},\overrightarrow{y\Bar{x}}) : y \in CH(\mathcal{T}_{A_\textnormal{conv}}(x)) \right\}
\end{equation}
assuming $\bar{x}$ is an exterior point of $A_{\textnormal{conv}}$. Since only rays emanating from $\bar{x}$ in directions from points in $A_{\textnormal{conv}}$ are excluded in \eqref{eq:adm_kernel_convex}, the admissible kernel for a convex set fully contains the set. In Fig. \ref{fig:admissible_kernel_ellipsoid} an example of the admissible kernel excluding a single point for an ellipsoid is illustrated.

\begin{figure}
    \centering
    \begin{subfigure}[t]{0.49\linewidth}
        \centering
        \includegraphics[width=\linewidth]{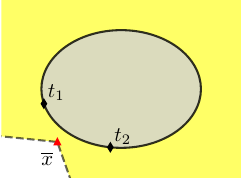}
         \caption{$A\in \mathbb{R}^2$ is an ellipse. $\textnormal{ad ker}(A,\{\bar{x}\})$ is the cone shown in yellow. The tangent points, $t_1$ and $t_2$, of $A$ through $\Bar{x}$ are also depicted.}
         \label{fig:admissible_kernel_ellipse}
    \end{subfigure}
    \hfill
    \begin{subfigure}[t]{0.49\linewidth}
        \centering
        \includegraphics[width=\linewidth]{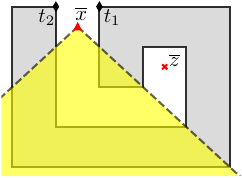}
     \caption{$A\in \mathbb{R}^2$ is a polygon. The admissible kernel given the free exterior point $\bar{x}$, $\textnormal{ad ker}(A,\{\bar{x}\})$, is shown in yellow. The admissible kernel for the bounded exterior point, $\bar{z}$, is $\textnormal{ad ker}(A,\{\bar{z}\})=\emptyset$.}
     \label{fig:admissible_kernel_polygon}
    \end{subfigure}
    \begin{subfigure}[t]{\linewidth}
     \centering
     \includegraphics[width=0.5\linewidth]{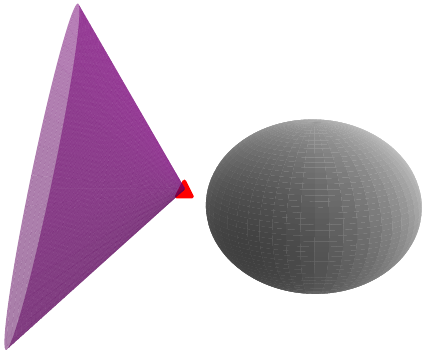}
     \caption{$A\in \mathbb{R}^3$ is an ellipsoid. The cone $\mathbb{R}^3 \setminus \textnormal{ad ker}(A,\bar{X})$ is shown in purple.}
     \label{fig:admissible_kernel_ellipsoid}
    \end{subfigure}
    \caption{Admissible kernels, $\textnormal{ad ker}(A,\{\bar{x}\})$, for some sets $A$, shown in gray, given the excluding point $\Bar{x}$, shown as a red triangle.}
    \label{fig:admissible_kernel_single}
\end{figure}

\subsection{Starshaped hull for strictly starshaped sets}
Depending on the set $A$ and the point $x$ used for generating $SH_x(A)$, the kernel of the starshaped hull can be the singleton $\textnormal{ker}(SH_x(A)) = \{x\}$. Furthermore, from the manner that the starshaped hull is constructed there may exist more than one boundary point along some direction from all (or the only) kernel point(s). In other words, it is not strictly starshaped.
In Fig. \ref{fig:singleton_kernel} an example of this is illustrated where the starshaped hull is generated w.r.t. a point $x$, depicted as a black circle, and several boundary points of the hull are located along the four directions shown as red dashed lines from the singleton kernel $x$.
For this reason, the starshaped hull in its original definition is not appropriate to apply when strictly starshapes are needed. To treat this issue, we introduce the starshaped hull with specified kernel. 
\begin{definition}
\label{d:st_hull_kernel}
Let $A \subset \mathbb{R}^n$ and $K \subset \mathbb{R}^n$. The \textit{starshaped hull of $A$ with specified kernel $K$}, denoted as $SH_{\textnormal{ker}K}(A)$, is defined as the smallest starshaped set such that $A\subset SH_{\textnormal{ker}K}(A)$ and $K \subset \textnormal{ker}\left(SH_{\textnormal{ker}K}(A)\right)$.
\end{definition}

\begin{figure}
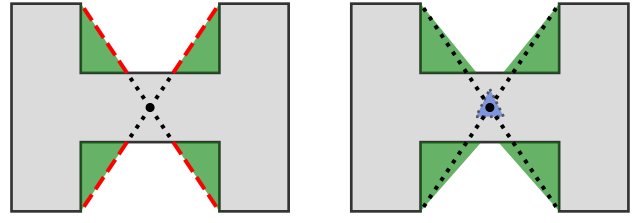

    \centering
    \begin{subfigure}[t]{0.49\linewidth}
        \begin{center}  
            \input{figures/singleton_kernel.pgf}
        \end{center}
         \caption{There exists several boundary points of $SH_x(A)$ along the red dashed lines in four directions from $x$.}
         \label{fig:singleton_kernel}
    \end{subfigure}
    \hfill
    \begin{subfigure}[t]{0.49\linewidth}
        \begin{center}  
            \input{figures/triangle_kernel.pgf}
        \end{center}
     \caption{There exists only one boundary point of $SH_{\textnormal{ker}K}(A)$ along each direction from $x\in \textnormal{int ker}SH_{\textnormal{ker}K}(A)$ when $K$ contains three affinely independent points. $CH(K)$ is shown in blue.}
     \label{fig:triangle_kernel}
    \end{subfigure}
    \caption{A non-starshaped polygon, shown in gray, with hull extended region, shown in green.}
    \label{fig:polygon_desired_kernel}
\end{figure}

Property \ref{p:st_hull} for the starshaped hull w.r.t. a single point cannot be directly applied for the starshaped hull with specified kernel. Instead, we have the following properties.
\begin{property}
\label{p:st_hull_ker}
Let $A\subset \mathbb{R}^n$ and let $\mathcal{B}$ be a collection of sets $B\subset \mathbb{R}^n,\ \forall B\in \mathcal{B}$.
\begin{enumerate}[label=\alph*.]
    \item $SH_{\textnormal{ker}K}(A) = SH_{\textnormal{ker}CH(K)}(A)  = \displaystyle\bigcup_{k \in CH(K)}SH_k(A)$ \label{p:st_hull_ker_ch_ker}
    \item $SH_{\textnormal{ker}K}(A) = A \Leftrightarrow A$ is starshaped and $K \subset \textnormal{ker}(A)$ \label{p:st_hull_ker_A}
    \item $SH_{\textnormal{ker}K}(A) \subset CH(A),\ \forall K \subset CH(A)$ \label{p:st_hull_ker_ch_bound}
    \item $\displaystyle SH_{\textnormal{ker}K}\left(\bigcup_{B\in\mathcal{B}} B\right) = \bigcup_{B\in\mathcal{B}} SH_{\textnormal{ker}K}(B)$ \label{p:st_hull_ker_union}
\end{enumerate}
\end{property}
\begin{proof}
See Appendix \ref{a:proof_p_star_hull_ker}.
\end{proof}
While it is sufficient to have $K\subset \textnormal{ker}\left(SH_{\textnormal{ker}K}(A)\right)$ by Definition \ref{d:st_hull_kernel}, Property \ref{p:st_hull_ker}\ref{p:st_hull_ker_ch_ker} states that $CH(K)$ is also contained in $\textnormal{ker}\left(SH_{\textnormal{ker}K}(A)\right)$ in all cases, since $SH_{\textnormal{ker}K}(A) = SH_{\textnormal{ker}CH(K)}(A)$. This will prove to be instrumental for generating sets which are guaranteed to be strictly starshaped. Additionally, Property \ref{p:st_hull_ker}\ref{p:st_hull_ker_ch_ker} provide a direct relation between the starshaped hull with specified kernel and the starshaped hull w.r.t. a point.
Properties \ref{p:st_hull_ker}\ref{p:st_hull_ker_A}-\ref{p:st_hull_ker}\ref{p:st_hull_ker_union} directly relate to Properties \ref{p:st_hull}\ref{p:hull_star_kernel}-\ref{p:st_hull}\ref{p:hull_union}
Property \ref{p:st_hull_ker}\ref{p:st_hull_ker_A} provides a guarantee that a starshaped set, $A$, is not expanded by the operation $SH_{\textnormal{ker}K}(A)$ if the specified kernel points are selected within the kernel of $A$ and Property \ref{p:st_hull_ker}\ref{p:st_hull_ker_ch_bound} provide an upper bound if the specified kernel points are selected within the convex hull of $A$. Property \ref{p:st_hull_ker}\ref{p:st_hull_ker_union} simplifies finding the starshaped hull with specified kernel for complex regions which can be described as combinations of simpler subsets, e.g. as the union of several polygons and/or convex sets, since the hull can be computed separately for each subset.

Using Proposition \ref{prop:strictly_star} in combination with Property \ref{p:st_hull_ker}\ref{p:st_hull_ker_ch_ker}, a sufficient condition on the specified kernel can be derived for $SH_{\textnormal{ker}K}(A)$ to be strictly starshaped as stated in the following property.

\begin{property}
\label{p:st_hull_ker_strict}
Let $A\subset \mathbb{R}^n$ and let $K\subset \mathbb{R}^n$. The starshaped hull of $A$ with specified kernel $K$, $SH_{\textnormal{ker}K}(A)$, is strictly starshaped if $K$ contains $n+1$ affinely independent points. 
\end{property}
\begin{proof}
See Appendix \ref{a:proof_p_st_hull_ker_strict}.
\end{proof}

From Property \ref{p:st_hull_ker_strict} we have that any set $A\in \mathbb{R}^n$, can be enclosed by a strictly starshaped set with $SH_{\textnormal{ker}K}(A)$ given that $K$ is chosen as $n+1$ affinely independent points. Specifically, in $\mathbb{R}^2$, it is sufficient to select $K$ as three points which are not collinear. 

As stated in the previous section, the admissible kernel provides a useful instrument when choosing the kernel point $x$ for generating the starshaped hull, $SH_x(A)$, such that some specified points, $\bar{X}$, are excluded. However, the admissible kernel does not provide such a guarantee when the starshaped hull with specified kernel, $SH_{\textnormal{ker}K}(A)$, is used. This is evident from Fig. \ref{fig:desired_kernel_ellipse3} where $\bar{x}$ is contained by $SH_{\textnormal{ker}K}(A)$ even though the kernel points are selected withing the admissible kernel, $K\subset \textnormal{ad ker}(A,\{\bar{x}\})$. To extend the applicability of the admissible kernel to the starshaped hull with specified kernel, consider the following property.

\begin{property}
\label{p:st_hull_ker_adm_ker}
Let $A\subset \mathbb{R}^n$, $K\subset \mathbb{R}^n$, $\bar{X} \subset \mathbb{R}^n$ and $\textnormal{ad ker}(A,\bar{X})$ be the admissible kernel for $A$ excluding $\bar{X}$. If $CH(K)$ is contained by $\textnormal{ad ker}(A,\bar{X})$, no point $\bar{x}\in\bar{X}$ is included in the starshaped hull of $A$ with specified kernel $K$. That is,
\begin{equation}
    CH(K) \subset \textnormal{ad ker}(A,\bar{X}) \Rightarrow SH_{\textnormal{ker}K}(A) \cap \bar{X} = \emptyset.
\end{equation}
\end{property}
\begin{proof}
See Appendix \ref{proof:p_st_hull_ker_adm_ker}.
\end{proof}
From Properties \ref{p:st_hull_ker_strict} and \ref{p:st_hull_ker_adm_ker} we can now conclude that given the admissible kernel for $A\subset \mathbb{R}^n$ excluding $\bar{X}\subset \mathbb{R}^n$, $SH_{\textnormal{ker}K}(A)$ is guaranteed to be a strictly starshaped set which does not contain any $\bar{x}\in\bar{X}$ if $K$ is chosen as $n+1$ affinely independent points such that $CH(K)\subset \textnormal{ad ker}(A,\bar{X})$.

Before deriving the expressions for the starshaped hull with specified kernel, note that the naive approach to separately generate the starshaped hull w.r.t. each $k\in K$ and combine them does not provide the desired result, i.e. $SH_{\textnormal{ker}K}(A) \neq \bigcup_{k\in K} SH_k(A)$ in general. This is evident from Fig. \ref{fig:desired_kernel_ellipse2} where, given an ellipse, $A$, and a set of two points, $K=\{k_1,k_2\}$, the union $B=SH_{k_1}(A)\cup SH_{k_2}(A)$ is shown. Since $l[k_1,k_2] \not\in B$ neither $k_1$ nor $k_2$ belongs to the kernel of $B$. 
Instead, the starshaped hull should be generated  w.r.t. each $k\in CH(K)$ according to Property \ref{p:st_hull_ker}\ref{p:st_hull_ker_ch_ker} However, as $CH(K)$ is a continuous set for any non-singleton set $K$, it becomes intractable in practice to generate the starshaped hull w.r.t. each $k\in CH(K)$ and an explicit method to find $SH_{\textnormal{ker}K}(A)$ is needed.
For a convex set, $A_{\textrm{conv}}$, the extended region of the starshaped hull with specified kernel $K$ is based on the convex hull of the kernel points and their corresponding points of tangency. That is,
\begin{equation}
\label{eq:star_hull_specified_convex}
    SH_{\textnormal{ker}K}(A_{\textrm{conv}}) = A_{\textrm{conv}} \cup CH\left(\bigcup_{k\in K} \mathcal{T}_k(A_{\textrm{conv}}) \cup k \right).
\end{equation} 
Naturally, this is equivalent to \eqref{eq:local_hull_convex} for a single specified kernel point. In Fig. \ref{fig:desired_kernel_ellipse1} the case with an ellipse and two specified kernel points is again considered and $SH_{\textnormal{ker}K}(A)$ is depicted. As the resulting shape is convex, it follows that $K \subset \textnormal{ker}\left(SH_{\textnormal{ker}K}(A)\right)$.

\begin{figure}
    \centering
    \begin{subfigure}[t]{0.49\linewidth}
        \begin{center}  
            \input{figures/desired_kernel_ellipse2.pgf}
        \end{center}
    \caption{Union of the two starshaped hulls generated at the kernel points separately, i.e. $\bigcup_{k\in K}SH_k(A)$.}
    \label{fig:desired_kernel_ellipse2}
    \end{subfigure}
    \hfill
    \begin{subfigure}[t]{0.49\linewidth}
        \begin{center}  
            \input{figures/desired_kernel_ellipse1.pgf}
        \end{center}
    \caption{The starshaped hull of $A$ with specified kernel $K$, $SH_{\textnormal{ker}K}(A)$.}
    \label{fig:desired_kernel_ellipse1}
    \end{subfigure}
    \begin{subfigure}[t]{0.49\linewidth}
        \centering
        \includegraphics[width=\linewidth]{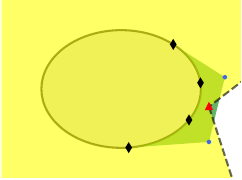}
    \caption{The excluding point $\bar{x}$, shown as red triangle, is contained by $SH_{\textnormal{ker}K}(A)$ even though $K$ is selected within the admissible kernel, $\textnormal{ad ker}(A,\{\bar{x}\})$, shown in yellow.}
    \label{fig:desired_kernel_ellipse3}
    \end{subfigure}
    \caption{An ellipse, $A$, shown in gray, with hull extended region shown in green given two kernel points, $K$, shown as blue dots.}
    \label{fig:desired_kernel_ellipse}
\end{figure}

Inspired by the approach in \cite{arkin_etal_98}, an algorithm to find the starshaped hull with a specified kernel for a polygon has been developed and is given in Algorithm \ref{alg:kernel_hull_polygon}. In Fig. \ref{fig:triangle_kernel} the starshaped hull with three affinely independent points, i.e. a triangle, is depicted for a polygon, $P$. 
In contrast to the starshaped hull w.r.t. $x$, shown in Fig. \ref{fig:singleton_kernel}, $SH_{\textnormal{ker}K}(P)$ is strictly starshaped in accordance with Property \ref{p:st_hull_ker_strict} and therefore only one boundary point exists in each direction from any interior point of $CH(K)$.

\begin{remark}
For a single specified kernel point, i.e. $K$ is a singleton set, Algorithm \ref{alg:kernel_hull_polygon} simplifies to the algorithm for generating the starshaped hull of a polygon w.r.t. a point presented in \cite{arkin_etal_98} with the instrumental distinction that all vertices, and not only convex vertices, are considered in the iteration.
\end{remark}

\begin{algorithm}
\caption{Finding starshaped hull with desired kernel for a polygon}
\label{alg:kernel_hull_polygon}
\SetKwInOut{Input}{Input}
\SetKwInOut{Output}{Output}
\SetKwProg{Init}{init}{}{}
\Input{A polygon $P$ and a finite point set $K$}
\Output{The minimum starshaped polygon $SH_{\textnormal{ker}K}(P)$ s.t. $P\subset SH_{\textnormal{ker}K}(P)$ and $K\subset \textnormal{ker}\left(SH_{\textnormal{ker}K}(P)\right)$}
Initialize $P^{\star}$ as empty list\;
$k^*, \Bar{v}, e_1, e_2 \gets$ centroid of $K$\; 
Order $P=v_1,v_2,...,v_N$ s.t. $v_1$ is the vertex with largest $x$-value and the order $v_Nv_1v_2$ is CCW\;
\ForEach{vertex $v \in P$}{%
  \If{$r(v, \overrightarrow{kv})$ does not intersect interior of $P,\ \forall k\in K$}{%
    Append $v$ to $P^{\star}$\;
    $v' \gets$ vertex preceding $v$ in $P$\;
    \eIf{$\exists k \in K \textnormal{ s.t. } l[k,v]$ intersects $l[e_1,e_2]$}{%
        $e_1 \gets$ closest intersection to $e_2$ of $l[k,v]$ and $l[e_1,e_2],\ \forall k\in K$\;
    }{%
        \ForEach{$k \in K$}{%
            \eIf{$l[k,v]$ intersects interior of $P$}{%
                $u \gets$ last intersection of $l[k,v]$ and $P$\;
                \If{$r(u,\overrightarrow{k'v})$ does not intersect interior of $P,\ \forall k' \in K, k'\neq k$}{%
                    \If{$\exists k' \in K, k'\neq k \textnormal{ s.t. } l[k',\Bar{v}]$ intersects $l[u,v]$}{%
                        $u \gets$ intersection of $l[k',\Bar{v}]$ and $l[u,v]$;
                    }
                    Append $u$ to $P^{\star}$\;
                    $e_1 \gets u$\;
                    $e_2 \gets v$\;
                    \If{$uvv'$ is CCW}{%
                        Swap last two elements of $P^{\star}$\;
                    }
                }
            }{%
                \If{$r(k,\overrightarrow{k'v})$ does not intersect interior of $P,\ \forall k' \in K, k'\neq k$}{%
                    Append $k$ to $P^{\star}$\;
                    $e_1 \gets k$\;
                    $e_2 \gets v$\;
                    \If{$kvv'$ is CCW}{%
                        Swap last two elements of $P^{\star}$\;
                    }
                }
            }
        }
    }
  $\Bar{v} \gets $ last element of $P^{\star}$\;
  }
}
\ForEach{consecutive vertices $v, v' \in P^{\star}$}{%
    \If{$\exists k \in K \textnormal{ s.t. } kvv'$ is CW}{%
        Insert $k$ in $P^{\star}$ between $v$ and $v'$\;
    }
}
\KwRet{$P^{\star}$}\;
\end{algorithm}

\subsection{Global starshaped hull (*)}
For the sake of completeness, we here introduce the concept of global starshaped hull. Readers that are interested in the application of the previously introduced concepts can skip this section as the global starshaped hull is not used in the proposed algorithm.
The starshaped hull of $A$ w.r.t. $x$ as defined in Definition \ref{d:st_hull} does not only consider the set to enclose but also depends on w.r.t. which point the hull is generated. Thus, in contrast to the convex hull, it is not unique for a set. Moreover, it is in general not the smallest starshaped set which contains $A$. The global starshaped hull, defined in the following, is in these aspects more closely related to the convex hull.
\begin{definition}
Let $A\subset \mathbb{R}^n$. The \textit{global starshaped hull of $A$}, denoted $SH(A)$, is defined as the smallest starshaped set such that $A\subset SH(A)$.
\label{d:st_hull_global}
\end{definition}
Obviously, there is a close relation to the starshaped hull as defined in Definition \ref{d:st_hull}, and equivalently to Definition \ref{d:st_hull_global} the global starshaped hull can be given as 
\begin{equation}
\label{eq:st_hull_min}
    SH(A) = \min_{x\in \mathbb{R}^n} \lambda(SH_x(A)).
\end{equation}
where $\lambda(\cdot)$ denotes the Lesbegue measure.
The definition for the global starshaped hull coincides with the minimum-area starshaped hull treated in \cite{arkin_etal_98} for the special case of 2-dimensional sets. 

When some points should be excluded from the hull, the admissible kernel can be used to restrict the search space for the optimization problem in \eqref{eq:st_hull_min}. That is, the global starshaped hull of $A$ excluding $\Bar{X}$, denoted $SH_{\neg\bar{X}}(A)$, can be defined as the smallest starshaped set which contains $A$ such that $SH_{\neg \Bar{X}}(A) \cap \Bar{X} = \emptyset$ and we have
\begin{equation}
    SH_{\neg \Bar{X}}(A) = \displaystyle\min_{x\in \textnormal{ad ker}(A,\bar{X})} \lambda(SH_x(A)).
\end{equation}
With a slight modification of the method to find the global starshaped hull for a polygon, $P$, presented in \cite{arkin_etal_98}, it may be applied to find $SH_{\neg\Bar{X}}(P)$. In particular, only cells from the cell decomposition that lie inside the admissible kernel should be considered in the minimization step.

\section{Forming disjoint star worlds}
\label{sec:forming_star_obstacles} 
Since any convex set with nonempty interior is strictly starshaped, we have from the formulation of Problem \ref{problem} that $\mathcal{F} = \mathcal{W} \setminus \bigcup_{\mathcal{O}_i\in\mathcal{O}}\mathcal{O}_i$ is a star world satisfying \eqref{eq:obstacle_inclusion}-\eqref{eq:obstacle_goal_exclusion} if each polygon obstacle is strictly starshaped. 
A simple solution in case no disjoint star world exists satisfying all conditions is hence to construct $\mathcal{O}^{\star}$ as
\begin{equation}
\label{eq:convex_decomposition}
    \mathcal{O}^{\star} = \{CD(\mathcal{O}_i) : \mathcal{O}_i \in \mathcal{O} \}
\end{equation}
where $CD(\cdot)$ is a convex decomposition of the considered set. For any convex obstacle we trivially have $CD(\mathcal{O}_i)=\mathcal{O}_i$ while for any concave polygon a convex decomposition can for instance be found using Hertel Mehlhorn algorithm \cite{hertel_mehlhorn_83}. 
In an attempt to find a disjoint star world solving Problem \ref{problem} we present Algorithm \ref{alg:starify_obstacles} which is discussed in the following section.

\subsection{Algorithm}
The fundamental idea of Algorithm \ref{alg:starify_obstacles} is to create clusters of obstacles by combining intersecting obstacles followed by a generation of starshaped obstacles which fully contain each cluster in an iterative manner. In Algorithm \ref{alg:starify_obstacles} the notation $Cl$ is used for the set of all clusters, $cl\subset\mathcal{O}$, containing one or several obstacles.
Each iteration in the algorithm can be divided into three main steps: computation of admissible kernels for each cluster, generation of starshaped obstacles containing each cluster, and clustering of intersecting starshaped obstacles. The number of clusters does in this way never increase and the algorithm converges whenever the number of clusters remains the same after an iteration. For the initial iteration, each single obstacle is considered as a separate cluster. 

The steps are illustrated in Fig. \ref{fig:starify_steps}. First, the admissible kernel is found for each cluster given the excluding points defined as the robot and goal position. \added{For space restriction, this is only} shown for one ellipse and the polygon in Fig. \ref{fig:step1_admker_ell} and \ref{fig:step1_admker_pol}\added{, respectively}. Next, in Fig. \ref{fig:step1_star}, new strictly starshaped obstacles are generated using the starshaped hull with three ($n=2$) specified kernel points. Since all starshaped obstacles intersect in Fig. \ref{fig:step1_star}, the obstacles are combined into one cluster in Fig. \ref{fig:step1_cluster}. As the number of clusters has been reduced, the process is iterated once again as illustrated in Figs. \ref{fig:step2_admker}-\ref{fig:step2_star}. Obviously, no change is made in the clustering stage when only one cluster is considered and the algorithm terminates.

\LinesNumbered
\let\oldnl\nl
\newcommand{\nonl}{\renewcommand{\nl}{\let\nl\oldnl}}

\begin{algorithm}
\SetKwRepeat{Do}{do}{while}
\caption{Forming disjoint star worlds}
\label{alg:starify_obstacles}
\SetKwInOut{Input}{Input}
\SetKwInOut{Output}{Output}
\SetKwProg{Init}{init}{}{}
\Input{A set of obstacles, $\mathcal{O}$, as in Problem \ref{problem}, the robot position, $x$, and goal position, $x_g$.}
\Output{A set of (disjoint) strictly starshaped obstacles, $\mathcal{O}^{\star}$.}
$Cl \gets \mathcal{O}$\;
\Do{$N_{Cl} \neq$ $|Cl|$}{
    $N_{Cl} \gets$ $|Cl|$\;
    $\mathcal{O}^{\star} \gets \emptyset$\;
    \ForEach{$cl \in Cl$}{
        \If{$\textnormal{ad ker}(cl, \{x, x_g\}) = \emptyset$ \tikzmark{kernel_top}}{ \label{l:empty_kernel}
                \KwRet{$\mathcal{O}^{\star}$ \textnormal{as in} \eqref{eq:convex_decomposition}}\label{l:return_intersect_stars}\; \label{l:premature_return}
        }
        \tikzmark{star_top}$K \gets n+1$ affinely independent\nonl\\\ points s.t. $CH(K) \subset \textnormal{ad ker}(cl, \{x, x_g\})$\; \label{l:kernel_points}
        Add $SH_{\textnormal{ker}K}(cl)$ to $\mathcal{O}^{\star}$\tikzmark{star_bottom}\; \label{l:star_hull}
    }
    $Cl^{\star} \gets$ clusters of $\mathcal{O}^{\star}$ s.t. no region in \ \ \tikzmark{right} \nonl\\\ 
    one cluster intersects with a region in \nonl\\\ 
    another\; \label{l:cluster_star}
    $Cl \gets \emptyset$\;
    \ForEach{$cl^{\star} \in Cl^{\star}$}{
        Add $cl=\{\mathcal{O}_i \in \mathcal{O} : \mathcal{O}_i \subset cl^{\star}\}$ to $Cl$ \tikzmark{cluster_bottom}\; \label{l:cluster_obs}
    }
}
\KwRet{$\mathcal{O}^{\star}$}\;
\end{algorithm}

\begin{figure}
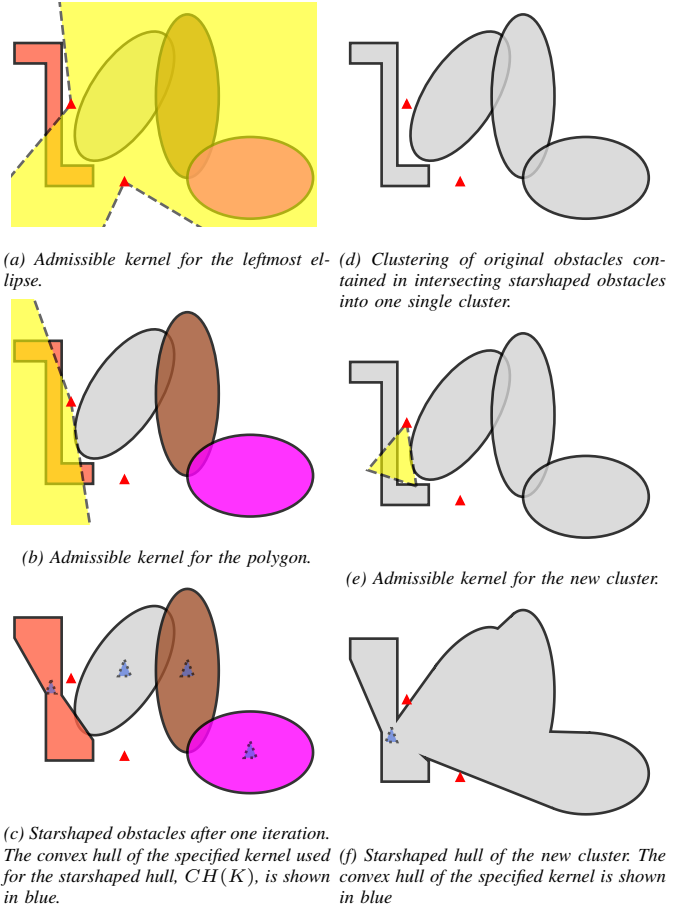

    \begin{minipage}{0.49\linewidth}
        \begin{subfigure}[t]{\linewidth}
\begingroup%
\makeatletter%
\begin{pgfpicture}%
\pgfpathrectangle{\pgfpointorigin}{\pgfqpoint{1.674053in}{1.255540in}}%
\pgfusepath{use as bounding box, clip}%
\begin{pgfscope}%
\pgfsetbuttcap%
\pgfsetmiterjoin%
\definecolor{currentfill}{rgb}{1.000000,1.000000,1.000000}%
\pgfsetfillcolor{currentfill}%
\pgfsetlinewidth{0.000000pt}%
\definecolor{currentstroke}{rgb}{1.000000,1.000000,1.000000}%
\pgfsetstrokecolor{currentstroke}%
\pgfsetdash{}{0pt}%
\pgfpathmoveto{\pgfqpoint{0.000000in}{0.000000in}}%
\pgfpathlineto{\pgfqpoint{1.674053in}{0.000000in}}%
\pgfpathlineto{\pgfqpoint{1.674053in}{1.255540in}}%
\pgfpathlineto{\pgfqpoint{0.000000in}{1.255540in}}%
\pgfpathclose%
\pgfusepath{fill}%
\end{pgfscope}%
\begin{pgfscope}%
\pgfpathrectangle{\pgfqpoint{0.039236in}{0.039236in}}{\pgfqpoint{1.595582in}{1.177068in}}%
\pgfusepath{clip}%
\pgfsetbuttcap%
\pgfsetmiterjoin%
\definecolor{currentfill}{rgb}{0.827451,0.827451,0.827451}%
\pgfsetfillcolor{currentfill}%
\pgfsetfillopacity{0.800000}%
\pgfsetlinewidth{1.003750pt}%
\definecolor{currentstroke}{rgb}{0.000000,0.000000,0.000000}%
\pgfsetstrokecolor{currentstroke}%
\pgfsetstrokeopacity{0.800000}%
\pgfsetdash{}{0pt}%
\pgfpathmoveto{\pgfqpoint{0.747724in}{0.572746in}}%
\pgfpathcurveto{\pgfqpoint{0.809418in}{0.653012in}}{\pgfqpoint{0.856338in}{0.745946in}}{\pgfqpoint{0.878150in}{0.831082in}}%
\pgfpathcurveto{\pgfqpoint{0.899962in}{0.916217in}}{\pgfqpoint{0.894885in}{0.986602in}}{\pgfqpoint{0.864038in}{1.026735in}}%
\pgfpathcurveto{\pgfqpoint{0.833192in}{1.066868in}}{\pgfqpoint{0.779093in}{1.073472in}}{\pgfqpoint{0.713657in}{1.045094in}}%
\pgfpathcurveto{\pgfqpoint{0.648221in}{1.016716in}}{\pgfqpoint{0.576790in}{0.955671in}}{\pgfqpoint{0.515096in}{0.875405in}}%
\pgfpathcurveto{\pgfqpoint{0.453402in}{0.795139in}}{\pgfqpoint{0.406483in}{0.702204in}}{\pgfqpoint{0.384671in}{0.617069in}}%
\pgfpathcurveto{\pgfqpoint{0.362859in}{0.531934in}}{\pgfqpoint{0.367935in}{0.461549in}}{\pgfqpoint{0.398782in}{0.421416in}}%
\pgfpathcurveto{\pgfqpoint{0.429629in}{0.381283in}}{\pgfqpoint{0.483728in}{0.374678in}}{\pgfqpoint{0.549164in}{0.403057in}}%
\pgfpathcurveto{\pgfqpoint{0.614600in}{0.431435in}}{\pgfqpoint{0.686031in}{0.492480in}}{\pgfqpoint{0.747724in}{0.572746in}}%
\pgfpathclose%
\pgfusepath{stroke,fill}%
\end{pgfscope}%
\begin{pgfscope}%
\pgfpathrectangle{\pgfqpoint{0.039236in}{0.039236in}}{\pgfqpoint{1.595582in}{1.177068in}}%
\pgfusepath{clip}%
\pgfsetbuttcap%
\pgfsetmiterjoin%
\definecolor{currentfill}{rgb}{0.627451,0.321569,0.176471}%
\pgfsetfillcolor{currentfill}%
\pgfsetfillopacity{0.800000}%
\pgfsetlinewidth{1.003750pt}%
\definecolor{currentstroke}{rgb}{0.000000,0.000000,0.000000}%
\pgfsetstrokecolor{currentstroke}%
\pgfsetstrokeopacity{0.800000}%
\pgfsetdash{}{0pt}%
\pgfpathmoveto{\pgfqpoint{1.124889in}{0.724075in}}%
\pgfpathcurveto{\pgfqpoint{1.124889in}{0.837589in}}{\pgfqpoint{1.107557in}{0.946468in}}{\pgfqpoint{1.076710in}{1.026735in}}%
\pgfpathcurveto{\pgfqpoint{1.045863in}{1.107001in}}{\pgfqpoint{1.004020in}{1.152100in}}{\pgfqpoint{0.960396in}{1.152100in}}%
\pgfpathcurveto{\pgfqpoint{0.916772in}{1.152100in}}{\pgfqpoint{0.874929in}{1.107001in}}{\pgfqpoint{0.844082in}{1.026735in}}%
\pgfpathcurveto{\pgfqpoint{0.813235in}{0.946468in}}{\pgfqpoint{0.795903in}{0.837589in}}{\pgfqpoint{0.795903in}{0.724075in}}%
\pgfpathcurveto{\pgfqpoint{0.795903in}{0.610562in}}{\pgfqpoint{0.813235in}{0.501682in}}{\pgfqpoint{0.844082in}{0.421416in}}%
\pgfpathcurveto{\pgfqpoint{0.874929in}{0.341150in}}{\pgfqpoint{0.916772in}{0.296051in}}{\pgfqpoint{0.960396in}{0.296051in}}%
\pgfpathcurveto{\pgfqpoint{1.004020in}{0.296051in}}{\pgfqpoint{1.045863in}{0.341150in}}{\pgfqpoint{1.076710in}{0.421416in}}%
\pgfpathcurveto{\pgfqpoint{1.107557in}{0.501682in}}{\pgfqpoint{1.124889in}{0.610562in}}{\pgfqpoint{1.124889in}{0.724075in}}%
\pgfpathclose%
\pgfusepath{stroke,fill}%
\end{pgfscope}%
\begin{pgfscope}%
\pgfpathrectangle{\pgfqpoint{0.039236in}{0.039236in}}{\pgfqpoint{1.595582in}{1.177068in}}%
\pgfusepath{clip}%
\pgfsetbuttcap%
\pgfsetmiterjoin%
\definecolor{currentfill}{rgb}{1.000000,0.000000,1.000000}%
\pgfsetfillcolor{currentfill}%
\pgfsetfillopacity{0.800000}%
\pgfsetlinewidth{1.003750pt}%
\definecolor{currentstroke}{rgb}{0.000000,0.000000,0.000000}%
\pgfsetstrokecolor{currentstroke}%
\pgfsetstrokeopacity{0.800000}%
\pgfsetdash{}{0pt}%
\pgfpathmoveto{\pgfqpoint{1.289382in}{0.082038in}}%
\pgfpathcurveto{\pgfqpoint{1.376630in}{0.082038in}}{\pgfqpoint{1.460316in}{0.104588in}}{\pgfqpoint{1.522010in}{0.144721in}}%
\pgfpathcurveto{\pgfqpoint{1.583704in}{0.184854in}}{\pgfqpoint{1.618368in}{0.239294in}}{\pgfqpoint{1.618368in}{0.296051in}}%
\pgfpathcurveto{\pgfqpoint{1.618368in}{0.352807in}}{\pgfqpoint{1.583704in}{0.407247in}}{\pgfqpoint{1.522010in}{0.447380in}}%
\pgfpathcurveto{\pgfqpoint{1.460316in}{0.487513in}}{\pgfqpoint{1.376630in}{0.510063in}}{\pgfqpoint{1.289382in}{0.510063in}}%
\pgfpathcurveto{\pgfqpoint{1.202134in}{0.510063in}}{\pgfqpoint{1.118448in}{0.487513in}}{\pgfqpoint{1.056754in}{0.447380in}}%
\pgfpathcurveto{\pgfqpoint{0.995060in}{0.407247in}}{\pgfqpoint{0.960396in}{0.352807in}}{\pgfqpoint{0.960396in}{0.296051in}}%
\pgfpathcurveto{\pgfqpoint{0.960396in}{0.239294in}}{\pgfqpoint{0.995060in}{0.184854in}}{\pgfqpoint{1.056754in}{0.144721in}}%
\pgfpathcurveto{\pgfqpoint{1.118448in}{0.104588in}}{\pgfqpoint{1.202134in}{0.082038in}}{\pgfqpoint{1.289382in}{0.082038in}}%
\pgfpathclose%
\pgfusepath{stroke,fill}%
\end{pgfscope}%
\begin{pgfscope}%
\pgfpathrectangle{\pgfqpoint{0.039236in}{0.039236in}}{\pgfqpoint{1.595582in}{1.177068in}}%
\pgfusepath{clip}%
\pgfsetbuttcap%
\pgfsetmiterjoin%
\definecolor{currentfill}{rgb}{1.000000,0.388235,0.278431}%
\pgfsetfillcolor{currentfill}%
\pgfsetfillopacity{0.800000}%
\pgfsetlinewidth{1.003750pt}%
\definecolor{currentstroke}{rgb}{0.000000,0.000000,0.000000}%
\pgfsetstrokecolor{currentstroke}%
\pgfsetstrokeopacity{0.800000}%
\pgfsetdash{}{0pt}%
\pgfpathmoveto{\pgfqpoint{0.055685in}{1.002292in}}%
\pgfpathlineto{\pgfqpoint{0.302424in}{1.002292in}}%
\pgfpathlineto{\pgfqpoint{0.302424in}{0.360254in}}%
\pgfpathlineto{\pgfqpoint{0.466917in}{0.360254in}}%
\pgfpathlineto{\pgfqpoint{0.466917in}{0.253248in}}%
\pgfpathlineto{\pgfqpoint{0.220178in}{0.253248in}}%
\pgfpathlineto{\pgfqpoint{0.220178in}{0.895285in}}%
\pgfpathlineto{\pgfqpoint{0.055685in}{0.895285in}}%
\pgfpathclose%
\pgfusepath{stroke,fill}%
\end{pgfscope}%
\begin{pgfscope}%
\pgfpathrectangle{\pgfqpoint{0.039236in}{0.039236in}}{\pgfqpoint{1.595582in}{1.177068in}}%
\pgfusepath{clip}%
\pgfsetbuttcap%
\pgfsetmiterjoin%
\definecolor{currentfill}{rgb}{1.000000,1.000000,0.000000}%
\pgfsetfillcolor{currentfill}%
\pgfsetfillopacity{0.600000}%
\pgfsetlinewidth{1.003750pt}%
\definecolor{currentstroke}{rgb}{0.000000,0.000000,0.000000}%
\pgfsetstrokecolor{currentstroke}%
\pgfsetstrokeopacity{0.600000}%
\pgfsetdash{{3.700000pt}{1.600000pt}}{0.000000pt}%
\pgfpathmoveto{\pgfqpoint{0.351772in}{0.681273in}}%
\pgfpathlineto{\pgfqpoint{0.267936in}{1.430316in}}%
\pgfpathlineto{\pgfqpoint{1.799310in}{1.430316in}}%
\pgfpathlineto{\pgfqpoint{1.799310in}{-0.174777in}}%
\pgfpathlineto{\pgfqpoint{1.382048in}{-0.174777in}}%
\pgfpathlineto{\pgfqpoint{0.631410in}{0.274649in}}%
\pgfpathlineto{\pgfqpoint{0.423340in}{-0.174777in}}%
\pgfpathlineto{\pgfqpoint{-0.125257in}{-0.174777in}}%
\pgfpathlineto{\pgfqpoint{-0.125257in}{0.111207in}}%
\pgfpathclose%
\pgfusepath{stroke,fill}%
\end{pgfscope}%
\begin{pgfscope}%
\pgfpathrectangle{\pgfqpoint{0.039236in}{0.039236in}}{\pgfqpoint{1.595582in}{1.177068in}}%
\pgfusepath{clip}%
\pgfsetbuttcap%
\pgfsetmiterjoin%
\definecolor{currentfill}{rgb}{1.000000,0.000000,0.000000}%
\pgfsetfillcolor{currentfill}%
\pgfsetlinewidth{1.003750pt}%
\definecolor{currentstroke}{rgb}{1.000000,0.000000,0.000000}%
\pgfsetstrokecolor{currentstroke}%
\pgfsetdash{}{0pt}%
\pgfsys@defobject{currentmarker}{\pgfqpoint{-0.016667in}{-0.016667in}}{\pgfqpoint{0.016667in}{0.016667in}}{%
\pgfpathmoveto{\pgfqpoint{0.000000in}{0.016667in}}%
\pgfpathlineto{\pgfqpoint{-0.016667in}{-0.016667in}}%
\pgfpathlineto{\pgfqpoint{0.016667in}{-0.016667in}}%
\pgfpathclose%
\pgfusepath{stroke,fill}%
}%
\begin{pgfscope}%
\pgfsys@transformshift{0.351772in}{0.681273in}%
\pgfsys@useobject{currentmarker}{}%
\end{pgfscope}%
\begin{pgfscope}%
\pgfsys@transformshift{0.631410in}{0.274649in}%
\pgfsys@useobject{currentmarker}{}%
\end{pgfscope}%
\end{pgfscope}%
\end{pgfpicture}%
\makeatother%
\endgroup%
             \caption{Admissible kernel for the leftmost ellipse.}
             \label{fig:step1_admker_ell}
        \end{subfigure}
        \hfill
        \begin{subfigure}[t]{\linewidth}
\begingroup%
\makeatletter%
\begin{pgfpicture}%
\pgfpathrectangle{\pgfpointorigin}{\pgfqpoint{1.674053in}{1.255540in}}%
\pgfusepath{use as bounding box, clip}%
\begin{pgfscope}%
\pgfsetbuttcap%
\pgfsetmiterjoin%
\definecolor{currentfill}{rgb}{1.000000,1.000000,1.000000}%
\pgfsetfillcolor{currentfill}%
\pgfsetlinewidth{0.000000pt}%
\definecolor{currentstroke}{rgb}{1.000000,1.000000,1.000000}%
\pgfsetstrokecolor{currentstroke}%
\pgfsetdash{}{0pt}%
\pgfpathmoveto{\pgfqpoint{0.000000in}{0.000000in}}%
\pgfpathlineto{\pgfqpoint{1.674053in}{0.000000in}}%
\pgfpathlineto{\pgfqpoint{1.674053in}{1.255540in}}%
\pgfpathlineto{\pgfqpoint{0.000000in}{1.255540in}}%
\pgfpathclose%
\pgfusepath{fill}%
\end{pgfscope}%
\begin{pgfscope}%
\pgfpathrectangle{\pgfqpoint{0.039236in}{0.039236in}}{\pgfqpoint{1.595582in}{1.177068in}}%
\pgfusepath{clip}%
\pgfsetbuttcap%
\pgfsetmiterjoin%
\definecolor{currentfill}{rgb}{0.827451,0.827451,0.827451}%
\pgfsetfillcolor{currentfill}%
\pgfsetfillopacity{0.800000}%
\pgfsetlinewidth{1.003750pt}%
\definecolor{currentstroke}{rgb}{0.000000,0.000000,0.000000}%
\pgfsetstrokecolor{currentstroke}%
\pgfsetstrokeopacity{0.800000}%
\pgfsetdash{}{0pt}%
\pgfpathmoveto{\pgfqpoint{0.747724in}{0.572746in}}%
\pgfpathcurveto{\pgfqpoint{0.809418in}{0.653012in}}{\pgfqpoint{0.856338in}{0.745946in}}{\pgfqpoint{0.878150in}{0.831082in}}%
\pgfpathcurveto{\pgfqpoint{0.899962in}{0.916217in}}{\pgfqpoint{0.894885in}{0.986602in}}{\pgfqpoint{0.864038in}{1.026735in}}%
\pgfpathcurveto{\pgfqpoint{0.833192in}{1.066868in}}{\pgfqpoint{0.779093in}{1.073472in}}{\pgfqpoint{0.713657in}{1.045094in}}%
\pgfpathcurveto{\pgfqpoint{0.648221in}{1.016716in}}{\pgfqpoint{0.576790in}{0.955671in}}{\pgfqpoint{0.515096in}{0.875405in}}%
\pgfpathcurveto{\pgfqpoint{0.453402in}{0.795139in}}{\pgfqpoint{0.406483in}{0.702204in}}{\pgfqpoint{0.384671in}{0.617069in}}%
\pgfpathcurveto{\pgfqpoint{0.362859in}{0.531934in}}{\pgfqpoint{0.367935in}{0.461549in}}{\pgfqpoint{0.398782in}{0.421416in}}%
\pgfpathcurveto{\pgfqpoint{0.429629in}{0.381283in}}{\pgfqpoint{0.483728in}{0.374678in}}{\pgfqpoint{0.549164in}{0.403057in}}%
\pgfpathcurveto{\pgfqpoint{0.614600in}{0.431435in}}{\pgfqpoint{0.686031in}{0.492480in}}{\pgfqpoint{0.747724in}{0.572746in}}%
\pgfpathclose%
\pgfusepath{stroke,fill}%
\end{pgfscope}%
\begin{pgfscope}%
\pgfpathrectangle{\pgfqpoint{0.039236in}{0.039236in}}{\pgfqpoint{1.595582in}{1.177068in}}%
\pgfusepath{clip}%
\pgfsetbuttcap%
\pgfsetmiterjoin%
\definecolor{currentfill}{rgb}{0.627451,0.321569,0.176471}%
\pgfsetfillcolor{currentfill}%
\pgfsetfillopacity{0.800000}%
\pgfsetlinewidth{1.003750pt}%
\definecolor{currentstroke}{rgb}{0.000000,0.000000,0.000000}%
\pgfsetstrokecolor{currentstroke}%
\pgfsetstrokeopacity{0.800000}%
\pgfsetdash{}{0pt}%
\pgfpathmoveto{\pgfqpoint{1.124889in}{0.724075in}}%
\pgfpathcurveto{\pgfqpoint{1.124889in}{0.837589in}}{\pgfqpoint{1.107557in}{0.946468in}}{\pgfqpoint{1.076710in}{1.026735in}}%
\pgfpathcurveto{\pgfqpoint{1.045863in}{1.107001in}}{\pgfqpoint{1.004020in}{1.152100in}}{\pgfqpoint{0.960396in}{1.152100in}}%
\pgfpathcurveto{\pgfqpoint{0.916772in}{1.152100in}}{\pgfqpoint{0.874929in}{1.107001in}}{\pgfqpoint{0.844082in}{1.026735in}}%
\pgfpathcurveto{\pgfqpoint{0.813235in}{0.946468in}}{\pgfqpoint{0.795903in}{0.837589in}}{\pgfqpoint{0.795903in}{0.724075in}}%
\pgfpathcurveto{\pgfqpoint{0.795903in}{0.610562in}}{\pgfqpoint{0.813235in}{0.501682in}}{\pgfqpoint{0.844082in}{0.421416in}}%
\pgfpathcurveto{\pgfqpoint{0.874929in}{0.341150in}}{\pgfqpoint{0.916772in}{0.296051in}}{\pgfqpoint{0.960396in}{0.296051in}}%
\pgfpathcurveto{\pgfqpoint{1.004020in}{0.296051in}}{\pgfqpoint{1.045863in}{0.341150in}}{\pgfqpoint{1.076710in}{0.421416in}}%
\pgfpathcurveto{\pgfqpoint{1.107557in}{0.501682in}}{\pgfqpoint{1.124889in}{0.610562in}}{\pgfqpoint{1.124889in}{0.724075in}}%
\pgfpathclose%
\pgfusepath{stroke,fill}%
\end{pgfscope}%
\begin{pgfscope}%
\pgfpathrectangle{\pgfqpoint{0.039236in}{0.039236in}}{\pgfqpoint{1.595582in}{1.177068in}}%
\pgfusepath{clip}%
\pgfsetbuttcap%
\pgfsetmiterjoin%
\definecolor{currentfill}{rgb}{1.000000,0.000000,1.000000}%
\pgfsetfillcolor{currentfill}%
\pgfsetfillopacity{0.800000}%
\pgfsetlinewidth{1.003750pt}%
\definecolor{currentstroke}{rgb}{0.000000,0.000000,0.000000}%
\pgfsetstrokecolor{currentstroke}%
\pgfsetstrokeopacity{0.800000}%
\pgfsetdash{}{0pt}%
\pgfpathmoveto{\pgfqpoint{1.289382in}{0.082038in}}%
\pgfpathcurveto{\pgfqpoint{1.376630in}{0.082038in}}{\pgfqpoint{1.460316in}{0.104588in}}{\pgfqpoint{1.522010in}{0.144721in}}%
\pgfpathcurveto{\pgfqpoint{1.583704in}{0.184854in}}{\pgfqpoint{1.618368in}{0.239294in}}{\pgfqpoint{1.618368in}{0.296051in}}%
\pgfpathcurveto{\pgfqpoint{1.618368in}{0.352807in}}{\pgfqpoint{1.583704in}{0.407247in}}{\pgfqpoint{1.522010in}{0.447380in}}%
\pgfpathcurveto{\pgfqpoint{1.460316in}{0.487513in}}{\pgfqpoint{1.376630in}{0.510063in}}{\pgfqpoint{1.289382in}{0.510063in}}%
\pgfpathcurveto{\pgfqpoint{1.202134in}{0.510063in}}{\pgfqpoint{1.118448in}{0.487513in}}{\pgfqpoint{1.056754in}{0.447380in}}%
\pgfpathcurveto{\pgfqpoint{0.995060in}{0.407247in}}{\pgfqpoint{0.960396in}{0.352807in}}{\pgfqpoint{0.960396in}{0.296051in}}%
\pgfpathcurveto{\pgfqpoint{0.960396in}{0.239294in}}{\pgfqpoint{0.995060in}{0.184854in}}{\pgfqpoint{1.056754in}{0.144721in}}%
\pgfpathcurveto{\pgfqpoint{1.118448in}{0.104588in}}{\pgfqpoint{1.202134in}{0.082038in}}{\pgfqpoint{1.289382in}{0.082038in}}%
\pgfpathclose%
\pgfusepath{stroke,fill}%
\end{pgfscope}%
\begin{pgfscope}%
\pgfpathrectangle{\pgfqpoint{0.039236in}{0.039236in}}{\pgfqpoint{1.595582in}{1.177068in}}%
\pgfusepath{clip}%
\pgfsetbuttcap%
\pgfsetmiterjoin%
\definecolor{currentfill}{rgb}{1.000000,0.388235,0.278431}%
\pgfsetfillcolor{currentfill}%
\pgfsetfillopacity{0.800000}%
\pgfsetlinewidth{1.003750pt}%
\definecolor{currentstroke}{rgb}{0.000000,0.000000,0.000000}%
\pgfsetstrokecolor{currentstroke}%
\pgfsetstrokeopacity{0.800000}%
\pgfsetdash{}{0pt}%
\pgfpathmoveto{\pgfqpoint{0.055685in}{1.002292in}}%
\pgfpathlineto{\pgfqpoint{0.302424in}{1.002292in}}%
\pgfpathlineto{\pgfqpoint{0.302424in}{0.360254in}}%
\pgfpathlineto{\pgfqpoint{0.466917in}{0.360254in}}%
\pgfpathlineto{\pgfqpoint{0.466917in}{0.253248in}}%
\pgfpathlineto{\pgfqpoint{0.220178in}{0.253248in}}%
\pgfpathlineto{\pgfqpoint{0.220178in}{0.895285in}}%
\pgfpathlineto{\pgfqpoint{0.055685in}{0.895285in}}%
\pgfpathclose%
\pgfusepath{stroke,fill}%
\end{pgfscope}%
\begin{pgfscope}%
\pgfpathrectangle{\pgfqpoint{0.039236in}{0.039236in}}{\pgfqpoint{1.595582in}{1.177068in}}%
\pgfusepath{clip}%
\pgfsetbuttcap%
\pgfsetmiterjoin%
\definecolor{currentfill}{rgb}{1.000000,1.000000,0.000000}%
\pgfsetfillcolor{currentfill}%
\pgfsetfillopacity{0.600000}%
\pgfsetlinewidth{1.003750pt}%
\definecolor{currentstroke}{rgb}{0.000000,0.000000,0.000000}%
\pgfsetstrokecolor{currentstroke}%
\pgfsetstrokeopacity{0.600000}%
\pgfsetdash{{3.700000pt}{1.600000pt}}{0.000000pt}%
\pgfpathmoveto{\pgfqpoint{0.351772in}{0.681273in}}%
\pgfpathlineto{\pgfqpoint{0.483367in}{-0.174777in}}%
\pgfpathlineto{\pgfqpoint{-0.125257in}{-0.174777in}}%
\pgfpathlineto{\pgfqpoint{-0.125257in}{1.430316in}}%
\pgfpathlineto{\pgfqpoint{0.083100in}{1.430316in}}%
\pgfpathclose%
\pgfusepath{stroke,fill}%
\end{pgfscope}%
\begin{pgfscope}%
\pgfpathrectangle{\pgfqpoint{0.039236in}{0.039236in}}{\pgfqpoint{1.595582in}{1.177068in}}%
\pgfusepath{clip}%
\pgfsetbuttcap%
\pgfsetmiterjoin%
\definecolor{currentfill}{rgb}{1.000000,0.000000,0.000000}%
\pgfsetfillcolor{currentfill}%
\pgfsetlinewidth{1.003750pt}%
\definecolor{currentstroke}{rgb}{1.000000,0.000000,0.000000}%
\pgfsetstrokecolor{currentstroke}%
\pgfsetdash{}{0pt}%
\pgfsys@defobject{currentmarker}{\pgfqpoint{-0.016667in}{-0.016667in}}{\pgfqpoint{0.016667in}{0.016667in}}{%
\pgfpathmoveto{\pgfqpoint{0.000000in}{0.016667in}}%
\pgfpathlineto{\pgfqpoint{-0.016667in}{-0.016667in}}%
\pgfpathlineto{\pgfqpoint{0.016667in}{-0.016667in}}%
\pgfpathclose%
\pgfusepath{stroke,fill}%
}%
\begin{pgfscope}%
\pgfsys@transformshift{0.351772in}{0.681273in}%
\pgfsys@useobject{currentmarker}{}%
\end{pgfscope}%
\begin{pgfscope}%
\pgfsys@transformshift{0.631410in}{0.274649in}%
\pgfsys@useobject{currentmarker}{}%
\end{pgfscope}%
\end{pgfscope}%
\end{pgfpicture}%
\makeatother%
\endgroup%
             \caption{Admissible kernel for the polygon.}
             \label{fig:step1_admker_pol}
        \end{subfigure}
        \hfill
        \begin{subfigure}[t]{\linewidth}
            \input{figures/step1_star.pgf}
             \caption{Starshaped obstacles after one iteration. The convex hull of the specified kernel used for the starshaped hull, $CH(K)$, is shown in blue.}
             \label{fig:step1_star}
        \end{subfigure}
    \end{minipage}
    \begin{minipage}{0.49\linewidth}
        \begin{subfigure}[t]{\linewidth}
\begingroup%
\makeatletter%
\begin{pgfpicture}%
\pgfpathrectangle{\pgfpointorigin}{\pgfqpoint{1.674053in}{1.255540in}}%
\pgfusepath{use as bounding box, clip}%
\begin{pgfscope}%
\pgfsetbuttcap%
\pgfsetmiterjoin%
\definecolor{currentfill}{rgb}{1.000000,1.000000,1.000000}%
\pgfsetfillcolor{currentfill}%
\pgfsetlinewidth{0.000000pt}%
\definecolor{currentstroke}{rgb}{1.000000,1.000000,1.000000}%
\pgfsetstrokecolor{currentstroke}%
\pgfsetdash{}{0pt}%
\pgfpathmoveto{\pgfqpoint{0.000000in}{0.000000in}}%
\pgfpathlineto{\pgfqpoint{1.674053in}{0.000000in}}%
\pgfpathlineto{\pgfqpoint{1.674053in}{1.255540in}}%
\pgfpathlineto{\pgfqpoint{0.000000in}{1.255540in}}%
\pgfpathclose%
\pgfusepath{fill}%
\end{pgfscope}%
\begin{pgfscope}%
\pgfpathrectangle{\pgfqpoint{0.039236in}{0.039236in}}{\pgfqpoint{1.595582in}{1.177068in}}%
\pgfusepath{clip}%
\pgfsetbuttcap%
\pgfsetmiterjoin%
\definecolor{currentfill}{rgb}{0.827451,0.827451,0.827451}%
\pgfsetfillcolor{currentfill}%
\pgfsetfillopacity{0.800000}%
\pgfsetlinewidth{1.003750pt}%
\definecolor{currentstroke}{rgb}{0.000000,0.000000,0.000000}%
\pgfsetstrokecolor{currentstroke}%
\pgfsetstrokeopacity{0.800000}%
\pgfsetdash{}{0pt}%
\pgfpathmoveto{\pgfqpoint{0.747724in}{0.572746in}}%
\pgfpathcurveto{\pgfqpoint{0.809418in}{0.653012in}}{\pgfqpoint{0.856338in}{0.745946in}}{\pgfqpoint{0.878150in}{0.831082in}}%
\pgfpathcurveto{\pgfqpoint{0.899962in}{0.916217in}}{\pgfqpoint{0.894885in}{0.986602in}}{\pgfqpoint{0.864038in}{1.026735in}}%
\pgfpathcurveto{\pgfqpoint{0.833192in}{1.066868in}}{\pgfqpoint{0.779093in}{1.073472in}}{\pgfqpoint{0.713657in}{1.045094in}}%
\pgfpathcurveto{\pgfqpoint{0.648221in}{1.016716in}}{\pgfqpoint{0.576790in}{0.955671in}}{\pgfqpoint{0.515096in}{0.875405in}}%
\pgfpathcurveto{\pgfqpoint{0.453402in}{0.795139in}}{\pgfqpoint{0.406483in}{0.702204in}}{\pgfqpoint{0.384671in}{0.617069in}}%
\pgfpathcurveto{\pgfqpoint{0.362859in}{0.531934in}}{\pgfqpoint{0.367935in}{0.461549in}}{\pgfqpoint{0.398782in}{0.421416in}}%
\pgfpathcurveto{\pgfqpoint{0.429629in}{0.381283in}}{\pgfqpoint{0.483728in}{0.374678in}}{\pgfqpoint{0.549164in}{0.403057in}}%
\pgfpathcurveto{\pgfqpoint{0.614600in}{0.431435in}}{\pgfqpoint{0.686031in}{0.492480in}}{\pgfqpoint{0.747724in}{0.572746in}}%
\pgfpathclose%
\pgfusepath{stroke,fill}%
\end{pgfscope}%
\begin{pgfscope}%
\pgfpathrectangle{\pgfqpoint{0.039236in}{0.039236in}}{\pgfqpoint{1.595582in}{1.177068in}}%
\pgfusepath{clip}%
\pgfsetbuttcap%
\pgfsetmiterjoin%
\definecolor{currentfill}{rgb}{0.827451,0.827451,0.827451}%
\pgfsetfillcolor{currentfill}%
\pgfsetfillopacity{0.800000}%
\pgfsetlinewidth{1.003750pt}%
\definecolor{currentstroke}{rgb}{0.000000,0.000000,0.000000}%
\pgfsetstrokecolor{currentstroke}%
\pgfsetstrokeopacity{0.800000}%
\pgfsetdash{}{0pt}%
\pgfpathmoveto{\pgfqpoint{1.124889in}{0.724075in}}%
\pgfpathcurveto{\pgfqpoint{1.124889in}{0.837589in}}{\pgfqpoint{1.107557in}{0.946468in}}{\pgfqpoint{1.076710in}{1.026735in}}%
\pgfpathcurveto{\pgfqpoint{1.045863in}{1.107001in}}{\pgfqpoint{1.004020in}{1.152100in}}{\pgfqpoint{0.960396in}{1.152100in}}%
\pgfpathcurveto{\pgfqpoint{0.916772in}{1.152100in}}{\pgfqpoint{0.874929in}{1.107001in}}{\pgfqpoint{0.844082in}{1.026735in}}%
\pgfpathcurveto{\pgfqpoint{0.813235in}{0.946468in}}{\pgfqpoint{0.795903in}{0.837589in}}{\pgfqpoint{0.795903in}{0.724075in}}%
\pgfpathcurveto{\pgfqpoint{0.795903in}{0.610562in}}{\pgfqpoint{0.813235in}{0.501682in}}{\pgfqpoint{0.844082in}{0.421416in}}%
\pgfpathcurveto{\pgfqpoint{0.874929in}{0.341150in}}{\pgfqpoint{0.916772in}{0.296051in}}{\pgfqpoint{0.960396in}{0.296051in}}%
\pgfpathcurveto{\pgfqpoint{1.004020in}{0.296051in}}{\pgfqpoint{1.045863in}{0.341150in}}{\pgfqpoint{1.076710in}{0.421416in}}%
\pgfpathcurveto{\pgfqpoint{1.107557in}{0.501682in}}{\pgfqpoint{1.124889in}{0.610562in}}{\pgfqpoint{1.124889in}{0.724075in}}%
\pgfpathclose%
\pgfusepath{stroke,fill}%
\end{pgfscope}%
\begin{pgfscope}%
\pgfpathrectangle{\pgfqpoint{0.039236in}{0.039236in}}{\pgfqpoint{1.595582in}{1.177068in}}%
\pgfusepath{clip}%
\pgfsetbuttcap%
\pgfsetmiterjoin%
\definecolor{currentfill}{rgb}{0.827451,0.827451,0.827451}%
\pgfsetfillcolor{currentfill}%
\pgfsetfillopacity{0.800000}%
\pgfsetlinewidth{1.003750pt}%
\definecolor{currentstroke}{rgb}{0.000000,0.000000,0.000000}%
\pgfsetstrokecolor{currentstroke}%
\pgfsetstrokeopacity{0.800000}%
\pgfsetdash{}{0pt}%
\pgfpathmoveto{\pgfqpoint{1.289382in}{0.082038in}}%
\pgfpathcurveto{\pgfqpoint{1.376630in}{0.082038in}}{\pgfqpoint{1.460316in}{0.104588in}}{\pgfqpoint{1.522010in}{0.144721in}}%
\pgfpathcurveto{\pgfqpoint{1.583704in}{0.184854in}}{\pgfqpoint{1.618368in}{0.239294in}}{\pgfqpoint{1.618368in}{0.296051in}}%
\pgfpathcurveto{\pgfqpoint{1.618368in}{0.352807in}}{\pgfqpoint{1.583704in}{0.407247in}}{\pgfqpoint{1.522010in}{0.447380in}}%
\pgfpathcurveto{\pgfqpoint{1.460316in}{0.487513in}}{\pgfqpoint{1.376630in}{0.510063in}}{\pgfqpoint{1.289382in}{0.510063in}}%
\pgfpathcurveto{\pgfqpoint{1.202134in}{0.510063in}}{\pgfqpoint{1.118448in}{0.487513in}}{\pgfqpoint{1.056754in}{0.447380in}}%
\pgfpathcurveto{\pgfqpoint{0.995060in}{0.407247in}}{\pgfqpoint{0.960396in}{0.352807in}}{\pgfqpoint{0.960396in}{0.296051in}}%
\pgfpathcurveto{\pgfqpoint{0.960396in}{0.239294in}}{\pgfqpoint{0.995060in}{0.184854in}}{\pgfqpoint{1.056754in}{0.144721in}}%
\pgfpathcurveto{\pgfqpoint{1.118448in}{0.104588in}}{\pgfqpoint{1.202134in}{0.082038in}}{\pgfqpoint{1.289382in}{0.082038in}}%
\pgfpathclose%
\pgfusepath{stroke,fill}%
\end{pgfscope}%
\begin{pgfscope}%
\pgfpathrectangle{\pgfqpoint{0.039236in}{0.039236in}}{\pgfqpoint{1.595582in}{1.177068in}}%
\pgfusepath{clip}%
\pgfsetbuttcap%
\pgfsetmiterjoin%
\definecolor{currentfill}{rgb}{0.827451,0.827451,0.827451}%
\pgfsetfillcolor{currentfill}%
\pgfsetfillopacity{0.800000}%
\pgfsetlinewidth{1.003750pt}%
\definecolor{currentstroke}{rgb}{0.000000,0.000000,0.000000}%
\pgfsetstrokecolor{currentstroke}%
\pgfsetstrokeopacity{0.800000}%
\pgfsetdash{}{0pt}%
\pgfpathmoveto{\pgfqpoint{0.055685in}{1.002292in}}%
\pgfpathlineto{\pgfqpoint{0.302424in}{1.002292in}}%
\pgfpathlineto{\pgfqpoint{0.302424in}{0.360254in}}%
\pgfpathlineto{\pgfqpoint{0.466917in}{0.360254in}}%
\pgfpathlineto{\pgfqpoint{0.466917in}{0.253248in}}%
\pgfpathlineto{\pgfqpoint{0.220178in}{0.253248in}}%
\pgfpathlineto{\pgfqpoint{0.220178in}{0.895285in}}%
\pgfpathlineto{\pgfqpoint{0.055685in}{0.895285in}}%
\pgfpathclose%
\pgfusepath{stroke,fill}%
\end{pgfscope}%
\begin{pgfscope}%
\pgfpathrectangle{\pgfqpoint{0.039236in}{0.039236in}}{\pgfqpoint{1.595582in}{1.177068in}}%
\pgfusepath{clip}%
\pgfsetbuttcap%
\pgfsetmiterjoin%
\definecolor{currentfill}{rgb}{1.000000,0.000000,0.000000}%
\pgfsetfillcolor{currentfill}%
\pgfsetlinewidth{1.003750pt}%
\definecolor{currentstroke}{rgb}{1.000000,0.000000,0.000000}%
\pgfsetstrokecolor{currentstroke}%
\pgfsetdash{}{0pt}%
\pgfsys@defobject{currentmarker}{\pgfqpoint{-0.016667in}{-0.016667in}}{\pgfqpoint{0.016667in}{0.016667in}}{%
\pgfpathmoveto{\pgfqpoint{0.000000in}{0.016667in}}%
\pgfpathlineto{\pgfqpoint{-0.016667in}{-0.016667in}}%
\pgfpathlineto{\pgfqpoint{0.016667in}{-0.016667in}}%
\pgfpathclose%
\pgfusepath{stroke,fill}%
}%
\begin{pgfscope}%
\pgfsys@transformshift{0.351772in}{0.681273in}%
\pgfsys@useobject{currentmarker}{}%
\end{pgfscope}%
\begin{pgfscope}%
\pgfsys@transformshift{0.631410in}{0.274649in}%
\pgfsys@useobject{currentmarker}{}%
\end{pgfscope}%
\end{pgfscope}%
\end{pgfpicture}%
\makeatother%
\endgroup%
             \caption{Clustering of original obstacles contained in intersecting starshaped obstacles into one single cluster.}
             \label{fig:step1_cluster}
        \end{subfigure}
        \hfill
        \begin{subfigure}[t]{\linewidth}
\begingroup%
\makeatletter%
\begin{pgfpicture}%
\pgfpathrectangle{\pgfpointorigin}{\pgfqpoint{1.674053in}{1.255540in}}%
\pgfusepath{use as bounding box, clip}%
\begin{pgfscope}%
\pgfsetbuttcap%
\pgfsetmiterjoin%
\definecolor{currentfill}{rgb}{1.000000,1.000000,1.000000}%
\pgfsetfillcolor{currentfill}%
\pgfsetlinewidth{0.000000pt}%
\definecolor{currentstroke}{rgb}{1.000000,1.000000,1.000000}%
\pgfsetstrokecolor{currentstroke}%
\pgfsetdash{}{0pt}%
\pgfpathmoveto{\pgfqpoint{0.000000in}{0.000000in}}%
\pgfpathlineto{\pgfqpoint{1.674053in}{0.000000in}}%
\pgfpathlineto{\pgfqpoint{1.674053in}{1.255540in}}%
\pgfpathlineto{\pgfqpoint{0.000000in}{1.255540in}}%
\pgfpathclose%
\pgfusepath{fill}%
\end{pgfscope}%
\begin{pgfscope}%
\pgfpathrectangle{\pgfqpoint{0.039236in}{0.039236in}}{\pgfqpoint{1.595582in}{1.177068in}}%
\pgfusepath{clip}%
\pgfsetbuttcap%
\pgfsetmiterjoin%
\definecolor{currentfill}{rgb}{0.827451,0.827451,0.827451}%
\pgfsetfillcolor{currentfill}%
\pgfsetfillopacity{0.800000}%
\pgfsetlinewidth{1.003750pt}%
\definecolor{currentstroke}{rgb}{0.000000,0.000000,0.000000}%
\pgfsetstrokecolor{currentstroke}%
\pgfsetstrokeopacity{0.800000}%
\pgfsetdash{}{0pt}%
\pgfpathmoveto{\pgfqpoint{0.747724in}{0.572746in}}%
\pgfpathcurveto{\pgfqpoint{0.809418in}{0.653012in}}{\pgfqpoint{0.856338in}{0.745946in}}{\pgfqpoint{0.878150in}{0.831082in}}%
\pgfpathcurveto{\pgfqpoint{0.899962in}{0.916217in}}{\pgfqpoint{0.894885in}{0.986602in}}{\pgfqpoint{0.864038in}{1.026735in}}%
\pgfpathcurveto{\pgfqpoint{0.833192in}{1.066868in}}{\pgfqpoint{0.779093in}{1.073472in}}{\pgfqpoint{0.713657in}{1.045094in}}%
\pgfpathcurveto{\pgfqpoint{0.648221in}{1.016716in}}{\pgfqpoint{0.576790in}{0.955671in}}{\pgfqpoint{0.515096in}{0.875405in}}%
\pgfpathcurveto{\pgfqpoint{0.453402in}{0.795139in}}{\pgfqpoint{0.406483in}{0.702204in}}{\pgfqpoint{0.384671in}{0.617069in}}%
\pgfpathcurveto{\pgfqpoint{0.362859in}{0.531934in}}{\pgfqpoint{0.367935in}{0.461549in}}{\pgfqpoint{0.398782in}{0.421416in}}%
\pgfpathcurveto{\pgfqpoint{0.429629in}{0.381283in}}{\pgfqpoint{0.483728in}{0.374678in}}{\pgfqpoint{0.549164in}{0.403057in}}%
\pgfpathcurveto{\pgfqpoint{0.614600in}{0.431435in}}{\pgfqpoint{0.686031in}{0.492480in}}{\pgfqpoint{0.747724in}{0.572746in}}%
\pgfpathclose%
\pgfusepath{stroke,fill}%
\end{pgfscope}%
\begin{pgfscope}%
\pgfpathrectangle{\pgfqpoint{0.039236in}{0.039236in}}{\pgfqpoint{1.595582in}{1.177068in}}%
\pgfusepath{clip}%
\pgfsetbuttcap%
\pgfsetmiterjoin%
\definecolor{currentfill}{rgb}{0.827451,0.827451,0.827451}%
\pgfsetfillcolor{currentfill}%
\pgfsetfillopacity{0.800000}%
\pgfsetlinewidth{1.003750pt}%
\definecolor{currentstroke}{rgb}{0.000000,0.000000,0.000000}%
\pgfsetstrokecolor{currentstroke}%
\pgfsetstrokeopacity{0.800000}%
\pgfsetdash{}{0pt}%
\pgfpathmoveto{\pgfqpoint{1.124889in}{0.724075in}}%
\pgfpathcurveto{\pgfqpoint{1.124889in}{0.837589in}}{\pgfqpoint{1.107557in}{0.946468in}}{\pgfqpoint{1.076710in}{1.026735in}}%
\pgfpathcurveto{\pgfqpoint{1.045863in}{1.107001in}}{\pgfqpoint{1.004020in}{1.152100in}}{\pgfqpoint{0.960396in}{1.152100in}}%
\pgfpathcurveto{\pgfqpoint{0.916772in}{1.152100in}}{\pgfqpoint{0.874929in}{1.107001in}}{\pgfqpoint{0.844082in}{1.026735in}}%
\pgfpathcurveto{\pgfqpoint{0.813235in}{0.946468in}}{\pgfqpoint{0.795903in}{0.837589in}}{\pgfqpoint{0.795903in}{0.724075in}}%
\pgfpathcurveto{\pgfqpoint{0.795903in}{0.610562in}}{\pgfqpoint{0.813235in}{0.501682in}}{\pgfqpoint{0.844082in}{0.421416in}}%
\pgfpathcurveto{\pgfqpoint{0.874929in}{0.341150in}}{\pgfqpoint{0.916772in}{0.296051in}}{\pgfqpoint{0.960396in}{0.296051in}}%
\pgfpathcurveto{\pgfqpoint{1.004020in}{0.296051in}}{\pgfqpoint{1.045863in}{0.341150in}}{\pgfqpoint{1.076710in}{0.421416in}}%
\pgfpathcurveto{\pgfqpoint{1.107557in}{0.501682in}}{\pgfqpoint{1.124889in}{0.610562in}}{\pgfqpoint{1.124889in}{0.724075in}}%
\pgfpathclose%
\pgfusepath{stroke,fill}%
\end{pgfscope}%
\begin{pgfscope}%
\pgfpathrectangle{\pgfqpoint{0.039236in}{0.039236in}}{\pgfqpoint{1.595582in}{1.177068in}}%
\pgfusepath{clip}%
\pgfsetbuttcap%
\pgfsetmiterjoin%
\definecolor{currentfill}{rgb}{0.827451,0.827451,0.827451}%
\pgfsetfillcolor{currentfill}%
\pgfsetfillopacity{0.800000}%
\pgfsetlinewidth{1.003750pt}%
\definecolor{currentstroke}{rgb}{0.000000,0.000000,0.000000}%
\pgfsetstrokecolor{currentstroke}%
\pgfsetstrokeopacity{0.800000}%
\pgfsetdash{}{0pt}%
\pgfpathmoveto{\pgfqpoint{1.289382in}{0.082038in}}%
\pgfpathcurveto{\pgfqpoint{1.376630in}{0.082038in}}{\pgfqpoint{1.460316in}{0.104588in}}{\pgfqpoint{1.522010in}{0.144721in}}%
\pgfpathcurveto{\pgfqpoint{1.583704in}{0.184854in}}{\pgfqpoint{1.618368in}{0.239294in}}{\pgfqpoint{1.618368in}{0.296051in}}%
\pgfpathcurveto{\pgfqpoint{1.618368in}{0.352807in}}{\pgfqpoint{1.583704in}{0.407247in}}{\pgfqpoint{1.522010in}{0.447380in}}%
\pgfpathcurveto{\pgfqpoint{1.460316in}{0.487513in}}{\pgfqpoint{1.376630in}{0.510063in}}{\pgfqpoint{1.289382in}{0.510063in}}%
\pgfpathcurveto{\pgfqpoint{1.202134in}{0.510063in}}{\pgfqpoint{1.118448in}{0.487513in}}{\pgfqpoint{1.056754in}{0.447380in}}%
\pgfpathcurveto{\pgfqpoint{0.995060in}{0.407247in}}{\pgfqpoint{0.960396in}{0.352807in}}{\pgfqpoint{0.960396in}{0.296051in}}%
\pgfpathcurveto{\pgfqpoint{0.960396in}{0.239294in}}{\pgfqpoint{0.995060in}{0.184854in}}{\pgfqpoint{1.056754in}{0.144721in}}%
\pgfpathcurveto{\pgfqpoint{1.118448in}{0.104588in}}{\pgfqpoint{1.202134in}{0.082038in}}{\pgfqpoint{1.289382in}{0.082038in}}%
\pgfpathclose%
\pgfusepath{stroke,fill}%
\end{pgfscope}%
\begin{pgfscope}%
\pgfpathrectangle{\pgfqpoint{0.039236in}{0.039236in}}{\pgfqpoint{1.595582in}{1.177068in}}%
\pgfusepath{clip}%
\pgfsetbuttcap%
\pgfsetmiterjoin%
\definecolor{currentfill}{rgb}{0.827451,0.827451,0.827451}%
\pgfsetfillcolor{currentfill}%
\pgfsetfillopacity{0.800000}%
\pgfsetlinewidth{1.003750pt}%
\definecolor{currentstroke}{rgb}{0.000000,0.000000,0.000000}%
\pgfsetstrokecolor{currentstroke}%
\pgfsetstrokeopacity{0.800000}%
\pgfsetdash{}{0pt}%
\pgfpathmoveto{\pgfqpoint{0.055685in}{1.002292in}}%
\pgfpathlineto{\pgfqpoint{0.302424in}{1.002292in}}%
\pgfpathlineto{\pgfqpoint{0.302424in}{0.360254in}}%
\pgfpathlineto{\pgfqpoint{0.466917in}{0.360254in}}%
\pgfpathlineto{\pgfqpoint{0.466917in}{0.253248in}}%
\pgfpathlineto{\pgfqpoint{0.220178in}{0.253248in}}%
\pgfpathlineto{\pgfqpoint{0.220178in}{0.895285in}}%
\pgfpathlineto{\pgfqpoint{0.055685in}{0.895285in}}%
\pgfpathclose%
\pgfusepath{stroke,fill}%
\end{pgfscope}%
\begin{pgfscope}%
\pgfpathrectangle{\pgfqpoint{0.039236in}{0.039236in}}{\pgfqpoint{1.595582in}{1.177068in}}%
\pgfusepath{clip}%
\pgfsetbuttcap%
\pgfsetmiterjoin%
\definecolor{currentfill}{rgb}{1.000000,1.000000,0.000000}%
\pgfsetfillcolor{currentfill}%
\pgfsetfillopacity{0.600000}%
\pgfsetlinewidth{1.003750pt}%
\definecolor{currentstroke}{rgb}{0.000000,0.000000,0.000000}%
\pgfsetstrokecolor{currentstroke}%
\pgfsetstrokeopacity{0.600000}%
\pgfsetdash{{3.700000pt}{1.600000pt}}{0.000000pt}%
\pgfpathmoveto{\pgfqpoint{0.402571in}{0.350815in}}%
\pgfpathlineto{\pgfqpoint{0.146552in}{0.436028in}}%
\pgfpathlineto{\pgfqpoint{0.351772in}{0.681273in}}%
\pgfpathclose%
\pgfusepath{stroke,fill}%
\end{pgfscope}%
\begin{pgfscope}%
\pgfpathrectangle{\pgfqpoint{0.039236in}{0.039236in}}{\pgfqpoint{1.595582in}{1.177068in}}%
\pgfusepath{clip}%
\pgfsetbuttcap%
\pgfsetmiterjoin%
\definecolor{currentfill}{rgb}{1.000000,0.000000,0.000000}%
\pgfsetfillcolor{currentfill}%
\pgfsetlinewidth{1.003750pt}%
\definecolor{currentstroke}{rgb}{1.000000,0.000000,0.000000}%
\pgfsetstrokecolor{currentstroke}%
\pgfsetdash{}{0pt}%
\pgfsys@defobject{currentmarker}{\pgfqpoint{-0.016667in}{-0.016667in}}{\pgfqpoint{0.016667in}{0.016667in}}{%
\pgfpathmoveto{\pgfqpoint{0.000000in}{0.016667in}}%
\pgfpathlineto{\pgfqpoint{-0.016667in}{-0.016667in}}%
\pgfpathlineto{\pgfqpoint{0.016667in}{-0.016667in}}%
\pgfpathclose%
\pgfusepath{stroke,fill}%
}%
\begin{pgfscope}%
\pgfsys@transformshift{0.351772in}{0.681273in}%
\pgfsys@useobject{currentmarker}{}%
\end{pgfscope}%
\begin{pgfscope}%
\pgfsys@transformshift{0.631410in}{0.274649in}%
\pgfsys@useobject{currentmarker}{}%
\end{pgfscope}%
\end{pgfscope}%
\end{pgfpicture}%
\makeatother%
\endgroup%
             \caption{Admissible kernel for the new cluster.}
             \label{fig:step2_admker}
        \end{subfigure}
        \hfill
        \begin{subfigure}[t]{\linewidth}
            \input{figures/step2_star.pgf}
             \caption{Starshaped hull of the new cluster. The convex hull of the specified kernel is shown in blue}
             \label{fig:step2_star}
        \end{subfigure}
    \end{minipage}
    \caption{Steps in Algorithm \ref{alg:starify_obstacles} when four obstacles are combined into one starshaped obstacle in two iterations. Each cluster is identified with a separate color.}
    \label{fig:starify_steps}
\end{figure}
\subsubsection{Admissible kernel}
According to \eqref{eq:adm_ker_combination} the admissible kernel for a cluster can be found as the intersection of the admissible kernels for the corresponding cluster obstacles, derived using \eqref{eq:adm_ker_cone} and \eqref{eq:adm_kernel_convex} for the $\mathbb{R}^2$ and $\mathbb{R}^3$ case, respectively. Consequently, for computational efficiency, the admissible kernel for each obstacle can be computed once outside the loop and the admissible kernel for all clusters in each iteration can be found by pure intersections.
As discussed in Sec. \ref{sec:admissible_kernel}, the admissible kernel may be the empty set if $x$ and/or $x_g$ are bounded exterior points for the evaluating set, i.e. there exists no starshaped set which contains the cluster obstacle(s) and at the same time does not contain $x$ and $x_g$. In such case, the algorithm is terminated by providing a solution based on convex decomposition (line \ref{l:premature_return}) resulting in an intersecting star world.

\subsubsection{Starshaped hull}
To generate a starshaped set containing each cluster, the starshaped hull with specified kernel is applied (line \ref{l:star_hull}). In particular, $n+1$ affinely independent points with corresponding convex hull contained by the admissible kernel are selected as the specified kernel points (line \ref{l:kernel_points}), forming a triangle or polyhedron, depending on the dimensionality. This ensures that the resulting set is strictly starshaped and does not contain $x$ nor $x_g$ according to Properties \ref{p:st_hull_ker_strict} and \ref{p:st_hull_ker_adm_ker}.
The starshaped hull is found using \eqref{eq:star_hull_specified_convex} for convex sets and Algorithm \ref{alg:kernel_hull_polygon} for polygons. For clusters with more than one obstacle, Property \ref{p:st_hull_ker}\ref{p:st_hull_ker_union} can be leveraged to compute the starshaped hull of the union of all cluster obstacles.

\subsubsection{Clustering}
The clustering is applied on the original obstacle set, $\mathcal{O}$, (line \ref{l:cluster_obs}) but is determined by the intersection of the current starshaped obstacles, $\mathcal{O}^{\star}$, (line \ref{l:cluster_star}). For instance, even though the original polygon is disjoint from the ellipses, all four original obstacles are combined into one cluster in Fig. \ref{fig:step1_cluster} since the starshaped obstacles in $\mathcal{O}^{\star}$ containing them are intersecting in Fig. \ref{fig:step1_star}.

Unless Algorithm \ref{alg:starify_obstacles} terminates prematurely in line \ref{l:premature_return}, $\mathcal{O}^{\star}$ consists of mutually disjoint strictly starshaped obstacles which all contain the original obstacles but not the robot nor goal position, i.e. $\mathcal{F}^{\star} = \mathcal{W} \setminus \bigcup_{\mathcal{O}_i^{\star}\in \mathcal{O}^{\star}}\mathcal{O}_i^{\star}$ is a disjoint star world such that $x\in \mathcal{F}^{\star}$ and $x_g\in \mathcal{F}^{\star}$, and provides a solution to Problem \ref{problem}. 
A premature termination occurs either if a single polygon obstacle surrounds the robot and/or goal, as for $\bar{z}$ in Fig. \ref{fig:admissible_kernel_polygon}, or if the combination of all obstacles in a cluster surrounds the robot and/or goal. When a single polygon or when intersecting obstacles in combination surround $x$ and/or $x_g$, there exists no solution with a disjoint star world to Problem \ref{problem}. However, disjoint obstacles may as well be combined into one cluster leading to a termination with an intersecting star world if the combination of the clustered obstacles surround $x$ and/or $x_g$. This is exemplified in Fig. \ref{fig:condition_not_satisfied} with two polygons which in combination surround $x$ and where the generated hulls of the two polygons, when following the kernel point selection in Algorithm \ref{alg:kernel_point_selection} as discussed below, intersect such that they are combined into one cluster. As the clustered obstacles now surround $x$, the algorithm terminates with an intersecting star world as depicted in Fig. \ref{fig:intersecting_star_world_polygons}, jeopardizing convergence to the goal as will be exemplified in Sec. \ref{sec:examples}. As seen in Fig. \ref{fig:disjoint_star_world_polygons} there exists a disjoint star world solving Problem \ref{problem} in this scenario.

\begin{figure}
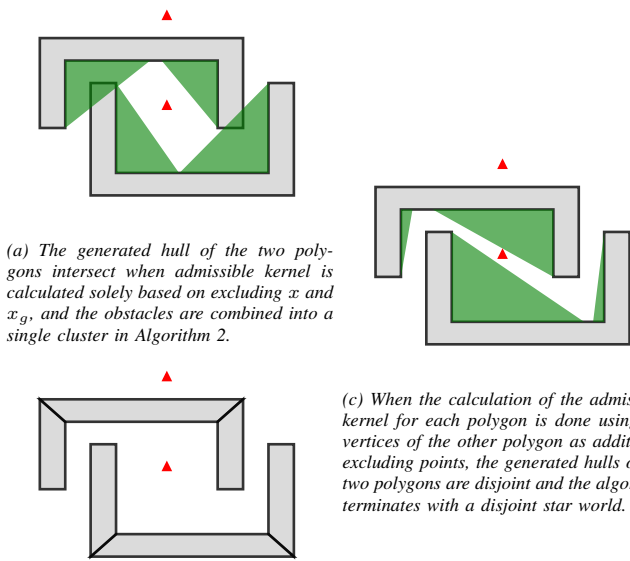

    \begin{minipage}{0.49\linewidth}
        \begin{subfigure}[t]{\linewidth}
\begingroup%
\makeatletter%
\begin{pgfpicture}%
\pgfpathrectangle{\pgfpointorigin}{\pgfqpoint{1.674053in}{1.255540in}}%
\pgfusepath{use as bounding box, clip}%
\begin{pgfscope}%
\pgfsetbuttcap%
\pgfsetmiterjoin%
\definecolor{currentfill}{rgb}{1.000000,1.000000,1.000000}%
\pgfsetfillcolor{currentfill}%
\pgfsetlinewidth{0.000000pt}%
\definecolor{currentstroke}{rgb}{1.000000,1.000000,1.000000}%
\pgfsetstrokecolor{currentstroke}%
\pgfsetdash{}{0pt}%
\pgfpathmoveto{\pgfqpoint{0.000000in}{0.000000in}}%
\pgfpathlineto{\pgfqpoint{1.674053in}{0.000000in}}%
\pgfpathlineto{\pgfqpoint{1.674053in}{1.255540in}}%
\pgfpathlineto{\pgfqpoint{0.000000in}{1.255540in}}%
\pgfpathclose%
\pgfusepath{fill}%
\end{pgfscope}%
\begin{pgfscope}%
\pgfpathrectangle{\pgfqpoint{0.039236in}{0.039236in}}{\pgfqpoint{1.595582in}{1.177068in}}%
\pgfusepath{clip}%
\pgfsetbuttcap%
\pgfsetmiterjoin%
\definecolor{currentfill}{rgb}{0.827451,0.827451,0.827451}%
\pgfsetfillcolor{currentfill}%
\pgfsetfillopacity{0.800000}%
\pgfsetlinewidth{1.003750pt}%
\definecolor{currentstroke}{rgb}{0.000000,0.000000,0.000000}%
\pgfsetstrokecolor{currentstroke}%
\pgfsetstrokeopacity{0.800000}%
\pgfsetdash{}{0pt}%
\pgfpathmoveto{\pgfqpoint{0.172201in}{0.510063in}}%
\pgfpathlineto{\pgfqpoint{0.305166in}{0.510063in}}%
\pgfpathlineto{\pgfqpoint{0.305166in}{0.863183in}}%
\pgfpathlineto{\pgfqpoint{1.102957in}{0.863183in}}%
\pgfpathlineto{\pgfqpoint{1.102957in}{0.510063in}}%
\pgfpathlineto{\pgfqpoint{1.235922in}{0.510063in}}%
\pgfpathlineto{\pgfqpoint{1.235922in}{0.980890in}}%
\pgfpathlineto{\pgfqpoint{0.172201in}{0.980890in}}%
\pgfpathclose%
\pgfusepath{stroke,fill}%
\end{pgfscope}%
\begin{pgfscope}%
\pgfpathrectangle{\pgfqpoint{0.039236in}{0.039236in}}{\pgfqpoint{1.595582in}{1.177068in}}%
\pgfusepath{clip}%
\pgfsetbuttcap%
\pgfsetmiterjoin%
\definecolor{currentfill}{rgb}{0.827451,0.827451,0.827451}%
\pgfsetfillcolor{currentfill}%
\pgfsetfillopacity{0.800000}%
\pgfsetlinewidth{1.003750pt}%
\definecolor{currentstroke}{rgb}{0.000000,0.000000,0.000000}%
\pgfsetstrokecolor{currentstroke}%
\pgfsetstrokeopacity{0.800000}%
\pgfsetdash{}{0pt}%
\pgfpathmoveto{\pgfqpoint{0.438131in}{0.745477in}}%
\pgfpathlineto{\pgfqpoint{0.571096in}{0.745477in}}%
\pgfpathlineto{\pgfqpoint{0.571096in}{0.274649in}}%
\pgfpathlineto{\pgfqpoint{1.368887in}{0.274649in}}%
\pgfpathlineto{\pgfqpoint{1.368887in}{0.745477in}}%
\pgfpathlineto{\pgfqpoint{1.501852in}{0.745477in}}%
\pgfpathlineto{\pgfqpoint{1.501852in}{0.156942in}}%
\pgfpathlineto{\pgfqpoint{0.438131in}{0.156942in}}%
\pgfpathclose%
\pgfusepath{stroke,fill}%
\end{pgfscope}%
\begin{pgfscope}%
\pgfpathrectangle{\pgfqpoint{0.039236in}{0.039236in}}{\pgfqpoint{1.595582in}{1.177068in}}%
\pgfusepath{clip}%
\pgfsetbuttcap%
\pgfsetmiterjoin%
\definecolor{currentfill}{rgb}{0.000000,0.501961,0.000000}%
\pgfsetfillcolor{currentfill}%
\pgfsetfillopacity{0.600000}%
\pgfsetlinewidth{0.000000pt}%
\definecolor{currentstroke}{rgb}{0.000000,0.000000,0.000000}%
\pgfsetstrokecolor{currentstroke}%
\pgfsetstrokeopacity{0.600000}%
\pgfsetdash{}{0pt}%
\pgfpathmoveto{\pgfqpoint{0.305166in}{0.510063in}}%
\pgfpathlineto{\pgfqpoint{0.305166in}{0.863183in}}%
\pgfpathlineto{\pgfqpoint{0.749928in}{0.863183in}}%
\pgfpathclose%
\pgfusepath{fill}%
\end{pgfscope}%
\begin{pgfscope}%
\pgfpathrectangle{\pgfqpoint{0.039236in}{0.039236in}}{\pgfqpoint{1.595582in}{1.177068in}}%
\pgfusepath{clip}%
\pgfsetbuttcap%
\pgfsetmiterjoin%
\definecolor{currentfill}{rgb}{0.000000,0.501961,0.000000}%
\pgfsetfillcolor{currentfill}%
\pgfsetfillopacity{0.600000}%
\pgfsetlinewidth{0.000000pt}%
\definecolor{currentstroke}{rgb}{0.000000,0.000000,0.000000}%
\pgfsetstrokecolor{currentstroke}%
\pgfsetstrokeopacity{0.600000}%
\pgfsetdash{}{0pt}%
\pgfpathmoveto{\pgfqpoint{0.809623in}{0.863183in}}%
\pgfpathlineto{\pgfqpoint{1.102957in}{0.863183in}}%
\pgfpathlineto{\pgfqpoint{1.102957in}{0.510063in}}%
\pgfpathclose%
\pgfusepath{fill}%
\end{pgfscope}%
\begin{pgfscope}%
\pgfpathrectangle{\pgfqpoint{0.039236in}{0.039236in}}{\pgfqpoint{1.595582in}{1.177068in}}%
\pgfusepath{clip}%
\pgfsetbuttcap%
\pgfsetmiterjoin%
\definecolor{currentfill}{rgb}{0.000000,0.501961,0.000000}%
\pgfsetfillcolor{currentfill}%
\pgfsetfillopacity{0.600000}%
\pgfsetlinewidth{0.000000pt}%
\definecolor{currentstroke}{rgb}{0.000000,0.000000,0.000000}%
\pgfsetstrokecolor{currentstroke}%
\pgfsetstrokeopacity{0.600000}%
\pgfsetdash{}{0pt}%
\pgfpathmoveto{\pgfqpoint{1.368887in}{0.745477in}}%
\pgfpathlineto{\pgfqpoint{1.368887in}{0.274649in}}%
\pgfpathlineto{\pgfqpoint{0.903280in}{0.274649in}}%
\pgfpathclose%
\pgfusepath{fill}%
\end{pgfscope}%
\begin{pgfscope}%
\pgfpathrectangle{\pgfqpoint{0.039236in}{0.039236in}}{\pgfqpoint{1.595582in}{1.177068in}}%
\pgfusepath{clip}%
\pgfsetbuttcap%
\pgfsetmiterjoin%
\definecolor{currentfill}{rgb}{0.000000,0.501961,0.000000}%
\pgfsetfillcolor{currentfill}%
\pgfsetfillopacity{0.600000}%
\pgfsetlinewidth{0.000000pt}%
\definecolor{currentstroke}{rgb}{0.000000,0.000000,0.000000}%
\pgfsetstrokecolor{currentstroke}%
\pgfsetstrokeopacity{0.600000}%
\pgfsetdash{}{0pt}%
\pgfpathmoveto{\pgfqpoint{0.900667in}{0.274649in}}%
\pgfpathlineto{\pgfqpoint{0.571096in}{0.274649in}}%
\pgfpathlineto{\pgfqpoint{0.571096in}{0.745477in}}%
\pgfpathclose%
\pgfusepath{fill}%
\end{pgfscope}%
\begin{pgfscope}%
\pgfpathrectangle{\pgfqpoint{0.039236in}{0.039236in}}{\pgfqpoint{1.595582in}{1.177068in}}%
\pgfusepath{clip}%
\pgfsetbuttcap%
\pgfsetmiterjoin%
\definecolor{currentfill}{rgb}{1.000000,0.000000,0.000000}%
\pgfsetfillcolor{currentfill}%
\pgfsetlinewidth{1.003750pt}%
\definecolor{currentstroke}{rgb}{1.000000,0.000000,0.000000}%
\pgfsetstrokecolor{currentstroke}%
\pgfsetdash{}{0pt}%
\pgfsys@defobject{currentmarker}{\pgfqpoint{-0.016667in}{-0.016667in}}{\pgfqpoint{0.016667in}{0.016667in}}{%
\pgfpathmoveto{\pgfqpoint{0.000000in}{0.016667in}}%
\pgfpathlineto{\pgfqpoint{-0.016667in}{-0.016667in}}%
\pgfpathlineto{\pgfqpoint{0.016667in}{-0.016667in}}%
\pgfpathclose%
\pgfusepath{stroke,fill}%
}%
\begin{pgfscope}%
\pgfsys@transformshift{0.837026in}{0.627770in}%
\pgfsys@useobject{currentmarker}{}%
\end{pgfscope}%
\begin{pgfscope}%
\pgfsys@transformshift{0.837026in}{1.098597in}%
\pgfsys@useobject{currentmarker}{}%
\end{pgfscope}%
\end{pgfscope}%
\end{pgfpicture}%
\makeatother%
\endgroup%
             \caption{The generated hull of the two polygons intersect when admissible kernel is calculated solely based on excluding $x$ and $x_g$, and the obstacles are combined into a single cluster in Algorithm \ref{alg:starify_obstacles}.}
             \label{fig:hull_intersecting_star_world_polygons}
        \end{subfigure}
        \hfill
        \begin{subfigure}[t]{\linewidth}
\begingroup%
\makeatletter%
\begin{pgfpicture}%
\pgfpathrectangle{\pgfpointorigin}{\pgfqpoint{1.674053in}{1.255540in}}%
\pgfusepath{use as bounding box, clip}%
\begin{pgfscope}%
\pgfsetbuttcap%
\pgfsetmiterjoin%
\definecolor{currentfill}{rgb}{1.000000,1.000000,1.000000}%
\pgfsetfillcolor{currentfill}%
\pgfsetlinewidth{0.000000pt}%
\definecolor{currentstroke}{rgb}{1.000000,1.000000,1.000000}%
\pgfsetstrokecolor{currentstroke}%
\pgfsetdash{}{0pt}%
\pgfpathmoveto{\pgfqpoint{0.000000in}{0.000000in}}%
\pgfpathlineto{\pgfqpoint{1.674053in}{0.000000in}}%
\pgfpathlineto{\pgfqpoint{1.674053in}{1.255540in}}%
\pgfpathlineto{\pgfqpoint{0.000000in}{1.255540in}}%
\pgfpathclose%
\pgfusepath{fill}%
\end{pgfscope}%
\begin{pgfscope}%
\pgfpathrectangle{\pgfqpoint{0.039236in}{0.039236in}}{\pgfqpoint{1.595582in}{1.177068in}}%
\pgfusepath{clip}%
\pgfsetbuttcap%
\pgfsetmiterjoin%
\definecolor{currentfill}{rgb}{0.827451,0.827451,0.827451}%
\pgfsetfillcolor{currentfill}%
\pgfsetfillopacity{0.800000}%
\pgfsetlinewidth{1.003750pt}%
\definecolor{currentstroke}{rgb}{0.000000,0.000000,0.000000}%
\pgfsetstrokecolor{currentstroke}%
\pgfsetstrokeopacity{0.800000}%
\pgfsetdash{}{0pt}%
\pgfpathmoveto{\pgfqpoint{0.172201in}{0.980890in}}%
\pgfpathlineto{\pgfqpoint{0.172201in}{0.510063in}}%
\pgfpathlineto{\pgfqpoint{0.305166in}{0.510063in}}%
\pgfpathlineto{\pgfqpoint{0.305166in}{0.863183in}}%
\pgfpathclose%
\pgfusepath{stroke,fill}%
\end{pgfscope}%
\begin{pgfscope}%
\pgfpathrectangle{\pgfqpoint{0.039236in}{0.039236in}}{\pgfqpoint{1.595582in}{1.177068in}}%
\pgfusepath{clip}%
\pgfsetbuttcap%
\pgfsetmiterjoin%
\definecolor{currentfill}{rgb}{0.827451,0.827451,0.827451}%
\pgfsetfillcolor{currentfill}%
\pgfsetfillopacity{0.800000}%
\pgfsetlinewidth{1.003750pt}%
\definecolor{currentstroke}{rgb}{0.000000,0.000000,0.000000}%
\pgfsetstrokecolor{currentstroke}%
\pgfsetstrokeopacity{0.800000}%
\pgfsetdash{}{0pt}%
\pgfpathmoveto{\pgfqpoint{1.235922in}{0.980890in}}%
\pgfpathlineto{\pgfqpoint{0.172201in}{0.980890in}}%
\pgfpathlineto{\pgfqpoint{0.305166in}{0.863183in}}%
\pgfpathlineto{\pgfqpoint{1.102957in}{0.863183in}}%
\pgfpathclose%
\pgfusepath{stroke,fill}%
\end{pgfscope}%
\begin{pgfscope}%
\pgfpathrectangle{\pgfqpoint{0.039236in}{0.039236in}}{\pgfqpoint{1.595582in}{1.177068in}}%
\pgfusepath{clip}%
\pgfsetbuttcap%
\pgfsetmiterjoin%
\definecolor{currentfill}{rgb}{0.827451,0.827451,0.827451}%
\pgfsetfillcolor{currentfill}%
\pgfsetfillopacity{0.800000}%
\pgfsetlinewidth{1.003750pt}%
\definecolor{currentstroke}{rgb}{0.000000,0.000000,0.000000}%
\pgfsetstrokecolor{currentstroke}%
\pgfsetstrokeopacity{0.800000}%
\pgfsetdash{}{0pt}%
\pgfpathmoveto{\pgfqpoint{1.102957in}{0.863183in}}%
\pgfpathlineto{\pgfqpoint{1.102957in}{0.510063in}}%
\pgfpathlineto{\pgfqpoint{1.235922in}{0.510063in}}%
\pgfpathlineto{\pgfqpoint{1.235922in}{0.980890in}}%
\pgfpathclose%
\pgfusepath{stroke,fill}%
\end{pgfscope}%
\begin{pgfscope}%
\pgfpathrectangle{\pgfqpoint{0.039236in}{0.039236in}}{\pgfqpoint{1.595582in}{1.177068in}}%
\pgfusepath{clip}%
\pgfsetbuttcap%
\pgfsetmiterjoin%
\definecolor{currentfill}{rgb}{0.827451,0.827451,0.827451}%
\pgfsetfillcolor{currentfill}%
\pgfsetfillopacity{0.800000}%
\pgfsetlinewidth{1.003750pt}%
\definecolor{currentstroke}{rgb}{0.000000,0.000000,0.000000}%
\pgfsetstrokecolor{currentstroke}%
\pgfsetstrokeopacity{0.800000}%
\pgfsetdash{}{0pt}%
\pgfpathmoveto{\pgfqpoint{1.501852in}{0.156942in}}%
\pgfpathlineto{\pgfqpoint{1.501852in}{0.745477in}}%
\pgfpathlineto{\pgfqpoint{1.368887in}{0.745477in}}%
\pgfpathlineto{\pgfqpoint{1.368887in}{0.274649in}}%
\pgfpathclose%
\pgfusepath{stroke,fill}%
\end{pgfscope}%
\begin{pgfscope}%
\pgfpathrectangle{\pgfqpoint{0.039236in}{0.039236in}}{\pgfqpoint{1.595582in}{1.177068in}}%
\pgfusepath{clip}%
\pgfsetbuttcap%
\pgfsetmiterjoin%
\definecolor{currentfill}{rgb}{0.827451,0.827451,0.827451}%
\pgfsetfillcolor{currentfill}%
\pgfsetfillopacity{0.800000}%
\pgfsetlinewidth{1.003750pt}%
\definecolor{currentstroke}{rgb}{0.000000,0.000000,0.000000}%
\pgfsetstrokecolor{currentstroke}%
\pgfsetstrokeopacity{0.800000}%
\pgfsetdash{}{0pt}%
\pgfpathmoveto{\pgfqpoint{0.438131in}{0.156942in}}%
\pgfpathlineto{\pgfqpoint{1.501852in}{0.156942in}}%
\pgfpathlineto{\pgfqpoint{1.368887in}{0.274649in}}%
\pgfpathlineto{\pgfqpoint{0.571096in}{0.274649in}}%
\pgfpathclose%
\pgfusepath{stroke,fill}%
\end{pgfscope}%
\begin{pgfscope}%
\pgfpathrectangle{\pgfqpoint{0.039236in}{0.039236in}}{\pgfqpoint{1.595582in}{1.177068in}}%
\pgfusepath{clip}%
\pgfsetbuttcap%
\pgfsetmiterjoin%
\definecolor{currentfill}{rgb}{0.827451,0.827451,0.827451}%
\pgfsetfillcolor{currentfill}%
\pgfsetfillopacity{0.800000}%
\pgfsetlinewidth{1.003750pt}%
\definecolor{currentstroke}{rgb}{0.000000,0.000000,0.000000}%
\pgfsetstrokecolor{currentstroke}%
\pgfsetstrokeopacity{0.800000}%
\pgfsetdash{}{0pt}%
\pgfpathmoveto{\pgfqpoint{0.438131in}{0.156942in}}%
\pgfpathlineto{\pgfqpoint{0.571096in}{0.274649in}}%
\pgfpathlineto{\pgfqpoint{0.571096in}{0.745477in}}%
\pgfpathlineto{\pgfqpoint{0.438131in}{0.745477in}}%
\pgfpathclose%
\pgfusepath{stroke,fill}%
\end{pgfscope}%
\begin{pgfscope}%
\pgfpathrectangle{\pgfqpoint{0.039236in}{0.039236in}}{\pgfqpoint{1.595582in}{1.177068in}}%
\pgfusepath{clip}%
\pgfsetbuttcap%
\pgfsetmiterjoin%
\definecolor{currentfill}{rgb}{1.000000,0.000000,0.000000}%
\pgfsetfillcolor{currentfill}%
\pgfsetlinewidth{1.003750pt}%
\definecolor{currentstroke}{rgb}{1.000000,0.000000,0.000000}%
\pgfsetstrokecolor{currentstroke}%
\pgfsetdash{}{0pt}%
\pgfsys@defobject{currentmarker}{\pgfqpoint{-0.016667in}{-0.016667in}}{\pgfqpoint{0.016667in}{0.016667in}}{%
\pgfpathmoveto{\pgfqpoint{0.000000in}{0.016667in}}%
\pgfpathlineto{\pgfqpoint{-0.016667in}{-0.016667in}}%
\pgfpathlineto{\pgfqpoint{0.016667in}{-0.016667in}}%
\pgfpathclose%
\pgfusepath{stroke,fill}%
}%
\begin{pgfscope}%
\pgfsys@transformshift{0.837026in}{0.627770in}%
\pgfsys@useobject{currentmarker}{}%
\end{pgfscope}%
\begin{pgfscope}%
\pgfsys@transformshift{0.837026in}{1.098597in}%
\pgfsys@useobject{currentmarker}{}%
\end{pgfscope}%
\end{pgfscope}%
\end{pgfpicture}%
\makeatother%
\endgroup%
             \caption{The resulting set $\mathcal{O}^{\star}$ is generated according to \eqref{eq:convex_decomposition} resulting in six obstacles forming an intersecting star world.}
             \label{fig:intersecting_star_world_polygons}
        \end{subfigure}
    \end{minipage}
    \begin{minipage}{0.49\linewidth}
        \begin{subfigure}[t]{\linewidth}
\begingroup%
\makeatletter%
\begin{pgfpicture}%
\pgfpathrectangle{\pgfpointorigin}{\pgfqpoint{1.674053in}{1.255540in}}%
\pgfusepath{use as bounding box, clip}%
\begin{pgfscope}%
\pgfsetbuttcap%
\pgfsetmiterjoin%
\definecolor{currentfill}{rgb}{1.000000,1.000000,1.000000}%
\pgfsetfillcolor{currentfill}%
\pgfsetlinewidth{0.000000pt}%
\definecolor{currentstroke}{rgb}{1.000000,1.000000,1.000000}%
\pgfsetstrokecolor{currentstroke}%
\pgfsetdash{}{0pt}%
\pgfpathmoveto{\pgfqpoint{0.000000in}{0.000000in}}%
\pgfpathlineto{\pgfqpoint{1.674053in}{0.000000in}}%
\pgfpathlineto{\pgfqpoint{1.674053in}{1.255540in}}%
\pgfpathlineto{\pgfqpoint{0.000000in}{1.255540in}}%
\pgfpathclose%
\pgfusepath{fill}%
\end{pgfscope}%
\begin{pgfscope}%
\pgfpathrectangle{\pgfqpoint{0.039236in}{0.039236in}}{\pgfqpoint{1.595582in}{1.177068in}}%
\pgfusepath{clip}%
\pgfsetbuttcap%
\pgfsetmiterjoin%
\definecolor{currentfill}{rgb}{0.827451,0.827451,0.827451}%
\pgfsetfillcolor{currentfill}%
\pgfsetfillopacity{0.800000}%
\pgfsetlinewidth{1.003750pt}%
\definecolor{currentstroke}{rgb}{0.000000,0.000000,0.000000}%
\pgfsetstrokecolor{currentstroke}%
\pgfsetstrokeopacity{0.800000}%
\pgfsetdash{}{0pt}%
\pgfpathmoveto{\pgfqpoint{0.172201in}{0.510063in}}%
\pgfpathlineto{\pgfqpoint{0.305166in}{0.510063in}}%
\pgfpathlineto{\pgfqpoint{0.305166in}{0.863183in}}%
\pgfpathlineto{\pgfqpoint{1.102957in}{0.863183in}}%
\pgfpathlineto{\pgfqpoint{1.102957in}{0.510063in}}%
\pgfpathlineto{\pgfqpoint{1.235922in}{0.510063in}}%
\pgfpathlineto{\pgfqpoint{1.235922in}{0.980890in}}%
\pgfpathlineto{\pgfqpoint{0.172201in}{0.980890in}}%
\pgfpathclose%
\pgfusepath{stroke,fill}%
\end{pgfscope}%
\begin{pgfscope}%
\pgfpathrectangle{\pgfqpoint{0.039236in}{0.039236in}}{\pgfqpoint{1.595582in}{1.177068in}}%
\pgfusepath{clip}%
\pgfsetbuttcap%
\pgfsetmiterjoin%
\definecolor{currentfill}{rgb}{0.827451,0.827451,0.827451}%
\pgfsetfillcolor{currentfill}%
\pgfsetfillopacity{0.800000}%
\pgfsetlinewidth{1.003750pt}%
\definecolor{currentstroke}{rgb}{0.000000,0.000000,0.000000}%
\pgfsetstrokecolor{currentstroke}%
\pgfsetstrokeopacity{0.800000}%
\pgfsetdash{}{0pt}%
\pgfpathmoveto{\pgfqpoint{0.438131in}{0.745477in}}%
\pgfpathlineto{\pgfqpoint{0.571096in}{0.745477in}}%
\pgfpathlineto{\pgfqpoint{0.571096in}{0.274649in}}%
\pgfpathlineto{\pgfqpoint{1.368887in}{0.274649in}}%
\pgfpathlineto{\pgfqpoint{1.368887in}{0.745477in}}%
\pgfpathlineto{\pgfqpoint{1.501852in}{0.745477in}}%
\pgfpathlineto{\pgfqpoint{1.501852in}{0.156942in}}%
\pgfpathlineto{\pgfqpoint{0.438131in}{0.156942in}}%
\pgfpathclose%
\pgfusepath{stroke,fill}%
\end{pgfscope}%
\begin{pgfscope}%
\pgfpathrectangle{\pgfqpoint{0.039236in}{0.039236in}}{\pgfqpoint{1.595582in}{1.177068in}}%
\pgfusepath{clip}%
\pgfsetbuttcap%
\pgfsetmiterjoin%
\definecolor{currentfill}{rgb}{0.000000,0.501961,0.000000}%
\pgfsetfillcolor{currentfill}%
\pgfsetfillopacity{0.600000}%
\pgfsetlinewidth{0.000000pt}%
\definecolor{currentstroke}{rgb}{0.000000,0.000000,0.000000}%
\pgfsetstrokecolor{currentstroke}%
\pgfsetstrokeopacity{0.600000}%
\pgfsetdash{}{0pt}%
\pgfpathmoveto{\pgfqpoint{0.305166in}{0.863183in}}%
\pgfpathlineto{\pgfqpoint{0.365662in}{0.863183in}}%
\pgfpathlineto{\pgfqpoint{0.305166in}{0.510063in}}%
\pgfpathclose%
\pgfusepath{fill}%
\end{pgfscope}%
\begin{pgfscope}%
\pgfpathrectangle{\pgfqpoint{0.039236in}{0.039236in}}{\pgfqpoint{1.595582in}{1.177068in}}%
\pgfusepath{clip}%
\pgfsetbuttcap%
\pgfsetmiterjoin%
\definecolor{currentfill}{rgb}{0.000000,0.501961,0.000000}%
\pgfsetfillcolor{currentfill}%
\pgfsetfillopacity{0.600000}%
\pgfsetlinewidth{0.000000pt}%
\definecolor{currentstroke}{rgb}{0.000000,0.000000,0.000000}%
\pgfsetstrokecolor{currentstroke}%
\pgfsetstrokeopacity{0.600000}%
\pgfsetdash{}{0pt}%
\pgfpathmoveto{\pgfqpoint{1.102957in}{0.863183in}}%
\pgfpathlineto{\pgfqpoint{1.102957in}{0.510063in}}%
\pgfpathlineto{\pgfqpoint{0.471512in}{0.863183in}}%
\pgfpathclose%
\pgfusepath{fill}%
\end{pgfscope}%
\begin{pgfscope}%
\pgfpathrectangle{\pgfqpoint{0.039236in}{0.039236in}}{\pgfqpoint{1.595582in}{1.177068in}}%
\pgfusepath{clip}%
\pgfsetbuttcap%
\pgfsetmiterjoin%
\definecolor{currentfill}{rgb}{0.000000,0.501961,0.000000}%
\pgfsetfillcolor{currentfill}%
\pgfsetfillopacity{0.600000}%
\pgfsetlinewidth{0.000000pt}%
\definecolor{currentstroke}{rgb}{0.000000,0.000000,0.000000}%
\pgfsetstrokecolor{currentstroke}%
\pgfsetstrokeopacity{0.600000}%
\pgfsetdash{}{0pt}%
\pgfpathmoveto{\pgfqpoint{1.264100in}{0.274649in}}%
\pgfpathlineto{\pgfqpoint{0.571096in}{0.274649in}}%
\pgfpathlineto{\pgfqpoint{0.571096in}{0.745477in}}%
\pgfpathclose%
\pgfusepath{fill}%
\end{pgfscope}%
\begin{pgfscope}%
\pgfpathrectangle{\pgfqpoint{0.039236in}{0.039236in}}{\pgfqpoint{1.595582in}{1.177068in}}%
\pgfusepath{clip}%
\pgfsetbuttcap%
\pgfsetmiterjoin%
\definecolor{currentfill}{rgb}{0.000000,0.501961,0.000000}%
\pgfsetfillcolor{currentfill}%
\pgfsetfillopacity{0.600000}%
\pgfsetlinewidth{0.000000pt}%
\definecolor{currentstroke}{rgb}{0.000000,0.000000,0.000000}%
\pgfsetstrokecolor{currentstroke}%
\pgfsetstrokeopacity{0.600000}%
\pgfsetdash{}{0pt}%
\pgfpathmoveto{\pgfqpoint{1.368887in}{0.745477in}}%
\pgfpathlineto{\pgfqpoint{1.368887in}{0.274649in}}%
\pgfpathlineto{\pgfqpoint{1.310939in}{0.274649in}}%
\pgfpathclose%
\pgfusepath{fill}%
\end{pgfscope}%
\begin{pgfscope}%
\pgfpathrectangle{\pgfqpoint{0.039236in}{0.039236in}}{\pgfqpoint{1.595582in}{1.177068in}}%
\pgfusepath{clip}%
\pgfsetbuttcap%
\pgfsetmiterjoin%
\definecolor{currentfill}{rgb}{1.000000,0.000000,0.000000}%
\pgfsetfillcolor{currentfill}%
\pgfsetlinewidth{1.003750pt}%
\definecolor{currentstroke}{rgb}{1.000000,0.000000,0.000000}%
\pgfsetstrokecolor{currentstroke}%
\pgfsetdash{}{0pt}%
\pgfsys@defobject{currentmarker}{\pgfqpoint{-0.016667in}{-0.016667in}}{\pgfqpoint{0.016667in}{0.016667in}}{%
\pgfpathmoveto{\pgfqpoint{0.000000in}{0.016667in}}%
\pgfpathlineto{\pgfqpoint{-0.016667in}{-0.016667in}}%
\pgfpathlineto{\pgfqpoint{0.016667in}{-0.016667in}}%
\pgfpathclose%
\pgfusepath{stroke,fill}%
}%
\begin{pgfscope}%
\pgfsys@transformshift{0.837026in}{0.627770in}%
\pgfsys@useobject{currentmarker}{}%
\end{pgfscope}%
\begin{pgfscope}%
\pgfsys@transformshift{0.837026in}{1.098597in}%
\pgfsys@useobject{currentmarker}{}%
\end{pgfscope}%
\end{pgfscope}%
\end{pgfpicture}%
\makeatother%
\endgroup%
             \caption{When the calculation of the admissible kernel for each polygon is done using the vertices of the other polygon as additional excluding points, the generated hulls of the two polygons are disjoint and the algorithm terminates with a disjoint star world.}
             \label{fig:disjoint_star_world_polygons}
        \end{subfigure}
    \end{minipage}
    \caption{Example of when the sufficient condition for obtaining a disjoint star world with Algorithm \ref{alg:starify_obstacles} is not met since $\bigcup_{\mathcal{O}_i\in\mathcal{O}}\mathcal{O}_i$ surrounds $x$.}
    \label{fig:condition_not_satisfied}
\end{figure}


\subsection{Excluding obstacle points}
In Fig. \ref{fig:starify_steps} all obstacles are combined into one although the polygon is disjoint from the ellipses. If it is desired to maintain disjoint obstacles when possible, additional excluding points can be introduced in the computation of the admissible kernel for a cluster. Specifically, the additional excluding points should be representative for all obstacles which are not in the cluster. The points can for instance be selected as the vertices for a polygon and the two extreme points in each axis for an ellipse. When adopting this approach for the example considered in Fig. \ref{fig:starify_steps}, the algorithm terminates with two starshaped obstacles and not one as the polygon remains disjoint from the ellipses as seen in Fig. \ref{fig:starify_steps_ex_obs}. The key to obtain disjoint obstacles is here the reduced admissible kernel for the polygon (compare Figs. \ref{fig:step1_admker_pol} and \ref{fig:step1_admker_pol_ex_obs}) which restricts the kernel points to be selected such that the starshaped hull of the polygon does not intersect the ellipse. Adopting this approach can also prevent unnecessary premature termination of Algorithm \ref{alg:starify_obstacles} since disjoint obstacles can to larger extent remain in separate clusters, see Fig \ref{fig:disjoint_star_world_polygons}.
Of course, the introduction of additional excluding points in this manner drastically increases the computational complexity as the admissible kernel for each obstacle needs to be computed for every added excluding point and this procedure is in many cases not suitable for online operation. 

\begin{figure}
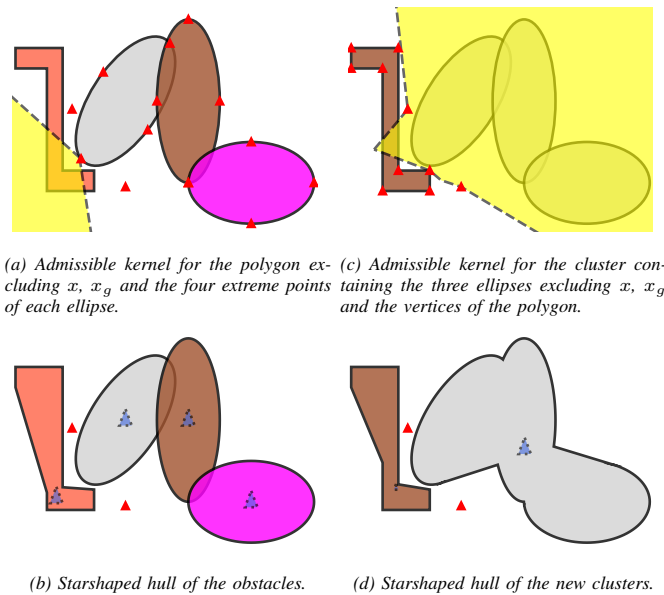

    \begin{minipage}{0.49\linewidth}
        \begin{subfigure}[t]{\linewidth}
            \input{figures/step1_admker3_ex_obs.pgf}
             \caption{Admissible kernel for the polygon excluding $x$, $x_g$ and the four extreme points of each ellipse.}
             \label{fig:step1_admker_pol_ex_obs}
        \end{subfigure}
        \hfill
        \begin{subfigure}[t]{\linewidth}
            \input{figures/step1_star_ex_obs.pgf}
             \caption{Starshaped hull of the obstacles.}
             \label{fig:step1_star_ex_obs}
        \end{subfigure}
    \end{minipage}
    \begin{minipage}{0.49\linewidth}
        \begin{subfigure}[t]{\linewidth}
            \input{figures/step2_admker0_ex_obs.pgf}
             \caption{Admissible kernel for the cluster containing the three ellipses excluding $x$, $x_g$ and the vertices of the polygon.}
             \label{fig:step2_admker_ex_obs}
        \end{subfigure}
        \hfill
        \begin{subfigure}[t]{\linewidth}
            \input{figures/step2_star_ex_obs.pgf}
             \caption{Starshaped hull of the new clusters.}
             \label{fig:step2_star_ex_obs}
        \end{subfigure}
    \end{minipage}
    \caption{Steps in Algorithm \ref{alg:starify_obstacles} when four obstacles are combined into two starshaped obstacle.}
    \label{fig:starify_steps_ex_obs}
\end{figure}

\subsection{Kernel point selection}
\label{sec:kernel_point_selection}
Algorithm \ref{alg:starify_obstacles} does not offer a constructive way to select the kernel points, but rather a condition which they must fulfill (line \ref{l:kernel_points}). However, the choice of specified kernel points may have a big impact on the resulting hull shape as already seen and they should not be selected randomly in the admissible kernel. In particular, in cases where the admissible kernel is unbounded (as for both clusters in Fig. \ref{fig:step1_admker_ell} and \ref{fig:step1_admker_pol}) random selection is directly unsuitable. A reasonable selection can be found in the set $\textnormal{ad ker}(cl,\{x, x_g\}) \cap CH(cl)$ resulting in a starshaped hull which is upper bounded by the convex hull according to property \ref{p:st_hull_ker}\ref{p:st_hull_ker_ch_bound} For computational efficiency the convex hull can be omitted and a selection from $\textnormal{ad ker}(cl,\{x, x_g\}) \cap cl$ can be made instead, assuming it is non-empty. 
The authors' recommendation as a general strategy is to select the kernel points as a small equilateral triangle (regular tetrahedron in $\mathbb{R}^3$) 
contained by this set. 

\begin{algorithm}
\caption{Kernel point selection}
\label{alg:kernel_point_selection}
\SetKwInOut{Input}{Input}
\SetKwInOut{Output}{Output}
\SetKwInOut{Parameter}{Parameter}
\SetKwProg{Init}{init}{}{}
\Input{A cluster of obstacles, $cl$, points to exclude, $\bar{X}$, and centroid selection for $cl$ at previous time iteration, $k_c^{prev}$.}
\Output{Kernel point selection for Algorithm \ref{alg:starify_obstacles}, $K$.}
\Parameter{Desired distance between specified kernel points, $l$.}
$S \gets \textnormal{ad ker}(cl, \bar{X})$\;
\If{$S \cap cl \neq \emptyset$}{
$S \gets S \cap cl$\;
}
Split $S$ by the line $l(x,x_g)$ into $S_1$ and $S_2$ s.t. $xx_gs_1$ is CW $\forall s_1\in S_1$\label{l:S_split}\;
\If{$k_c^{prev}$ exists}{
\uIf{$k_c^{prev}\in S_1\cup S_2$}{
$S_1 \gets k_c^{prev}$\label{l:S1_sel}\;
}
\ElseIf{$xx_gk_c^{prev}$ is CCW\label{l:S2_sel_check}}{
$S_1\gets S_2$ \label{l:S2_sel}\;
}
}
$k_c \gets$ point in $S_1$ closest to centroid of $S_1$\label{l:kcfromS1}\;
$K \gets$ largest equilateral triangle (regular tetrahedron in $\mathbb{R}^3$) in $\textnormal{ad ker}(cl, \bar{X})$ with maximum side length, $l$, and centroid in $k_c$\;
\KwRet{$K$}\;
\end{algorithm}

When Algorithm \ref{alg:starify_obstacles} is used in an online fashion to treat a dynamic workspace with moving obstacles it may be favorable to keep track of clusters and corresponding kernel points from previous time step in order to maintain as similar obstacle shapes as possible in sequential time steps. That is, for a cluster consisting of the same obstacles as a cluster in the previous time step, the kernel points can be reused, assuming they still are in the admissible kernel. This enables the generation of a smoother trajectory in the motion planning stage as compared to cases where obstacle shapes drastically change from one time step to another. A procedure to select the kernel points is shown in Algorithm \ref{alg:kernel_point_selection}. \added{Here, if the cluster centroid, $k_c^{prev}$, exists (i.e. the same obstacles were combined at previous time step), the kernel selection is based on this point to keep the shape as similar as possible (lines \ref{l:S1_sel} and \ref{l:kcfromS1}), see e.g. the right cluster in Fig. \ref{fig:soads_moving_1}-\ref{fig:soads_moving_2} and Fig. \ref{fig:soads_moving_3}-\ref{fig:soads_moving_4}. If the obstacles have moved such that $k_c^{prev}$ is not a valid kernel point, the kernel centroid is selected to remain on the same side of the line $l(x,x_g)$ as previous time instance (lines \ref{l:S2_sel_check}-\ref{l:S2_sel} and \ref{l:kcfromS1}). This is done to enable a more consistent motion generated by the motion planner as the direction for obstacle circumvention highly depends on a selected \textit{center point} of the obstacle, $x_{c,i}\in \text{ker}(\mathcal{O}_i)$ (see Sec. \ref{sec:obstacle_representation}). An example of the kernel point selection is illustrated in Fig. \ref{fig:kernel_update}.

\begin{figure}
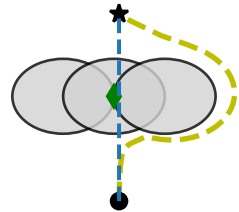
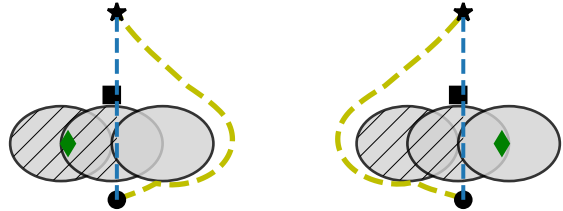

    \centering
    \begin{subfigure}[t]{0.53\linewidth}
        \begin{center}  
            \input{figures/kernel_update_1.pgf}
        \end{center}
         \caption{The robot circumvents the obstacle on the opposite side of $l(x,x_g)$ from $k_c$.}
         \label{fig:kernel_update_1}
    \end{subfigure}
    \begin{subfigure}[t]{0.48\linewidth}
        \begin{center}              \input{figures/kernel_update_2.pgf}
        \end{center}
         \caption{The cluster of obstacles has moved such that the previous kernel point, $k_c^{prev}$, (black square) is not in the cluster region. The selection set, $S_1$, (hatched area) is then restricted to the region on the same side of $l(x,x_g)$ as $k_c^{prev}$ and the direction for obstacle circumvention is same as at previous time step.}
         \label{fig:kernel_update_2}
    \end{subfigure}
    \hfill
    \begin{subfigure}[t]{0.48\linewidth}
        \begin{center}  
            \input{figures/kernel_update_3.pgf}
        \end{center}
         \caption{If $k_c$ would be selected on the other side of $l(x,x_g)$, i.e., not in $S_1$, the resulting direction for obstacle circumvention is opposite from previous time step.}
         \label{fig:kernel_update_3}
    \end{subfigure}
    \caption{Kernel selection procedure for moving obstacles. The obstacles are shown in grey, the robot position, $x$, as black circle, the goal position, $x_g$, as black star, the centroid for kernel point selection, $k_c$, as green diamond and the line segment $l[x,x_g]$ as blue dashed line.}
    \label{fig:kernel_update}
\end{figure}
}

Note that Algorithm \ref{alg:starify_obstacles} is designed to find a solution to Problem \ref{problem} in an efficient manner and is not optimal neither in number of obstacles in $\mathcal{O}^{\star}$ nor in the total space covered by $\mathcal{O}^{\star}$. For instance, in Fig. \ref{fig:starify_steps} the kernel points are selected according to the authors' recommendation as described above. A selection of kernel points for the polygon close to the bottom, as in Fig. \ref{fig:step1_star_ex_obs}, would in this specific case lead to a termination of the algorithm with two separate obstacles (compared to one single obstacle in Fig. \ref{fig:starify_steps}) with a smaller total area coverage of the obstacles.

\added{
\subsection{Implementation}
\label{sec:implementation}
In this section, application aspects of the proposed algorithm\footnote{Code for the implementation of the algorithm can be found at https://github.com/albindgit/starworlds.} are investigated. For simplicity, the obstacles, $\mathcal{O}$, are here considered to be ellipses or convex polygons. However, note that any concave obstacle can still be modelled as the combination of several convex obstacles since intersections are allowed. 
Let $N$ be a large number such that an $N$-gon approximates an ellipse closely\footnote{The polygon approximations of the ellipses are used in the clustering stage to identify intersections of obstacles.} and such that any polygon in $\mathcal{O}$ has less vertices than $N$. Then the computational complexity for Algorithm \ref{alg:starify_obstacles} can be upper bounded by
\begin{equation}
\label{eq:compute_complex}
    O(\underbrace{|\mathcal{O}|N + M|\mathcal{O}|}_{\textnormal{Admissible kernel}} + \underbrace{M|\mathcal{O}|N}_{\textnormal{Starshaped hull}} + \underbrace{M|\mathcal{O}|^2N}_{\textnormal{Clustering}})
\end{equation}
where $M$ is the number of iterations of Algorithm \ref{alg:starify_obstacles} before termination. See Appendix \ref{a:compute_time} for derivation.
In the extreme case, all obstacles except two are disjoint, but yet are combined into a single cluster one by one. This means that the number of iterations, $M$, are bounded by $|\mathcal{O}|$ and a conservative estimate of the upper bound for Algorithm \ref{alg:starify_obstacles} given in \eqref{eq:compute_complex} is $O(|\mathcal{O}|^3N)$. 
However, this is a rather unlikely scenario and since all intersecting obstacles in $\mathcal{O}$ are combined in the first iteration, the algorithm terminates after two iterations in most scenarios.

Since the cubic complexity appear to be conservative, we introduce the following quantitative statistical study. We evaluate the computation time for $1000$ random scenarios with $5$-$50$ obstacles randomly placed in a squared scene. The size of the scene width is dynamically scaled such that the area covered by obstacles remains approximately $25 \%$ of the total scene area for all cases. This is to have a similar densely populated scene in all cases, independent of the number of obstacles.
Half of the obstacles are ellipses with each axis length $a\sim \mathcal{N}(1, 0.2^2)$ and $N=30$ vertices for the polygon approximations. The other obstacles are polygons with 10 vertices randomly sampled inside a $2\times 2$ box based on the proof in \cite{valtr_95}\footnote{The steps to generate the polygons are given in https://cglab.ca/~sander/misc/ConvexGeneration/convex.html}. The robot and goal positions are uniformly randomly placed in the obstacle free area. An example is shown in Fig. \ref{fig:compute_time_scene}. The test was performed on a computer with an Intel Core i7-8650U CPU @ 1.90GHz processor. 
From Fig. \ref{fig:compute_times} it is clear that the computation time rather tends to have linear than cubic growth with the number of obstacles. The empirical study confirms that the cubic complexity obtained based on the assumption $M=|\mathcal{O}|$ is conservative. In fact, of the 1000 cases in the study the algorithm terminated after two iterations 972 times and after three iterations 28 times. No case exceeded three iterations and one can in practice assume $M\leq 5$ independently of $|\mathcal{O}|$. 
Moreover, the actual computation time for clustering remains a relatively small fraction of the full computation ($20\%$ at worst for the cases with 50 obstacles in the study). Hence, the computational burden is dominated by the admissible kernel evaluation and generation of starshaped hull. Since these both grow linearly with $|\mathcal{O}|$, see \eqref{eq:compute_complex}, also the computation time for Algorithm \ref{alg:starify_obstacles} tend to grow linearly with number of obstacles for practical scenarios (in scenes with reasonable low number of obstacles).
}

\begin{figure}
    \centering
    \begin{subfigure}[t]{0.49\linewidth}
        \includegraphics[width=\linewidth]{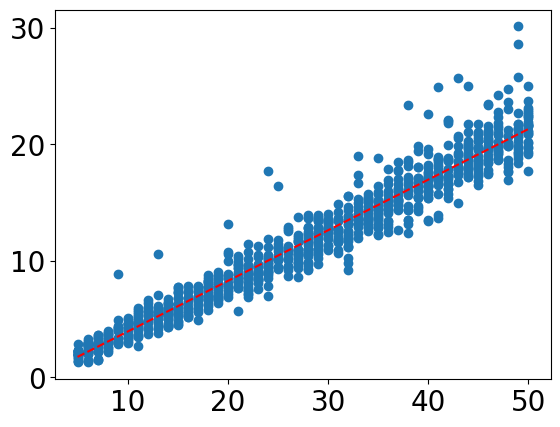}
        \caption{Computation time [ms] of Algorithm \ref{alg:starify_obstacles} over 1000 random scenarios ranging from 5-50 obstacles.}
        \label{fig:compute_times}
    \end{subfigure}
    \hfill
    \begin{subfigure}[t]{0.49\linewidth}
        \includegraphics[width=\linewidth]{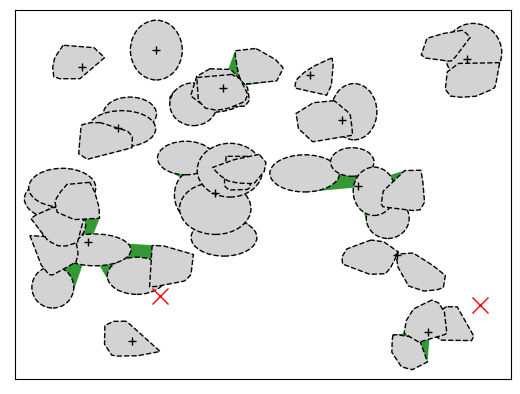}
        \caption{Random sample scene from the evaluation set with 44 obstacles clustered to 13 starshaped obstacles.}
        \label{fig:compute_time_scene}
    \end{subfigure}
    \label{fig:compute_time}
    \caption{Evaluation of the computation time for Algortihm \ref{alg:starify_obstacles}.}
\end{figure}

\section{Obstacle avoidance in a starshaped world}
\label{sec:examples}
To illustrate the utility provided by Algorithm \ref{alg:starify_obstacles}, we here present some results when it is used as obstacle modelling for motion planning based on the setting illustrated in Fig. \ref{fig:block_soads}. That is, a set of (disjoint) starshaped obstacles, $\mathcal{O}^{\star}$, is generated with Algorithm \ref{alg:starify_obstacles} given a workspace with obstacles, $\mathcal{O}$, the robot position, $x$, and the goal position, $x_g$. $\mathcal{O}^{\star}$ is used when deciding on the movement for the robot, $\dot{x}$, with the motion planner in \cite{huber_etal_22}. 

\subsection{Obstacle representation} 
\label{sec:obstacle_representation}
\added{The motion planner requires a center point, $x_{c,i}\in \textnormal{int ker} (\mathcal{O}^{\star}_i)$, to be defined for each obstacle, $\mathcal{O}^{\star}_i$, to enable the formulation of an obstacle function.}
Thus, the kernel of the obstacle, or at least some subset of it, needs to be known. While for a convex set the starshaped kernel is the set itself and the calculation of the kernel for a polygon with $N$ vertices can be done in $O(N)$ time \cite{lee_preparata_79}, it may be computationally demanding to find the kernel for a general starshaped set. 
However, since Algorithm \ref{alg:starify_obstacles} returns a collection of obstacles generated using the starshaped hull with specified kernel, it directly provides a subset of the kernel of each obstacle.
In particular, according to Property \ref{p:st_hull_ker}\ref{p:st_hull_ker_ch_ker} we have that $CH(K_i)\subset \textnormal{ker}(\mathcal{O}^{\star}_i)$, where $K_i$ is the specified kernel set corresponding to $\mathcal{O}^{\star}_i$, and a center point can be chosen from the known set $\textnormal{int} CH(K_i)$.
Convergence to the goal is guaranteed given a set of disjoint obstacles under the condition that no obstacle center point lies in the line containing the robot and goal position, $l(x,x_g)$\added{ \cite{huber_etal_19}}. In other words, convergence is guaranteed if the center point of each obstacle lies in its \textit{convergence center point set} defined as $\mathcal{X}^c_i = \text{int ker}(\mathcal{O}^{\star}_i) \setminus l(x,x_g)$. Since $\textnormal{int}CH(K_i)$ is an open nonempty set of dimension $n$ according to Property \ref{p:st_hull_ker_strict}, the set $\bar{\mathcal{X}}^c_i = \textnormal{int}CH(K_i) \setminus l(x,x_g) \subset \mathcal{X}^c_i$ is also nonempty and any $x_{c,i} \in \bar{\mathcal{X}}^c_i$ is a valid center point selection which guarantees convergence to the goal. 
In the following examples $x_{c,i}$ is chosen as the centroid of $K_i$, \added{i.e. $k_c$ from Algorithm \ref{alg:kernel_point_selection}.}

\subsection{Examples}
\subsubsection{Intersecting obstacles}
Consider the scenario in Fig. \ref{fig:soads_intersecting_ellipses} with three intersecting ellipses. The vector field for the motion planner is depicted in Fig. \ref{fig:soads_intersecting_ellipses1} for the case when Algorithm \ref{alg:starify_obstacles} is not used, i.e. with $\mathcal{O}^{\star}=\mathcal{O}$. Clearly, the obstacles are not disjoint, which is a condition for guaranteed convergence to the goal. As a result, there exists two attractors, apart from the goal position, at points of obstacle intersection.  When Algorithm \ref{alg:starify_obstacles} is applied on the original obstacles as in Fig. \ref{fig:block_soads}, the resulting set is disjoint (a single obstacle for the case in Fig. \ref{fig:soads_intersecting_ellipses2}) and the robot converges to the goal.

\begin{figure}
    \centering
    \begin{subfigure}[t]{0.49\linewidth}
        \centering
        \includegraphics[width=\linewidth]{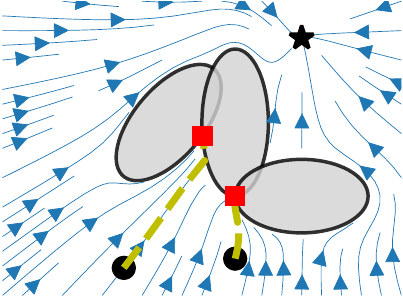}
         \caption{When the intersecting obstacles are treated as separate shapes there exist two local minima at the intersection points, shown as red squares, and convergence to the goal is not achieved.}
         \label{fig:soads_intersecting_ellipses1}
    \end{subfigure}
    \hfill
    \begin{subfigure}[t]{0.49\linewidth}
        \centering
        \includegraphics[width=\linewidth]{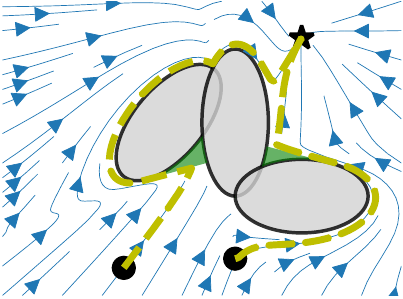}
     \caption{When the obstacles are combined into one using the starshaped hull following Algorithm \ref{alg:starify_obstacles} the convergence guarantee is preserved.}
     \label{fig:soads_intersecting_ellipses2}
    \end{subfigure}
    \caption{Vector field of motion planner \cite{huber_etal_19} for three intersecting ellipses with goal position shown as black star. The resulting path is also depicted, in yellow dashed line, from two initial positions, shown as black dots.}
    \label{fig:soads_intersecting_ellipses}
\end{figure}

\subsubsection{Moving obstacles}
\added{
In Fig. \ref{fig:soads_moving} a scenario with three humans walking around in an area containing two walls is illustrated. The humans are modelled as circle obstacles and the walls as polygon obstacles (one wall as a single convex polygon and one as the combination of three convex polygons).}
Since the motion planner treats the robot as a point mass, all obstacles are inflated by the robot radius. As a result, the obstacles may intersect when a human is close to a wall or another human.
\added{Throughout the time period when the same obstacles are clustered into a single starshaped obstacle, i.e. for the rightmost cluster in both Figs. \ref{fig:soads_moving_1}-\ref{fig:soads_moving_2} and Figs. \ref{fig:soads_moving_3}-\ref{fig:soads_moving_4}, the specified kernel points as well as the center point are kept static such that a similar shape of the starshaped obstacle is obtained to enable a smooth trajectory for the robot, as discussed in Sec. \ref{sec:kernel_point_selection}. 
Note that both in Fig. \ref{fig:soads_moving_3} and \ref{fig:soads_moving_6} more than two obstacles share an intersecting region such that Forest of Stars \cite{rimon_koditschek_92} is not applicable.}
\added{The average computation time for the algorithm during simulation was 4.9 ms with a standard deviation of 0.6 ms and a maximum time of 7.5 ms (using same computer as in Sec. \ref{sec:implementation}).}

\begin{figure}
    \centering
    \begin{subfigure}[t]{0.49\linewidth}
        \centering
        \includegraphics[width=\linewidth]{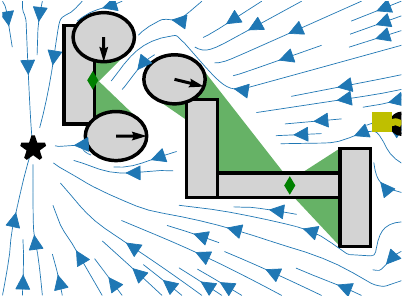}
         \caption{}
         \label{fig:soads_moving_1}
    \end{subfigure}
    \hfill
    \begin{subfigure}[t]{0.49\linewidth}
        \centering
        \includegraphics[width=\linewidth]{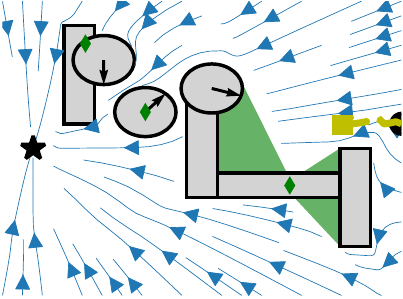}
         \caption{}
         \label{fig:soads_moving_2}
    \end{subfigure}
    \begin{subfigure}[t]{0.49\linewidth}
        \centering
        \includegraphics[width=\linewidth]{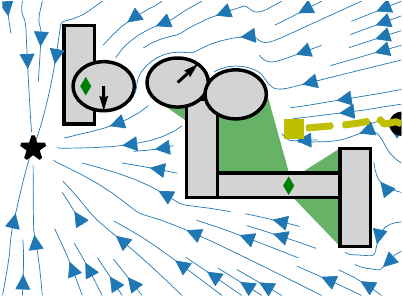}
         \caption{}
         \label{fig:soads_moving_3}
    \end{subfigure}
    \begin{subfigure}[t]{0.49\linewidth}
        \centering
        \includegraphics[width=\linewidth]{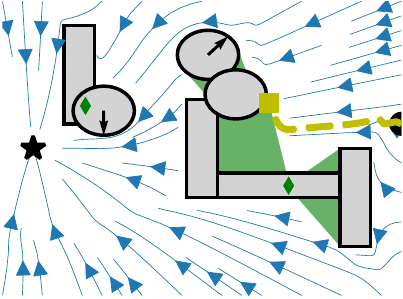}
         \caption{}
         \label{fig:soads_moving_4}
    \end{subfigure}
    \begin{subfigure}[t]{0.49\linewidth}
        \centering
        \includegraphics[width=\linewidth]{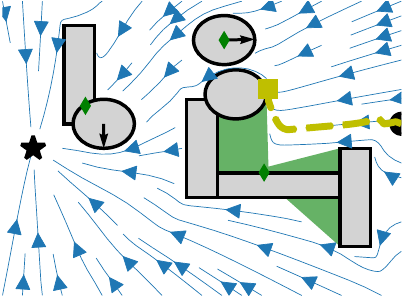}
         \caption{}
         \label{fig:soads_moving_5}
    \end{subfigure}
    \begin{subfigure}[t]{0.49\linewidth}
        \centering
        \includegraphics[width=\linewidth]{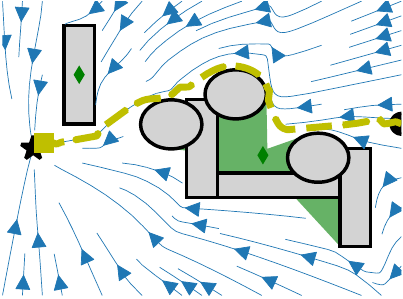}
         \caption{}
         \label{fig:soads_moving_6}
    \end{subfigure}
    \caption{Robot path, in yellow, and momentary vector field of the motion planner, shown as blue arrows, given the goal position, shown as black star, and the starshaped obstacles with center point shown as green diamond. The starshaped obstacles consist of the original obstacles, in grey, and (possibly) the hull extended region, in green. The momentary velocity of the moving obstacles is illustrated with a black arrow.}
    \label{fig:soads_moving}
\end{figure}


\section{Conclusion}
\label{sec:conclusions}
In this paper we have considered the problem of reshaping a set of intersecting obstacles into a set of disjoint strictly starshaped obstacles such that motion planning methods operating in star worlds can be applied while maintaining convergence properties. To this end, we have elaborated on the concept of starshaped hull and its properties. The admissible kernel for a set has been introduced to enable excluding points of interest from the starshaped hull and the starshaped hull with specified kernel has been introduced to ensure that the resulting set is strictly starshaped. Using the concepts of admissible kernel and starshaped hull with specified kernel, an algorithm has been designed to modifying a workspace of intersecting obstacles to a workspace of disjoint starshaped obstacles. 
The utilization of the proposed method has been illustrated with examples of a robot operating in a 2D workspace using a motion planner in combination with the developed algorithm.

In this work we laid the foundations for generating star worlds, a process that can support the design of real-time control algorithms for a variety of robots operating in dynamical environments. The robot workspace is assumed to be the full Euclidean space. In many scenarios this is not the case, e.g. for a robot operating in a closed room, and an extension of the method to also consider workspace boundaries would be beneficial. Moreover, the algorithm is not complete in the sense that it may in some scenarios provide an intersecting star world even though a disjoint star world exists. Further investigation and modification of the method is needed to address complex scenarios where the robot and goal are surrounded by obstacles.
\appendix

\subsection{Proof of Property \ref{p:st_hull}}
\label{a:proof_p_st_hull}
\begin{enumerate}[label=\alph*.]
    \item ($\Rightarrow$): $SH_x(A)$ is a starshaped set w.r.t. $x$ by definition, i.e. $SH_x(A)$ is starshaped with $x\in \textnormal{ker}\left(SH_x(A)\right)$. Since $A=SH_x(A)$ also $A$ is starshaped with $x\in \textnormal{ker}(A)$.\\
    ($\Leftarrow$): Since $x\in \textnormal{ker}(A)$, $A$ is starshaped w.r.t. $x$ from the definition of the starshaped kernel. Obviously, $A$ is the minimum starshaped set which contains $A$ and we have $SH_x(A) = A$.
    \item 
    \begin{align*}
    SH_x(A)\ &= \bigcup_{y\in A} l[x,y] \subset \bigcup_{y\in CH(A)} l[x,y]\\
    & \subset \bigcup_{x\in CH(A)}\bigcup_{y\in CH(A)} l[x,y] = CH(A)
    \end{align*}
    \item 
    \begin{align*}
    SH_x\left(\bigcup_{B\in\mathcal{B}} B\right) &= \bigcup_{y\in \bigcup_{B\in\mathcal{B}} B} l[x,y] = \bigcup_{B\in\mathcal{B}}\bigcup_{y\in B} l[x,y] \\
    &= \bigcup_{B\in\mathcal{B}}SH_x(B)
    \end{align*}
\end{enumerate}

\subsection{Proof of Proposition \ref{prop:intersecting_kernel}}
\label{a:proof_prop_intersecting_kernel}
Let $x\in K_{\cap}$. Since $x \in \textnormal{ker}(A),\ \forall A\in \mathcal{A}$ we have that $A = SH_x(A),\ \forall A\in \mathcal{A}$ according to Property \ref{p:st_hull}\ref{p:hull_star_kernel} Thus we have $A_{\cup} = \bigcup_{A\in\mathcal{A}} SH_x(A) = SH_x\left(\bigcup_{A\in\mathcal{A}} A\right) = SH_x(A_{\cup})$ from Property \ref{p:st_hull}\ref{p:hull_union} which is starshaped by definition. From Property \ref{p:st_hull}\ref{p:hull_star_kernel} we have $x \in \textnormal{ker}\left(A_{\cup}\right)$.
This holds for any selection $x\in K_{\cap}$ and we can conclude $K_{\cap} \subset \textnormal{ker}\left(A_{\cup}\right)$.

\subsection{Proof of Proposition \ref{prop:strictly_star}}
\label{a:proof_l_boundary_mapping}
Let $x_c \in \textnormal{int ker} (A)$ and assume $A$ is not strictly starshaped. Since $A$ is not strictly starshaped there exists more than one boundary point in some direction, i.e. there exist two boundary points $x_{b1}, x_{b2} \in \partial A$ such that $\frac{x_{b1}-x_c}{\lVert x_{b1}-x_c \rVert}=\frac{x_{b2}-x_c}{\lVert x_{b2}-x_c \rVert}$. Let $z_1$ be the point, $x_{b1}$ or $x_{b2}$, which is furthest away from $x_c$ and $z_2$ be the closest one, such that $z_2 \in l^o[x_c,z_1]$ where $l^o[x_c,z_1]$ is the open line segment from $x_c$ to $z_1$.
As $\textnormal{int ker} (A)$ is nonempty, there exists an $n$-ball, $B_{\epsilon}(x_c)$, with radius $\epsilon$ around $x_c$, such that $B_{\epsilon}(x_c) \subset \textnormal{ker}(A)$. From the starshapedness of $A$ we have $l[x,z_1] \subset A,\ \forall x \in \textnormal{ker}(A)$, and the cone $\mathcal{C}_{z_1}=\bigcup_{x \in B_{\epsilon}(x_c)}$ $l[x, z_1]$ is hence contained by $A$. The cone $\mathcal{C}_{z_1}$ is centered around the axis aligned with $l[x_c,z_1]$, and $l^o[x_c,z_1] \subset \textnormal{int}\mathcal{C}_{z_1} \subset \textnormal{int} A$. Thus $z_2 \in \textnormal{int} A$. That is, the closest point to $x_c$ of $x_{b1}$ and $x_{b2}$ is an interior of point of $A$. This is a contradiction to $x_{b1}, x_{b2} \in \partial A$ and we can conclude that $A$ is strictly starshaped.

\subsection{Proof of Property \ref{p:adm_kernel_union}}
\label{a:proof_p_adm_ker_union}
We have $x\in \textnormal{ad ker}(A_{\cup},\bar{X}) \Leftrightarrow SH_x(A) \cap \bar{X} = \emptyset, \forall A\in \mathcal{A}$ since
\begin{align*}
    \textnormal{ad ker}(A_{\cup}, \bar{X}) =& \{x\in \mathbb{R}^n : \Bar{X} \cap SH_x(\bigcup_{A\in\mathcal{A}}A) = \emptyset\} \\ 
    \stackrel{\text{Pr. \ref{p:st_hull}\ref{p:hull_union}}}{=}& \{x\in \mathbb{R}^n : \Bar{X} \cap \bigcup_{A\in\mathcal{A}} SH_x(A) = \emptyset\} \\
    =& \{x\in \mathbb{R}^n : \bigcup_{A\in\mathcal{A}} \Bar{X} \cap SH_x(A) = \emptyset\}.
\end{align*}
Moreover, $x\in \bigcap_{A\in\mathcal{A}}\textnormal{ad ker}(A,\bar{X}) \Leftrightarrow SH_x(A) \cap \bar{X} = \emptyset, \forall A\in \mathcal{A}$ since
\begin{align*}
    \bigcap_{A\in\mathcal{A}}\textnormal{ad ker}(A,\bar{X}) =& \bigcap_{A\in\mathcal{A}}\{x\in \mathbb{R}^n : \Bar{X} \cap SH_x(A) = \emptyset\}.
\end{align*}
Thus, $x\in \textnormal{ad ker}(A_{\cup},\bar{X}) \Leftrightarrow x\in \bigcap_{A\in\mathcal{A}}\textnormal{ad ker}(A,\bar{X})$, or equivalently $\textnormal{ad ker}(A_{\cup},\bar{X}) = \bigcap_{A\in\mathcal{A}}\textnormal{ad ker}(A,\bar{X})$.

\subsection{Proof of Property \ref{p:adm_ker_intersection}}
\label{a:proof_p_adm_ker_intersection}
We have $x\not\in \textnormal{ad ker}(A,\bar{X}) \Leftrightarrow \exists \bar{x}\in \bar{X}, SH_x(A) \cap \{\bar{x}\} \neq \emptyset$ since
\begin{align*}
    \textnormal{ad ker}(A, \bar{X}) =& \{x\in \mathbb{R}^n : SH_x(A) \cap \Bar{X} = \emptyset\} \\ 
    =& \{x\in \mathbb{R}^n : \bigcup_{\bar{x}\in \Bar{X}} SH_x(A) \cap \{\bar{x}\} = \emptyset\}.
\end{align*}
Moreover, $x\not\in \bigcap_{\bar{x}\in \bar{X}}\textnormal{ad ker}(A,\{\bar{x}\}) \Leftrightarrow  \exists \bar{x}\in \bar{X}, SH_x(A) \cap \{\bar{x}\} \neq \emptyset$ since
\begin{align*}
   \textnormal{ad ker}(A, \bar{X}) =& \bigcap_{\bar{x}\in \Bar{X}}\{x\in \mathbb{R}^n : SH_x(A) \cap \{\bar{x}\} = \emptyset\}.
\end{align*}
Thus, $x\not\in \textnormal{ad ker}(A,\bar{X}) \Leftrightarrow x\not\in \bigcap_{\bar{x}\in \bar{X}}\textnormal{ad ker}(A,\{\bar{x}\})$, or equivalently $\textnormal{ad ker}(A,\bar{X}) = \bigcap_{\bar{x}\in \bar{X}}\textnormal{ad ker}(A,\{\bar{x}\})$.

\subsection{Proof of Property \ref{p:ad_ker_nonempty}}
\label{a:proof_p_ad_ker_nonempty}
First, let $\bar{x}\in A$. Since $A\subset SH_x(A)$ for any $x\in \mathbb{R}^n$ by definition, we have $SH_x(A)\cap\{\bar{x}\} = \{\bar{x}\}, \forall x\in \mathbb{R}^n$, and thus $\textnormal{ad ker}(A,\{\bar{x}\}) = \emptyset$.\\
Now, let $\bar{x}$ be a bounded exterior point of $A$. Since $A$ fully surrounds $\bar{x}$, for any point $x\in \mathbb{R}^n$, there exists another point $x_a\in A$ such that $\bar{x} \in l[x,x_a]$. Thus, $SH_x(A)=\bigcup_{y\in A}l[x,y]$ contains $\bar{x}$ for any $x\in \mathbb{R}^n$ and we have $\textnormal{ad ker}(A,\{\bar{x}\}) = \emptyset$.\\
Finally, let $\bar{x}$ be a free exterior point of $A$ where the ray $r(\bar{x},v)$ does not intersect $A$ and let $x = \bar{x} - \epsilon v$, with $\epsilon$ chosen small enough such that $l[x,\bar{x}] \cap A = \emptyset$. Obviously $x$ is also a free exterior point since for $r(x,v) = l[x,\bar{x}] \cup r(\bar{x},v)$ we have $r(x,v)\cap A = \emptyset$. Assume that $\textnormal{ad ker}(A, \{\bar{x}\})=\emptyset$. Then $\exists y\in A \textnormal{ s.t. } \bar{x} \in l[x,y]$ since $SH_x(A) = \bigcup_{y\in A}l[x,y]$. For this $y$, we have $l[x,y]\subset r(x,v)$ and, hence, $l[x,y]\cap A = \emptyset$. Thus, $y \not\in A$ which is a contradiction and we have $\textnormal{ad ker}(A, \{\bar{x}\}) \neq \emptyset$.

\subsection{Proof of Property \ref{p:st_hull_ker}}
\label{a:proof_p_star_hull_ker}
\begin{enumerate}[label=\alph*.] 
    \item For convenience, let $S_1 = SH_{\textnormal{ker}K}(A)$, $S_2 = SH_{\textnormal{ker}CH(K)}(A)$ and $S_{\cup} = \bigcup_{k \in CH(K)}SH_k(A)$.\\
    The kernel of any starshaped set is convex and the minimum convex set containing $K$ is $CH(K)$. Hence, $CH(K) \subset \textnormal{ker}(S_1)$ and, obviously, $CH(K)\subset \textnormal{ker}(S_2)$. That is, $l[k,y]\subset S_i, \forall k\in CH(K), \forall y \in S_i$, for $i=1,2$. Since $A\subset S_i, i=1,2$ it follows that $l[k,y]\subset S_i, \forall k\in CH(K), \forall y \in A$ for $i=1,2$. We then have $\bigcup_{k\in CH(K)}\bigcup_{y\in A}l[k,y] \subset S_i, i=1,2$, or equivalently $S_{\cup} \subset S_i, i=1,2$.
    By noting that $S_{\cup} = \bigcup_{k \in CH(K)}\bigcup_{y\in A}l[k,y]=\bigcup_{y \in A}SH_y(CH(K))$, we can write it as $S_{\cup} = \bigcup_{S^{\star}\in\mathcal{S}}S^{\star}$, where $\mathcal{S}$ is the collection of starshaped sets $\mathcal{S} = \{SH_y(CH(K)) : y\in A\}$.
    As stated in Sec. \ref{sec:starshaped_hull}, each subset $S^{\star}\in \mathcal{S}$ is convex as a consequence of \eqref{eq:local_hull_convex}. Since $CH(K)\subset S^{\star}, \forall S^{\star}$ and the kernel for a convex set is the set itself we have $CH(K)\subset \textnormal{ker}(S^{\star}), \forall S^{\star} \in \mathcal{S}$ such that $CH(K)\subset K_{\cap}$ where $K_{\cap} = \bigcap_{S^{\star}\in\mathcal{S}}\textnormal{ker}(S^{\star})$. 
    We can now conclude from Proposition \ref{prop:intersecting_kernel} that $S_{\cup}$ is a starshaped set with $CH(K) \subset \textnormal{ker}(S_{\cup})$. Additionally, it is clear that $A\subset S_{\cup}$ since $A\subset SH_k(A), \forall k$. Thus, $S_{\cup}$ is a starshaped set containing $A$ with $CH(K)$ contained by its kernel. Since $S_{\cup}\subset S_2$ it is also the smallest set satisfying these conditions and we have $S_2 = S_{\cup}$. Similarly, $S_{\cup}$ is a starshaped set containing $A$ with $K$ contained by its kernel. Since $S_{\cup}\subset S_1$ it is also the smallest set satisfying these conditions and we have $S_1 = S_{\cup}$.
    \item ($\Rightarrow$): $SH_{\textnormal{ker}K}(A)$ is a starshaped set w.r.t. $\forall k \in K$ by definition, i.e. $SH_{\textnormal{ker}K}(A)$ is starshaped with $k\in \textnormal{ker}\left(SH_{\textnormal{ker}K}(A)\right)$. 
    Thus, if
    $A=SH_{\textnormal{ker}K}(A)$, $A$ is also starshaped with $K\subset \textnormal{ker}(A)$.\\
    ($\Leftarrow$): Since $K \subset \textnormal{ker}(A)$, $A$ is starshaped w.r.t. $\forall k\in K$. Obviously, $A$ is the minimum starshaped set which contains $A$ and we have $SH_{\textnormal{ker}K}(A) = A$.
    \item A convex set is a starshaped set with kernel given as the set itself. Thus, $CH(A)$ is a starshaped set with $A\subset CH(A)$ and for any $K \subset CH(A)$ we have $K \subset \textnormal{ker}\left(CH(A)\right)$. Thus, under assumption that $K\subset CH(A)$, $CH(A)$ is a starshaped set containing $A$ with $K$ contained by its kernel. If it is the smallest such set, we have $SH_{\textnormal{ker}K}(A) = CH(A)$, otherwise $SH_{\textnormal{ker}K}(A) \subset CH(A)$. Thus, $SH_{\textnormal{ker}K}(A)\subset CH(A), \forall K\subset CH(A)$.
    \item We have
    \begin{align*}
        &SH_{\textnormal{ker}K}\left(\bigcup_{B\in\mathcal{B}}B\right) \stackrel{\text{Pr. \ref{p:st_hull_ker}\ref{p:st_hull_ker_ch_ker}}}{=} \bigcup_{k\in CH(K)}SH_k\left(\bigcup_{B\in\mathcal{B}}B\right)\\
        \stackrel{\text{Pr. \ref{p:st_hull}\ref{p:hull_union}}}{=} &\bigcup_{k\in CH(K)}\bigcup_{B\in\mathcal{B}}SH_k(B) = \bigcup_{B\in\mathcal{B}}\bigcup_{k\in CH(K)}SH_k(B)\\
        \stackrel{\text{Pr. \ref{p:st_hull_ker}\ref{p:st_hull_ker_ch_ker}}}{=} &\bigcup_{B\in\mathcal{B}}SH_{\textnormal{ker}K}(B).
    \end{align*}
\end{enumerate}

\subsection{Proof of Property \ref{p:st_hull_ker_strict}}
\label{a:proof_p_st_hull_ker_strict}
The convex hull of $n+1$ affinely independent points is of dimension $n$ and thus $CH(K)$ is guaranteed to have a nonempty interior. Knowing from Property \ref{p:st_hull_ker}\ref{p:st_hull_ker_ch_ker} that $CH(K)\subset \textnormal{ker}(SH_{\textnormal{ker}K}(A))$, we therefore have that $\textnormal{int ker}\left(SH_{\textnormal{ker}K}(A)\right) \neq \emptyset$. Using Proposition \ref{prop:strictly_star}, $SH_{\textnormal{ker}K}(A)$ can be concluded to be a strictly starshaped set.

\subsection{Proof of Property \ref{p:st_hull_ker_adm_ker}}
\label{proof:p_st_hull_ker_adm_ker}
From the definition of the admissible kernel we have $CH(K)\subset \textnormal{ad ker}(A,\bar{X}) \Rightarrow SH_k(A) \cap \bar{X} = \emptyset, \forall k \in CH(K)$, and it follows that $\bigcup_{k\in CH(K)}SH_k(A) \cap \bar{X} = \emptyset$. Using Property \ref{p:st_hull_ker}\ref{p:st_hull_ker_ch_ker} this can be written as $SH_{\textnormal{ker}K}(A) \cap \bar{X} = \emptyset$.

\added{
\subsection{Computational complexity}
\label{a:compute_time}
First, note that the time to compute two tangent points of an obstacle through an exterior point is done in $O(1)$ time for an ellipse and $O(N)$ time for a polygon with $N$ vertices \cite{preparata_shamos_12}. Further note that we have $\sum_{cl\in Cl}|cl| = |\mathcal{O}|$ at any iteration.


\subsubsection{Admissible kernel}
As stated in Sec. \ref{sec:forming_star_obstacles}, the admissible kernel for each obstacle is computed once before starting the iterations in Algorithm \ref{alg:starify_obstacles}.
From \eqref{eq:adm_ker_cone}, we see that to find the admissible kernel for a 2D set given any free exterior point it suffices to compute the two tangent points of the set through the excluding point. The initial computation time for finding $\textnormal{ad ker}(\mathcal{O}_i,x_g)$ for all $\mathcal{O}_i\in\mathcal{O}$ is hence upper bounded by $O(|\mathcal{O}|N)$. 

Finding $\textnormal{ad ker}(cl,\bar{X})$ for one cluster, $cl$, can be divided into computing $\textnormal{ad ker}(cl,x)$ and $\textnormal{ad ker}(cl,x_g)$ followed by an intersection of the two sets according to Property \ref{p:adm_ker_intersection}.
$\textnormal{ad ker}(cl,x)$ is found by the intersection of all cones $\textnormal{ad ker}(\mathcal{O}_i,x), \mathcal{O}_i\in cl$ according to Property \ref{p:adm_kernel_union}. Since the cones share apex, the intersection of two cones can be found in $O(1)$ time. Hence, both $\textnormal{ad ker}(cl,x)$ and $\textnormal{ad ker}(cl,x_g)$ are found in $O(|cl|)$ time. 
The cones $\textnormal{ad ker}(cl,x)$ and $\textnormal{ad ker}(cl,x_g)$ can be approximated as polygons with a maximum of 7 vertices by applying a bounding box of arbitrarily large height and width.
The time to find the intersection of $\textnormal{ad ker}(cl,x)$ and $\textnormal{ad ker}(cl,x_g)$ is thus problem independent. Hence, the total time to find $\textnormal{ad ker}(cl,\bar{X})$ is $O(|cl|)$ and the total computation time for all clusters is $\sum_{cl\in Cl} O(|cl|) = O(|\mathcal{O}|)$ in one iteration.

\subsubsection{Starshaped hull}
In Algorithm \ref{alg:starify_obstacles}, a starshaped obstacle corresponding to a cluster of $\lvert cl\rvert$ original obstacles can be represented as a set of $2\lvert cl\rvert$ overlapping convex shapes. Specifically, according to \eqref{eq:star_hull_specified_convex} and Property \ref{p:st_hull_ker}\ref{p:st_hull_ker_union}, each original obstacle, in addition to the obstacle shape, contributes with the convex hull of the 3 kernel points and their respective tangent points.
Finding $SH_{\textnormal{ker}K}(\mathcal{O}_i)$ for an obstacle, $\mathcal{O}_i$, hence amounts to computing the 6 tangent points, which is upper bounded by $O(3N)$ time, followed by the convex hull of 9 points, which is independent of the problem variables (i.e. has complexity $O(1)$). The complexity for finding $SH_{\textnormal{ker}K}(\mathcal{O}_i)$ is thus upper bounded by $O(N)$ and for one cluster of obstacles $O(|cl|N)$. The upper bound in one iteration for all clusters is then $\sum_{cl\in Cl} O(|cl|N) = O(|\mathcal{O}|N)$.  


\subsubsection{Clustering}
Identifying intersections of two obstacles are made using polygons (polygon approximations for ellipses) which is upper bounded to be done in $O(N)$ time \cite{gosh_84}. As the number of subshapes is $2|\mathcal{O}|$, comparing all subshapes to identify intersections is done in $O(|\mathcal{O}|^2N)$ time for each iteration.

}

\newpage
\bibliographystyle{ieeetr}
\bibliography{references}

\end{document}